\tikzset{shiftarr/.style={
        rounded corners,%
        to path={--([#1]\tikztostart.center)
                     -- ([#1]\tikztotarget.center) \tikztonodes
                     -- (\tikztotarget)},
}}
\def\o{\cdot}
\declaretheorem[name=Definition,style=definition,numberwithin=section, sibling=definition]{defn}
\declaretheorem[name=Example,style=definition,sibling=defn]{expl}
\declaretheorem[name=Remark,style=definition,sibling=defn]{rem}
\declaretheorem[name=Construction,style=definition,sibling=defn]{construction}
\newcounter{mycounter}
\newcommand{\iref}[2]{\autoref{#1}.\ref{#1:#2}}
\newcommand{\msr}{MSR\xspace}
\newcommand{\sbnd}[1][s]{\ensuremath{#1}-bounded\xspace}
\newcommand{\kbnd}[1][k]{\( #1 \)-bounded\xspace}
\newcommand{\Top}{\mathbf{Top}}
\newcommand{\pow}{\ensuremath{\mathcal{P}}\xspace}
\newcommand{\powfs}{\ensuremath{\mathcal{P}_{\text{fs}}}\xspace}
\newcommand{\N}{\ensuremath{\mathbb{N}}\xspace}
\newcommand{\R}{\ensuremath{\mathbb{R}}\xspace}
\newcommand{\A}{\ensuremath{\mathbb{A}}\xspace}
\newcommand{\V}{\ensuremath{\mathcal{V}}\xspace}
\newcommand{\T}{\ensuremath{\mathcal{T}}\xspace}
\newcommand{\pfm}[2][\Sigma]{\ensuremath{\myhat{0.95}{0pt}{#1^{*}}_{#2}}\xspace}
\newcommand{\spfm}[1][\Sigma]{\ensuremath{\myhat{0.95}{0pt}{{#1}^*_{s}}}\xspace}
\newcommand{\epi}{\twoheadrightarrow}
\newcommand{\mono}{\rightarrowtail}
\newcommand{\id}{\mathrm{id}}
\newcommand{\darr}{\downarrow}
\newcommand{\depi}{\mathbin{\rotatebox[origin=c]{-90}{$\epi$}}}
\newcommand{\dhook}{\mathbin{\rotatebox[origin=c]{-90}{$\hookrightarrow$}}}
\DeclareMathOperator{\cod}{cod}
\DeclareMathOperator{\pro}{Pro}
\DeclareMathOperator{\ind}{Ind}
\newcommand{\cat}[1]{\ensuremath{\mathbf{#1}}\xspace}
\newcommand{\xcat}[2]{\ensuremath{\mathbf{#1}_{#2}}\xspace}
\newcommand{\set}{\cat{Set}}
\newcommand{\setf}{\xcat{Set}{\mathrm f}}
\newcommand{\nom}{\cat{Nom}}
\newcommand{\nomk}{\xcat{Nom}{k}}
\newcommand{\nomof}{\xcat{Nom}{\mathrm{of}}}
\newcommand{\nomofk}{\xcat{Nom}{\mathrm{of}, k}}
\newcommand{\pronomof}{\ensuremath{\pro(\nomof})\xspace}
\newcommand{\To}{\Longrightarrow}
\newcommand{\ncba}{\cat{nCBA}}
\newcommand{\ncaba}{\cat{nCABA}}
\newcommand{\ncofba}{\cat{nC_{\mathrm{of}}BA}}
\newcommand{\ncakba}[1][k]{\cat{nCA_{\mathit{#1}}BA}}
\newcommand{\ncofalkba}[1][k]{\cat{nC_{\mathrm{of}}A_{\mathrm{l}\mathit{#1}}BA}}
\newcommand{\mon}{\cat{Mon}}
\newcommand{\monf}{\xcat{Mon}{\mathrm f}\xspace}
\newcommand{\nmon}{\cat{nMon}}
\newcommand{\nmonof}{\xcat{nMon}{\mathrm{of}}}
\newcommand{\nmonofk}{\xcat{nMon}{\mathrm{of}, k}}
\newcommand{\npro}[0]{\cat{nStone}}
\newcommand{\stone}{\cat{Stone}}
\newcommand{\topo}[0]{\cat{Top}}
\newcommand{\ntop}[0]{\cat{nTop}}
\newcommand{\epito}{\twoheadrightarrow}
\newcommand{\hookto}{\hookrightarrow}
\newcommand{\slice}[2]{\ensuremath{{#1}\mathord{\darr}{#2}}\xspace}
\newcommand{\sslice}[3][s]{\ensuremath{{#2}\mathord{\darr_{#1}} {#3}}\xspace}
\newcommand{\epislice}[2]{\ensuremath{{#1}\mathord{\depi}{#2}}\xspace}
\newcommand{\sepislice}[3][s]{\ensuremath{{#2}\mathord{\depi_{#1}} {#3}}\xspace}
\newcommand{\can}{\ensuremath{\fm\mathord{\depi_{s}}{\nmonof}}\xspace}
\DeclareMathOperator{\pr}{pr}
\DeclareMathOperator*{\colim}{colim}
\newcommand{\fs}{finitely supported\xspace}
\newcommand{\ufs}{uniformly finitely supported\xspace}
\newcommand{\pof}{pro-orbit-finite\xspace}
\DeclareMathOperator{\supp}{supp}
\DeclareMathOperator{\perm}{Perm}
\DeclareMathOperator{\orb}{orb}
\DeclareMathOperator{\hull}{hull} % I switch to this terminology since this was introduced by Boj. and is also used by Pitts
\DeclareMathOperator{\at}{At}
\DeclareMathOperator{\fp}{\mathcal{F}_{np}}
\DeclareMathOperator{\clo}{Clo}
\DeclareMathOperator{\rec}{Rec}
\newcommand{\forget}[2][-]{\ensuremath{|{#1}|_{#2}}\xspace}
\newcommand{\tr}[2]{({#1}\,\,{#2})}
\newcommand{\fm}[1][\Sigma]{\ensuremath{{#1}^{*}}\xspace}
\newcommand{\sa}[1][k]{\leq_{\mathrm{of}, {#1}}}
\newcommand{\op}{{\mathrm{op}}}
\newcommand{\seq}{\subseteq}
\newcommand{\new}{\reflectbox{$\mathsf{N}$}}
\newcommand{\mybar}[3]{%
  \mathrlap{\hspace{#2}\overline{\scalebox{#1}[1]{\phantom{\ensuremath{#3}}}}}\ensuremath{#3}
}
\newcommand{\myhat}[3]{%
  \mathrlap{\hspace{#2}\widehat{\scalebox{#1}[1]{\phantom{\ensuremath{#3}}}}}\ensuremath{#3}
}
\newcommand{\hatSigmas}{\myhat{1.0}{0pt}{\Sigma^*}}
\newcommand{\barF}{\mybar{0.6}{2.5pt}{F}}
\title{Nominal Topology for Data Languages}
\titlerunning{Nominal Topology for Data Languages}
\author{Fabian Birkmann}{Friedrich-Alexander-Universität Erlangen-Nürnberg, Germany}{henning.urbat@fau.de}{https://orcid.org/0000-0001-5890-9485}{}
\authorrunning{F.~Birkmann}%TODO mandatory. First: Use abbreviated first/middle names. Second (only in severe cases): Use first author plus 'et al.'
\author{Stefan Milius}{Friedrich-Alexander-Universität Erlangen-Nürnberg, Germany}{henning.urbat@fau.de}{https://orcid.org/0000-0002-2021-1644}{}
\authorrunning{S.~Milius}%TODO mandatory. First: Use abbreviated first/middle names. Second (only in severe cases): Use first author plus 'et al.'
\author{Henning Urbat}{Friedrich-Alexander-Universität Erlangen-Nürnberg, Germany}{henning.urbat@fau.de}{https://orcid.org/0000-0002-3265-7168}{}
\authorrunning{H.~Urbat}%TODO mandatory. First: Use abbreviated first/middle names. Second (only in severe cases): Use first author plus 'et al.'
\authorrunning{F.\ Birkmann, S.\ Milius and H.\ Urbat}
\keywords{Nominal sets, Stone duality, Profinite space,  Data languages}%TODO mandatory; please add comma-separated list of keywords
\numberwithin{equation}{section}
\begin{document}

\maketitle

\begin{abstract}
  We propose a novel topological perspective on data languages recognizable by orbit-finite nominal monoids.
  For this purpose, we introduce pro-orbit-finite nominal topological spaces.
  Assuming globally bounded support sizes, they coincide with nominal Stone spaces and are shown to be dually equivalent to a subcategory of nominal boolean algebras. Recognizable data languages are characterized as topologically clopen sets of pro-orbit-finite words. In addition, we explore the expressive power of pro-orbit-finite equations by establishing a nominal version of Reiterman's pseudovariety theorem.
\end{abstract}

\section[introduction]{Introduction}
\label{sec:introduction}

While automata theory is largely concerned with formal languages over finite alphabets, the extension to \emph{infinite alphabets} has been identified as a natural approach to modelling structures involving \emph{data}, such as nonces~\cite{KurtzEA07}, channel names~\cite{Hennessy02}, object identities~\cite{GrigoreEA13}, process
identifiers~\cite{BolligEA14}, URLs~\cite{BieleckiEA02}, or
values in XML documents~\cite{NevenEA04}. For example, if $\A$ is a (countably infinite) set of data values, typical languages to consider might be
\begin{align*}
 L_{0} &= \{\,vaaw \mid a\in \A,\, v,w\in \A^*\,\} &&  \text{(``some data value occurs twice in a row'')}, \text{ or} \\
 L_{1} &= \{\,avaw \mid a\in \A,\, v,w\in \A^*\,\} && \text{(``the first data value occurs again'')}.
\end{align*}
Automata for data languages enrich finite automata with register mechanisms that allow to store data and test data values for equality (or more complex relations, e.g.\ order)~\cite{KaminskiFrancez94, NevenEA04}. In a modern perspective first advocated by Boja\'nczyk, Klin, and Lasota~\cite{BojanczykEA14}, a convenient abstract framework for studying data languages is provided by the theory of \emph{nominal sets}~\cite{pit-13}.

Despite extensive research in the past three decades, no universally
acknowledged notion of \emph{regular} data language has emerged so
far. One reason is that automata models with data notoriously lack
robustness, in that any alteration of their modus operandi (e.g.\
deterministic vs.\ nondeterministic, one-way vs.\ two-way) usually
affects their expressive power. Moreover, machine-independent
descriptions of classes of data languages in terms of algebra or model
theory are hard to come by. However, there is one remarkable
class of data languages that closely mirrors classical regular
languages: data languages recognizable by orbit-finite nominal
monoids~\cite{boj-13}. Originally introduced from a purely algebraic angle, recognizable data languages have subsequently been
characterized in terms of \emph{rigidly guarded $\text{MSO}^{\sim}$},
a fragment of monadic second-order logic with equality
tests~\cite{col-15}, \emph{single-use register automata}~\cite{boj-20}
(both one-way and two-way), and \emph{orbit-finite regular list
  functions}~\cite{boj-20}. In addition, several landmark results from
the algebraic theory of regular languages, namely the
McNaughton-Papert-Schützenberger theorem~\cite{sch-65,mp71}, the
Krohn-Rhodes theorem~\cite{kr65}, and Eilenberg's variety
theorem~\cite{eil-74} have been extended to recognizable data
languages~\cite{boj-13,col-15,mil-urb-19-06,boj-20}.

In the present paper, we investigate recognizable data languages
through the lens of \emph{topology}, thereby providing a further
bridge to classical regular languages. The topological approach to the
latter is closely tied to the algebraic one, which regards
regular languages as the languages recognizable by finite monoids. Its
starting point is the construction of the topological space
$\hatSigmas$ of \emph{profinite words}. Informally, this space
casts all information represented by regular languages over~$\Sigma$
and their recognizing monoids into a single mathematical
object. Regular languages can then be characterized by purely
topological means: they may be interpreted as precisely the clopen
subsets of $\hatSigmas$, in such way that algebraic recognition
by finite monoids becomes a continuous process. Properties of
regular languages are often most conveniently classified in terms of the topological concept of
\emph{profinite equations}, that is, equations between profinite
words; see~\cite{alm-05,pin-09,ac21} for a survey of profinite methods
in automata theory. Moreover, since~$\hatSigmas$ forms a Stone
space, the power of \emph{Stone duality} -- the dual equivalence
between Stone spaces and boolean algebras -- becomes available. This
allows for the use of duality-theoretic methods for the study of
regular languages and their connection to logic and model theory, which in part even extend to \emph{non-regular}
languages~\cite{pip-97,ggp08,ggp10,gpr16,gpr17}.

On a conceptual level, the topological view of regular languages rests on a single category-theoretic fact: Stone spaces admit a universal property. In fact, they arise from the category of finite sets as the free completion under codirected limits, a.k.a.\ its \emph{Pro-completion}:
\begin{equation}\label{eq:stone-pro} \stone \simeq \pro(\setf).\end{equation}
In the world of data languages, the role of finite sets is taken over by orbit-finite nominal sets. This strongly suggests to base a topological approach on their free completion $\pro(\nomof)$. However, this turns out to be infeasible: the category $\pro(\nomof)$ is not concrete over nominal sets (\autoref{prop:not-conc}), hence it cannot be described via any kind of nominal topological spaces. This is ultimately unsurprising given that the description \eqref{eq:stone-pro} of Stone spaces as a free completion depends on the axiom of choice, which is well-known to fail in the topos of nominal sets. As a remedy, we impose \emph{global bounds} on the support sizes of nominal sets, that is, we consider the categories $\nomk$ and $\nomofk$ of (orbit-finite) nominal sets where every element has a support of size $k$, for some fixed natural number $k$. This restriction is natural from an automata-theoretic perspective, as it corresponds to imposing a bound $k$ on the number of registers of automata, and it fixes exactly the issue making unrestricted nominal sets non-amenable (\autoref{lem:cod-lim-set}). Let us emphasize, however, that the category $\nomk$ is not proposed as a new foundation for names and variable binding; for instance, it generally fails to be a topos.

The first main contribution of our paper is a generalization of \eqref{eq:stone-pro} to \kbnd nominal sets. For this purpose we introduce \emph{nominal Stone spaces}, a suitable nominalization of the classical concept, and prove that $k$-bounded nominal Stone spaces form the Pro-completion of the category of $k$-bounded orbit-finite sets. We also derive a nominal version of Stone duality, which relates $k$-bounded nominal Stone spaces to \emph{locally \( k \)-atomic orbit-finitely complete nominal boolean algebras}. Hence we establish the following equivalences of categories:
\[\ncofalkba\simeq^\op \npro_k \simeq \pro(\nomofk).\]
The above equivalences are somewhat remarkable since even the category of $k$-bounded nominal sets does not feature choice. They hold because the presence of bounds allows us to reduce topological properties of nominal Stone spaces, most notably compactness, to their classical counterparts.

Building on the above topological foundations, which we regard to be of independent interest, we subsequently develop first
steps of a topological theory of data languages. Specifically, we
introduce nominal Stone spaces of (bounded) \emph{pro-orbit-finite
  words} and prove their clopen subsets to correspond to data
languages recognizable by bounded equivariant monoid
morphisms, generalizing the topological characterization of classical
regular languages (\autoref{thm:rec-rep-equiv}). Moreover, we investigate
the expressivity of \emph{pro-orbit-finite equations} and show that
they model precisely classes of orbit-finite monoids closed under
finite products, submonoids, and \emph{multiplicatively
  support-reflecting quotients}
(\autoref{thm:exp-nominal-reiterman}). This provides a nominal version
of Reiterman's celebrated pseudovariety theorem~\cite{rei-82} for
finite monoids.

\vspace*{-.75\baselineskip}
\subparagraph*{Related work.}
The perspective taken in our paper draws much of its inspiration from the recent categorical approach to algebraic recognition based on monads~\cite{boj-15, urb-ada-che-mil-17-proc, sal-17}. The importance of Pro-completions in algebraic language theory has been isolated in the work of Chen et al.~\cite{che-ada-mil-urb-16} and Urbat et al.~\cite{urb-ada-che-mil-17-proc}. In the latter work the authors introduce \emph{profinite monads} and present a general version of Eilenberg's variety theorem parametric in a given Stone-type duality. The theory developed there applies to algebraic base categories, but not to the category of nominal sets. 

Our version of nominal Stone duality builds on the orbit-finite restriction of the duality between nominal sets and complete atomic nominal boolean algebras due to Petri\c{s}an~\cite{gab-lit-pet-11}. It is fundamentally different from the nominal Stone duality proposed by Gabbay, Litak, and Petri\c{s}an~\cite{gab-08}, which relates \emph{nominal Stone spaces with $\new$} to \emph{nominal boolean algebras with $\new$}. The latter duality is not amenable for the theory of data languages; see \autoref{rem:nom-stone-prof}.

Reiterman's pseudovariety theorem has recently been generalized to the
level of finite algebras for a
monad~\cite{che-ada-mil-urb-16,ada-che-mil-urb-21} and, in a more
abstract disguise, finite objects in a category~\cite{mil-urb-19}. For
nominal sets, varieties of algebras over binding signatures have been
studied by Gabbay~\cite{gab-08} and by Kurz and
Petri\c{s}an~\cite{kur-pet-10}, resulting in nominal Birkhoff-type
theorems~\cite{bir-35}. Urbat and Milius~\cite{mil-urb-19-06}
characterize classes of orbit-finite monoids called \emph{weak}
pseudovarieties by sequences of nominal word equations. This gives a
nominal generalization of the classical Eilenberg-Schützenberger
theorem~\cite{es76}, which in fact is a special case of the general
HSP theorem in~\cite{mil-urb-19}. Nominal pro-orbit-finite equations as
introduced in the present paper are strictly more expressive than
sequences of nominal word equations (\autoref{ex:compare}), hence our
nominal Reiterman theorem is not equivalent to the nominal
Eilenberg-Schützenberger theorem. Moreover, we note that the nominal
Reiterman theorem does not appear to be an instance of any of the
abstract categorical frameworks mentioned above.

\section[Preliminaries]{Preliminaries}
\label{sec:preliminaries}

We assume that readers are familiar with basic notions from category theory, e.g.~functors, natural transformations, and (co)limits, and from point-set topology, e.g.~metric and topological spaces, continuous maps, and compactness. In the following we recall some facts about Pro-completions, the key categorical concept underlying our topological approach to data languages. Moreover, we give a brief introduction to the theory of nominal sets~\cite{pit-13}.

\medskip\noindent\textbf{Pro-completions.}  A small category $I$ is
\emph{cofiltered} if (i)~$I$ is non-empty, (ii)~for every pair of
objects $i,j\in I$ there exists a span $i\leftarrow k\to j$, and (iii)
for every pair of parallel arrows $f,g\colon j\to k$, there exists a
morphism $h\colon i\to j$ such that $f\cdot h= g\cdot h$.  Cofiltered
preorders are called \emph{codirected}; thus a preorder $I$ is codirected
if $I\neq\emptyset$ and every pair \( i, j \in I \) has a lower bound
\( k \le i, j \). For instance, every meet-semilattice with bottom is
codirected.  A diagram \( D\colon I \rightarrow \cat C \) in a
category $\cat C$ is \emph{cofiltered} if its index category $I$ is
{cofiltered}. A \emph{cofiltered limit} is a limit of a cofiltered
diagram. \emph{Codirected limits} are defined analogously. The two
concepts are closely related: a category has cofiltered limits iff it
has codirected limits, and a functor preserves cofiltered limits iff
it preserves codirected
limits~\cite[Cor.~1.5]{adamek_rosicky_1994}. The dual concept is that
of a \emph{filtered colimit} or a \emph{directed colimit},
respectively.

\begin{example}\label{ex:cod-lim}
\begin{enumerate}
\item In the category \set of sets and functions, every filtered
  diagram $D\colon I\to \set$ has a colimit cocone
  $c_i\colon D_i\to \colim D$ ($i\in I$) given by
  $\colim D = \big(\coprod_{i\in I} D_i\big) / {\sim}$ and
  $c_i(x) = [x]_\sim$, where the equivalence relation $\sim$ on the
  coproduct (i.e.\ disjoint union) $\coprod_{i\in I} D_i$ relates $x\in D_i$ and $y\in D_j$
  iff there exist morphisms $f\colon i\to k$ and $g\colon j\to k$ in
  $I$ such that $Df(x)=Dg(y)$.

\item Every cofiltered diagram \( D \colon I \rightarrow \set \) has a
  limit whose cone $p_i\colon \lim D\to D_i$ ($i\in I$) is given by the \emph{compatible families} of
  $D$ and projection maps:
  \[
    \lim D = \{(x_{i})_{i \in I} \mid x_i\in D_i \text{ and }
    Df(x_{i}) = x_{j} \text{ for all $f\colon i\to j$ in $I$} \}
    \quad\text{and}\quad
    p_j((x_{i})_{i \in I})=x_{j}.
  \]

\item In the category $\Top$ of topological spaces and continuous maps, the limit cone of a cofiltered diagram \( D \colon I \rightarrow \Top \) is formed by taking the limit  in $\set$ and equipping $\lim D$ with the \emph{initial topology}, viz.~the topology generated by the basic open sets \( p_{i}^{-1}[U_{i}] \) for $i\in I$ and \( U_{i} \subseteq D_{i} \) open.
\end{enumerate}
\end{example}

An object \( C \) of a category $\cat C$ is \emph{finitely copresentable} if the contravariant hom-functor $\cat C(-,C)\colon \cat C^\op\to \set$ preserves directed colimits. In more elementary terms, this means that for every codirected diagram $D\colon I\to \cat C$ with limit cone \( p_{i} \colon L \rightarrow D_{i}\) ($i\in I$), 
\begin{enumerate}
\item every morphism \( f \colon L \rightarrow C \) factorizes as $f=g\circ p_i$ for some $i\in I$ and $g\colon D_i\to C$, and
\item the factorization is {essentially unique}: given another factorization $f=h\cdot p_i$, there exists \( j \le i \) such that \( g \cdot D_{j,i} = h \cdot D_{j,i} \).
\end{enumerate} 
A \emph{Pro-completion} of a small category \( \cat C \) is a free completion under codirected (equivalently cofiltered) limits. It is given by a category \( \pro(\cat C) \) with codirected limits together with a full embedding \( E \colon \cat C \hookrightarrow \pro(\cat C) \) satisfying the following universal property:
\begin{enumerate}
\item every functor \( F \colon \cat C \rightarrow \cat D \), where $\cat D$ has codirected limits, extends to a functor \( \barF \colon \pro(\cat C) \rightarrow \cat D \) that preserves codirected limits and satisfies $F=\barF\circ E$;
\item 
$\barF$ is {essentially unique}:
For every functor \( G \) that preserves codirected limits and satisfies $F=G\circ E$, there exists a natural isomorphism \( \alpha \colon \barF \cong G \) such that \( \alpha E = \id_{F} \).
\end{enumerate}
\[
  \begin{tikzcd}
    \cat{C}
    \rar[hook]{E}
    \drar[swap]{F}
    &
    \pro(\cat C)
    \dar[dashed]{\mybar{0.6}{2pt}{F}}
    \\
    &
    \cat{D}
  \end{tikzcd}
\qquad
  \begin{tikzcd}
    \cat{C}
    \rar[hook]{E}
    \drar[swap]{F}
    &
    \pro(\cat C)
    \dar[dashed]{G}
    \\
    &
    \cat{D}
  \end{tikzcd}
\]
The universal property determines $\pro(\cat C)$ uniquely up to equivalence of categories.
We note that every object $EC$ ($C\in \cat C$) is finitely copresentable in $\pro(\cat C)$, see e.g.~\cite[Thm~A.4]{ada-che-mil-urb-21}.
The dual of Pro-completions are \emph{Ind-completions}:
free completions under \emph{directed colimits}.

\begin{example}\label{ex:pro-completion}
  The Pro-completion $\pro(\setf)$ of the category of finite sets is
  the full subcategory of $\Top$ given by \emph{profinite spaces}
  (topological spaces that are codirected limits of finite discrete
  spaces). Profinite spaces are also known as \emph{Stone spaces} or
  \emph{boolean spaces} and can be characterized by topological
  properties: they are precisely compact Hausdorff spaces with a basis
  of clopen sets. This equivalent characterization depends on the
  axiom of choice (or rather the ultrafilter theorem, a weak form of
  choice), as does \emph{Stone duality}, the dual equivalence between
  the categories of Stone spaces and boolean algebras. The duality
  maps a Stone space to its boolean algebra of clopen sets, equipped
  with the set-theoretic boolean operations. Its inverse maps a
  boolean algebra the set of ultrafilters (equivalently, prime
  filters) on it, equipped with a suitable profinite topology.
\end{example}

\subparagraph*{Profinite words.} The topological approach to classical
regular languages is based on the space~$\hatSigmas$ of
\emph{profinite words} over the alphabet $\Sigma$. This space is constructed
as the codirected limit of all finite quotient monoids of $\Sigma^*$,
the free monoid of finite words generated by $\Sigma$. Formally, let
$\epislice{\Sigma^*}{\monf}$ be the codirected poset of all surjective
monoid morphisms $e\colon \Sigma^*\twoheadrightarrow M$, where $M$ is
a finite monoid; the order on $\epislice{\Sigma^*}{\monf}$ is defined
by $e\leq e'$ if $e'=e\cdot h$ for some $h$. Then $\hatSigmas$ is the
limit of the diagram
$D\colon \epislice{\Sigma^*}{\monf}\to \pro(\setf)$ sending
$e\colon \Sigma^*\twoheadrightarrow M$ to the underlying set of $M$,
regarded as a finite discrete topological space. The space
$\hatSigmas$ is completely metrizable; in fact, it is the Cauchy
completion of the metric space $(\Sigma^*,d)$ where
$d(v,w)=\sup \{\, 2^{-|M|} \mid \text{$M$ is a finite monoid
  separating $v,w$} \,\}$.  Here a monoid $M$ \emph{separates}
$v,w\in \Sigma^*$ if there exists a morphism $h\colon \Sigma^*\to M$
such that $h(v)\neq h(w)$. Regular languages over $\Sigma$ correspond
to clopen subsets of $\hatSigmas$, or equivalently to continuous maps
$L\colon \hatSigmas\to 2$ into the discrete two-element space.

\subparagraph*{Nominal Sets.} Fix a countable set \A of \emph{names}, and
denote by \( \perm \A \) the group of finite permutations,
i.e.~bijections $\pi\colon \A\to\A$ fixing all but finitely many
names. Given
$S\seq \A$ write
\[
  \perm_{S} \A = \{ \pi \in \perm \A \mid \pi(a) = a \text{ for all
    $a\in S$}\}
\]
for the the subgroup of permutations fixing $S$. A
\emph{\( \perm \A \)-set} is a set \( X \) with a group
action, that is, an operation 
\( \cdot \colon \perm \A \times X \rightarrow X \) such that 
\( \id \cdot x = x \) and
\(\pi \cdot (\sigma \cdot x) = (\pi \circ \sigma) \cdot x\) for every
\( x \in X \) and \(\pi, \sigma \in \perm \A \).  The \emph{trivial}
group action on \( X \) is given by \( \pi \cdot x = x \) for all
$x\in X$ and $\pi\in \perm\A$.

A subset \( S \subseteq \A \) is a \emph{support} of \( x \in X \) if
every permutation $\pi\in \perm_S\A$ acts trivially on \( x \), that
is, $\pi\cdot x = x$.  The idea is that \( x \) is some syntactic
object (e.g.\ a word, a tree, or a $\lambda$-term) whose free
variables are contained in \( S \).  A \( \perm \A \)-set $X$ is a
\emph{nominal set} if every element $x\in X$ has a finite support.
This implies that every $x\in X$ has a \emph{least} finite support,
denoted by \( \supp x \subseteq \A\).

For a nominal set \( X \) its \emph{nominal powerset}
\( \powfs X \subseteq \pow X\) consists of all subsets of
\( U \subseteq X \) which are \fs under the action
\( \pi \cdot U \vcentcolon = \{ \pi \cdot x \mid x \in U\} \).  For
example, for the nominal set \( \A \) of names with the action
$\pi\cdot a = \pi(a)$, its nominal powerset \( \powfs \A \) consists
of all finite and cofinite subsets of~$\A$.  A subset
\( U \subseteq X \) is \emph{equivariant} if it has empty support.  If
there exists a finite subset \( S \subseteq \A \) supporting every
\( x \in U \) then \( U \) is \emph{\ufs}, and \( S \) also supports
\( U \). Given a finite set \( S \subseteq \A \) of names and a subset
$U\seq X$, we define the \emph{\( S \)-hull} of \( U \) by
\( \hull_{S}U = \{\pi \cdot x \mid x \in U, \pi \in \perm_{S} \A\} \).
This is the smallest \( S \)-supported subset of \( X \) containing
\( U \).

For finite \( S \subseteq \A \) the \emph{\( S \)-orbit} of an element
\( x \in X \) is the set
\( \orb_{S} x = \{ \pi \cdot x \mid \pi \in \perm_{S} \A \} \). The
$\emptyset$-orbit of $x$ is called its \emph{orbit}, denoted $\orb x$.
We write \( \orb_{S} X = \{\orb_S x \mid x\in X\}\) for the set of all
$S$-orbits of $X$, and $\orb X$ for the set of all orbits. The $S$-orbits form a
partition of \( X \).  A finitely supported subset $Y\seq X$ is
\emph{orbit-finite} if it intersects only finitely many orbits of
$X$. In particular, the nominal set \( X \) is \emph{orbit-finite} if
$\orb X$ is a finite set. This implies that for every finite subset
\( S \subseteq \A \) the set $\orb_S X$ is finite. Moreover, $X$
contains only finitely many elements with support \( S \).

\begin{example}
  The set $\A^*$ of finite words over $\A$ forms a nominal set with
  the group action
  $\pi\cdot (a_1\cdots a_n) = \pi(a_1)\cdots \pi(a_n)$. The languages
  $L_0,L_1\seq \A^*$ from the Introduction are equivariant
  subsets. Given a fixed name $a\in \A$, the subset
  $L_2=\{awa\mid w\in \A^*\}$ is finitely supported with
  $\supp L_2 = \{a\}$. All the above sets have an infinite number of
  orbits. An example of an orbit-finite set is given by
  $\A^2=\A\times \A \seq \A^*$; its two orbits are
  $\{aa \mid a\in \A \}$ and $\{ab\mid a\neq b\in \A \}$.
\end{example}
A map \( f \colon X \rightarrow Y \) between nominal sets is
\emph{finitely supported} if there exists a finite set $S\seq \A$ such
that $f(\pi \cdot x)=\pi\cdot f(x)$ for all $x\in X$ and
$\pi\in \perm_S\A$, and \emph{equivariant} if it is supported by
$S=\emptyset$. Equivariant maps satisfy $\supp f(x)\seq \supp x$ for
all $x\in X$.  Nominal sets and equivariant maps form a category \nom,
with the full subcategory \nomof of {orbit-finite} nominal sets. The
category \nom is complete and cocomplete. Colimits and finite limits
are formed like in \set; general limits are formed by taking the limit in
$\set$ and restricting to finitely supported elements. The category
$\nomof$ is closed under finite limits and finite colimits in
$\nom$. \emph{Quotients} and \emph{subobjects} in \nom are represented
by surjective and injective equivariant maps. Every equivariant map
\( f \) has an image factorization \( f = m \cdot e \) with \( m \)
injective and \( e \) surjective; we call \( e \) the \emph{coimage}
of \( f \).

A nominal set is \emph{strong} if for all $x\in X$ and
$\pi\in \perm\A$ one has $\pi\cdot x= x$ iff $\pi \in \perm_{S} \A$,
where $S=\supp x$. (Note that the ``if'' direction holds in every
nominal set.)  For example, the nominal set
\( \A^{\#n} = \{f \colon n \rightarrow \A \mid \text{$f$ injective} \}
\) with pointwise action is strong and has a single orbit. Up to
isomorphism, (orbit-finite) strong nominal sets are precisely (finite)
coproducts of such sets.

\section{Nominal Stone Spaces}
\label{sec:nom-prof-spaces}

In this section, we establish the topological foundations for our pro-orbit-finite approach to data languages. We start by recalling the basic definitions of nominal topology~\cite{gab-lit-pet-11,pet-12}.

\begin{defn}\label{def:nom-top}
  \begin{enumerate}
  \item\label{def:nom-top:top} A \emph{nominal topology} on a nominal
    set \( X \) is an equivariant subset
    \( \mathcal{O}_{X} \subseteq \powfs X \) closed under finitely
    supported union (that is, if $\mathcal{U}\subseteq \mathcal{O}_X$
    is finitely supported then
    $\bigcup \mathcal{U} \in \mathcal{O}_X$) and finite intersection.
    Sets \( U \in \mathcal{O}_{X} \) are called \emph{open} and their
    complements \emph{closed}; sets that are both open and closed are
    \emph{clopen}.  A nominal set \( X \) together with a nominal
    topology \( \mathcal{O}_{X} \) is a \emph{nominal topological
      space}.  An equivariant map \( f \colon X \rightarrow Y \)
    between nominal topological spaces is \emph{continuous} if for every open set $U$ of $Y$ its
    preimage $f^{-1}[U]$ is an open set of \( X \).  Nominal
    topological spaces and continuous maps form the category
    \( \ntop \).

\item\label{def:nom-top:basis} A \emph{subbasis} of a nominal
  topological space $(X,\mathcal{O}_X)$ is an
  equivariant subset \( \mathcal{B} \subseteq \mathcal{O}_X \) such that
  every open set of $X$ is a finitely supported union of
  finite intersections of sets in \( \mathcal{B} \). If additionally every finite intersection of sets in $\mathcal{B}$ is a finitely supported union of sets in $\mathcal{B}$, then $\mathcal{B}$ is called a \emph{basis}. In this case, every open set of $X$ is a finitely supported union of elements of $\mathcal{B}$.  
\end{enumerate}
\end{defn}

\begin{example}\label{ex:nom-top}
  \begin{enumerate}
  \item A topological space may be viewed as a nominal
    topological space equipped with the trivial group
    action. Then every (open) subset has empty support and every union
    is finitely supported, so we recover the axioms of classical
    topology.
    
  \item Every nominal set \( X \) equipped with the \emph{discrete
      topology}, where all finitely supported subsets are open, is a
    nominal topological space.  It has a basis given by all singleton
    sets.
  \item\label{ex:nom-top:met} A \emph{nominal \mbox{(pseudo-)metric}
      space} is given by a nominal set \( X \) with a
    (pseudo-)metric\footnote{Recall that a pseudometric differs from a
      metric by not requiring $d(x,y)\neq 0$ for $x\neq y$.}
    \( d \colon X \times X \rightarrow \R \) which is equivariant as a
    function into the set \R, regarded as a nominal set with the
    trivial group action.  As usual, the open ball around
    \( x \in X \) with radius \( r > 0 \) is given by
    \( B_{r}x = \{y \in X \mid d(x, y) < r\} \).  Since
    \( \pi \cdot B_{r}(x) = B_{r}(\pi \cdot x) \) for all
    $\pi\in \perm\A$ and $x\in X$, every nominal (pseudo-)metric space
    carries a nominal topology whose basic opens are the open balls.
  \end{enumerate}
\end{example}

\begin{rem}\label{rem:func-ntop}
  Every nominal topological space induces two families of ordinary topological
  spaces, one by taking only opens with a certain support and the
  other by forming orbits. In more detail, let \( S \subseteq \A \) be
  a finite set of names and let \( X \) be a nominal topological space
  with topology \( \mathcal{O} \).
  \begin{enumerate}
    \item\label{rem:func-ntop:supp}
          The underlying set of the nominal space \( X \) carries a classical topology \( \mathcal{O}_{S} \) consisting of all \( S \)-supported open sets of \( \mathcal{O} \). We denote the resulting topological space by \( \forget[X]{S} \).
    \item\label{rem:func-ntop:orb}
          The set \( \orb_{S} X \) of \( S \)-orbits can  be equipped with the quotient topology \( \mathcal{O}_{\orb_{S} } \) induced by the projection \( X \epi \orb_{S} X \) mapping each \( x \in X \) to its \( S \)-orbit \( \orb_{S} x \).
          In this topology,
          a set \( O \subseteq \orb_{S} X \) of \( S \)-orbits is open iff its union \( \bigcup O\) is open in \( X \).
  \end{enumerate}
These constructions give rise to functors 
  $\forget{S}, \orb_{S}\colon \ntop \rightarrow \topo$. They allow us to switch between nominal and classical topology. 
\end{rem}
As noted in \autoref{ex:pro-completion}, the Pro-completion of the
category \setf is the category of profinite spaces. One may expect
that the Pro-completion of \nomof analogously consists of all
\emph{\pof} spaces, that is, nominal topological spaces that are
codirected limits of orbit-finite discrete spaces. However, this fails
due to a simple fact: while codirected limits of non-empty finite sets
are always non-empty (which is a consequence of Tychonoff's theorem,
thus the axiom of choice), codirected limits of non-empty
{orbit-finite} nominal sets may be empty.

\begin{rem} Similar to $\Top$, codirected limits in $\ntop$ are formed by taking the limit
  in $\nom$ equipping it with the initial topology.
\end{rem}

\begin{example}\label{ex:empty-limit}
  Consider the $\omega^\op$-chain
  $ 1 \leftarrow \mathbb{A} \leftarrow \mathbb{A}^{\#2} \leftarrow \mathbb{A}^{\#3} \leftarrow \cdots$
  in $\nomof$ with connecting maps omitting the last component.
  Its limit in \set (see \autoref{ex:cod-lim}) is given by $\A^{\#\omega}$, the set of all injective functions from $\omega$ to $\A$.
  Clearly no such function has finite support,
  thus the limit in \nom (and therefore also in \ntop) is empty.
\end{example}
This entails that it is in fact impossible to characterize \pronomof by any sort of spaces. By definition of the free completion $\pro(\nomof)$, the inclusion functor $I\colon \nomof \hookrightarrow \nom$ extends uniquely  to a functor \( \bar{I} \colon \pro(\nomof) \rightarrow \nom \) preserving codirected limits. The analogous functor $\bar I\colon \pro(\setf)\to \set$ is the forgetful functor of the category of profinite spaces. In contrast, we have
\begin{proposition}\label{prop:not-conc}
  The category \pronomof is \emph{not} concrete: the functor $\bar I$ is not faithful.
\end{proposition}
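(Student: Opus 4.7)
The plan is to exhibit a parallel pair of morphisms in $\pro(\nomof)$ that $\bar I$ collapses to a single equivariant map. The natural candidate is the diagram from \autoref{ex:empty-limit}: let $D\colon \omega^\op \to \nomof$ be the chain $1 \leftarrow \A \leftarrow \A^{\#2} \leftarrow \A^{\#3} \leftarrow \cdots$ with connecting maps forgetting the last coordinate, and let $L \in \pro(\nomof)$ denote the limit of the embedded diagram $E\circ D$ with cone $p_n\colon L \to \A^{\#n}$. Since $\bar I$ preserves codirected limits and agrees with the inclusion $\nomof\hookrightarrow\nom$ on representables, $\bar I(L) = \lim D$ in $\nom$, which by \autoref{ex:empty-limit} is the empty nominal set.

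Now the two coordinate projections $\pi_1,\pi_2\colon \A^{\#2}\to\A$ are distinct equivariant maps, so composing with $p_2$ produces parallel morphisms
\[
 f_i \;=\; \pi_i\circ p_2 \;\colon\; L \longrightarrow \A \qquad (i\in\{1,2\})
\]
in $\pro(\nomof)$. The two things left to verify are: (a) $f_1\neq f_2$ in $\pro(\nomof)$, and (b) $\bar I(f_1)=\bar I(f_2)$ in $\nom$. Part (b) is immediate, since both maps have domain $\bar I(L)=\emptyset$ and there is a unique equivariant map $\emptyset \to \A$.

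For part (a), the plan is to invoke finite copresentability: as the excerpt already records, every representable $E C$ is finitely copresentable in $\pro(\cat C)$, so the canonical comparison
\[
 \colim_{n\in\omega^\op}\, \nom(\A^{\#n},\A) \;\longrightarrow\; \pro(\nomof)(L,\A)
\]
is a bijection, and $f_1,f_2$ are represented by $\pi_1,\pi_2\in\nom(\A^{\#2},\A)$. Two elements of this directed colimit are identified precisely when they become equal after precomposition with some connecting map $\A^{\#m}\to\A^{\#2}$; but these are the ``forget-the-tail'' maps $(a_1,\ldots,a_m)\mapsto (a_1,a_2)$, so $\pi_1$ and $\pi_2$ pull back to the two distinct coordinate projections $\A^{\#m}\to\A$ for every $m\geq 2$. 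Hence $\pi_1$ and $\pi_2$ are never identified, and $f_1\neq f_2$ in $\pro(\nomof)$.

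The main obstacle is this distinctness argument, which genuinely requires the description of hom-sets out of a codirected limit via finite copresentability rather than any set-theoretic intuition (indeed $L$ has an empty underlying nominal set, so no ``pointwise'' test can distinguish morphisms out of it). Once that description is in place, the remainder is a routine check that the ``forgetful'' connecting maps do not equate the first two coordinate projections of $\A^{\#m}$.
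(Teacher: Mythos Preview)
Your proof is correct and follows essentially the same strategy as the paper: take the limit $L$ of the chain from \autoref{ex:empty-limit}, use finite copresentability of an orbit-finite target to compute $\pro(\nomof)(L,-)$ as a filtered colimit of hom-sets, and contrast this with $\nom(\bar I L,-)=\nom(\emptyset,-)$. The only cosmetic difference is the choice of target: you map into $\A$ and separate the two coordinate projections $\pi_1,\pi_2\colon \A^{\#2}\to\A$, whereas the paper maps into the two-element set $2$ and separates the two constant maps $1\to 2$; both choices work for the same reason. (One small notational slip: your colimit should be indexed over $\omega$, not $\omega^\op$.)
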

\begin{proof}
  Consider the chain
  $1 \leftarrow \A \leftarrow \A^{\#2} \leftarrow \cdots$ of
  \autoref{ex:empty-limit}. Let $D\colon \omega^\op\to \nomof$ denote
  the corresponding diagram, and let
  $E\colon \nomof\hookrightarrow \pro(\nomof)$ be the embedding. To
  prove that $\bar I$ is not faithful, we show that
  $|{\pro(\nomof)(\lim ED, E2)}|> |\nom(\bar I (\lim ED), \bar I E
  2)|$, where $2$ is the two-element nominal set. Indeed, we have
  \begin{align*}
    \pro(\nomof)(\lim_{n<\omega} ED_{n}, E2) &\cong \colim_{n<\omega} \pro(\nomof)(ED_{n}, E2) && \text{\( E2 \) finitely copresentable}\\
                                           &\cong \colim_{n<\omega} \nomof(D_{n}, 2) && E \text{ full embedding} \\
                                           &\cong 2 && 
\end{align*}
because $\nomof(D_0,2)\cong 2$ and the two elements are not merged by the colimit injection. However,
\begin{align*}
\nom(\bar I (\lim_{n<\omega} E D_n), \bar I E 2) & \cong \nom (\lim_{n<\omega} \bar I E D_n, \bar I E 2) &&  \text{ \( \bar I \) preserves codirected limits} \\
&  \cong \nom(\lim_{n<\omega} ID_n, 2) &&  I = \bar I E \\
&  \cong \nom(\emptyset, 2) &&  \text{\autoref{ex:empty-limit}} \\
& \cong 1. && \qedhere
  \end{align*}
\end{proof}

We thus restrict our focus to well-behaved subcategories of \nomof.
We choose these subcategories in such way that situations like in
\autoref{ex:empty-limit}, where unrestricted accumulation of supports
results in empty codirected limits, are avoided.

\begin{defn}
  A nominal set \( X \) is \emph{\kbnd}, for \( k \in \N \), if
  \( |{\supp x}| \le k \) for every \( x \in X \).
\end{defn}

For concrete categories $\cat C$ over \nom (or \nomof) we denote by
\( \cat{C}_{k} \) the full subcategory of \cat{C} whose underlying
objects are \kbnd. For instance, $\nom_k$ is the category of
$k$-bounded nominal sets, and $\ntop_k$ is the category of $k$-bounded
nominal topological spaces.

\begin{rem}\label{rem:nomk}
  \begin{enumerate}
  \item \label{rem:nomk:coref-subcat} The full subcategories
    \( \nomk \hookrightarrow \nom \) and
    \( \nomofk \hookrightarrow \nomof \) are
    coreflective~\cite[Section IV.3]{mac-71}: the coreflector (viz.\
    the right adjoint of the inclusion functor) sends a nominal
    set~\( X \) to its subset
    $X_{k} = \{ x \in X \mid |{\supp x}| \le k \}$.  Hence \nomk is
    complete: limits are formed by taking the limit in $\nom$ and
    applying the coreflector. Analogously, \nomofk is finitely
    complete.
    
  \item\label{rem:nomk:no-topos} In contrast to $\nom$,
    the category $\nom_k$ generally fails to be a topos because it is
    not cartesian closed. For instance, the functor
    $\A^{\#2}\times (-)$ on $\nom_2$ does not preserve coequalizers,
    hence it is not a left adjoint.
    
  \item\label{rem:nomk:presheaves}\sloppypar The category \nom is
    known to be equivalent to the category of pullback-preserving
    presheaves \(\mathbb{I} \rightarrow \set\), where \(\mathbb{I}\)
    is the category of finite sets and injective
    functions~\mbox{\cite[Theorem 6.8]{pit-13}}.  By inspecting the proof it
    is easy to see that this restricts to an equivalence between \nomk
    and the category of \emph{\( k \)-generated} pullback-preserving
    presheaves \( \mathbb{I} \rightarrow \set \).  Here a presheaf
    \( F \colon \mathbb{I} \rightarrow \set \) is 
    \( k \)-generated if for every finite set \( S \) and every
    \( x \in FS \) there exists a set \( S' \) of cardinality at most
    \( k \) and an injective map \( f \colon S' \rightarrow S \) such
    that \( x \in Ff[FS'] \).
  \end{enumerate}
\end{rem}

With regard to codirected limits, the restriction to bounded nominal sets fixes the issue arising in \autoref{ex:empty-limit}:

\begin{lemma}\label{lem:cod-lim-set}
  Codirected limits in $\nom_k$ are formed at the level of \set.
\end{lemma}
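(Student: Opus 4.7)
My plan is to prove that the set-level limit $L = \{(x_i)_{i\in I}\mid Df(x_i)=x_j\text{ for all }f\colon i\to j\}$ of a codirected diagram $D\colon I\to\nom_k$, equipped with the pointwise action $\pi\cdot(x_i)=(\pi\cdot x_i)$, is already the $\nom_k$-limit. By \iref{rem:nomk}{coref-subcat} the $\nom_k$-limit is obtained from the $\nom$-limit via the $k$-bounded coreflector, and the $\nom$-limit is in turn the $\set$-limit restricted to finitely supported elements. Thus the whole task reduces to verifying that every compatible family $(x_i)\in L$ automatically carries a support of size at most $k$.

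To see this, set $S_i:=\supp x_i$. Since each connecting map $Df$ is equivariant and equivariant maps satisfy $\supp f(x)\seq \supp x$, every arrow $f\colon i\to j$ in $I$ yields $S_j=\supp Df(x_i)\seq S_i$; that is, supports shrink along the diagram. The bound $|S_i|\le k$ confines the cardinalities to a finite subset of $\N$, so I can pick $i_0\in I$ realizing $\max_{i\in I}|S_i|$. For an arbitrary $j\in I$, the cofiltered property of $I$ supplies some $\ell\in I$ with arrows $\ell\to i_0$ and $\ell\to j$, whence $S_{i_0}\seq S_\ell$ and $S_j\seq S_\ell$; maximality of $|S_{i_0}|$ then forces $S_\ell = S_{i_0}$, so $S_j\seq S_{i_0}$. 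Consequently every $\pi\in\perm_{S_{i_0}}\A$ also lies in $\perm_{S_j}\A$ and fixes each $x_j$, hence fixes $(x_i)$. So $S_{i_0}$ supports $(x_i)$, and $|\supp (x_i)|\le k$.

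The universal property transfers from $\set$ with no extra work: the projections $p_i\colon L\to D_i$ are equivariant by construction of the pointwise action, and any cone $(c_i\colon C\to D_i)$ in $\nom_k$ factors uniquely through $L$ in $\set$ via a map that is automatically equivariant because the $c_i$ are and the $p_i$ are jointly injective. I expect no real obstacle here; the entire content sits in the maximum-support step, which is precisely where the boundedness hypothesis is indispensable. Without it, supports could grow unboundedly along $I$ and produce compatible families that are not finitely supported or, as in \autoref{ex:empty-limit}, no compatible families at all.
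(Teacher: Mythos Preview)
Your proof is correct and follows essentially the same approach as the paper's: both show that the union $\bigcup_i \supp x_i$ has size at most $k$ and hence supports the compatible family, the paper by a contradiction argument (assume $k+1$ names occur and push them down to a common lower index) and you by directly picking an index $i_0$ of maximal support size and showing $S_{i_0}$ contains every $S_j$. The organizational difference is cosmetic; the key idea—codirectedness plus the bound $k$ forces all componentwise supports into a single set of size $\le k$—is the same.
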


 We proceed to give a topological characterization of $\pro(\nomofk)$ in terms of nominal Stone spaces, generalizing the corresponding result \eqref{eq:stone-pro} for $\pro(\setf)$. To this end, we introduce suitable nominalizations of the three characteristic properties of Stone spaces: compactness, Hausdorffness, and existence of a basis of clopens. The nominal version of compactness comes natural and is compatible with the functors $\forget{S}$ and $\orb_{S}$ of \autoref{rem:func-ntop}.

\begin{defn}\label{def:compact}
  An \emph{open cover} of a nominal topological space
  \( (X,\mathcal{O}) \) is a finitely supported set
  \( \mathcal{C} \subseteq \mathcal{O} \) that covers \( X \),
  i.e.~\( \bigcup \mathcal{C} = X \).  A \emph{subcover} of
  \( \mathcal{C} \) is a finitely supported subset of
  \( \mathcal{C} \) that also covers \( X \).  A nominal topological
  space \( X \) is \emph{compact} if every open cover
  \( \mathcal{C} \) of \( X \) has an orbit-finite subcover: there
  exist \( U_{1}, \ldots, U_{n} \in \mathcal{C} \) such that
  \( X = \bigcup_{i=1}^{n}\bigcup \orb U_{i} \).
\end{defn}

\begin{lemma}\label{lem:comp}
  For every nominal topological space \( X \)  the following conditions are equivalent:
  \begin{enumerate}
    \item\label{lem:comp:def} The space \( X \)  is compact.
    \item\label{lem:comp:unif} Every uniformly finitely supported open
      cover of \( X \) has a finite subcover.
      
    \item\label{lem:comp:supp} For every finite set
      \( S \subseteq \A \) the topological space \( \forget[X]{S} \)
      is compact.

    \item\label{lem:comp:orb} For every finite set
      \( S \subseteq \A \) the topological space \( \orb_{S} X \) is
      compact.
  \end{enumerate}
\end{lemma}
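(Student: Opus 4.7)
The plan is to prove the cycle of equivalences by treating conditions (2), (3), and (4) as three reformulations of classical compactness of the derived spaces $\forget[X]{S}$ and $\orb_S X$, and then to link this to the nominal notion (1) via the $T$-hull operation $\hull_T$.

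For (2) $\Leftrightarrow$ (3) $\Leftrightarrow$ (4), I will unpack the definitions from \autoref{rem:func-ntop}. The open sets of $\forget[X]{S}$ are by construction the $S$-supported opens of $X$, so a classical open cover of $\forget[X]{S}$ is exactly the data of a uniformly $S$-supported nominal open cover of $X$---note that any collection of $S$-supported opens is automatically $S$-supported as a collection (since $\perm_S \A$ fixes each element and hence the whole collection), hence finitely supported, and finite subcovers in the two settings coincide. For (3) $\Leftrightarrow$ (4), the quotient map $\forget[X]{S} \twoheadrightarrow \orb_S X$ induces a union-preserving bijection $O \mapsto \bigcup O$ between open sets of $\orb_S X$ and $S$-supported opens of $X$, using that every $S$-supported subset of $X$ is a union of $S$-orbits. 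Consequently classical open covers of the two spaces correspond bijectively, as do their finite subcovers.

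For (2) $\Rightarrow$ (1), given a finitely supported open cover $\mathcal{C}$ of $X$ with support $T$, I will refine to the uniformly $T$-supported cover $\mathcal{C}^T = \{\hull_T U : U \in \mathcal{C}\}$. Each $\hull_T U = \bigcup \orb_T U$ is a $T$-supported open set, because $\orb_T U \subseteq \mathcal{C}$ by $T$-supportedness of $\mathcal{C}$ and the topology is closed under finitely supported unions. Since $\mathcal{C}^T$ still covers $X$, by (2) finitely many $\hull_T U_1, \ldots, \hull_T U_n$ suffice; and since $\hull_T U_i \subseteq \bigcup \orb U_i$, this yields $X = \bigcup_i \bigcup \orb U_i$, the orbit-finite subcover required by (1).

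For (1) $\Rightarrow$ (2), a uniformly $S$-supported cover $\mathcal{C}$ is in particular $S$-supported, so (1) supplies $U_1, \ldots, U_n \in \mathcal{C}$ with $X = \bigcup_i \bigcup \orb U_i$. Each $U_i$ is $S$-supported and therefore fixed by $\perm_S \A$, and $\mathcal{C}$ itself is $S$-supported, so the orbit-finite subcover can be identified inside $\mathcal{C}$ as the finite set $\{U_1, \ldots, U_n\}$, providing the desired finite subcover. The main obstacle lies precisely here: one must reconcile the full $\perm \A$-orbit $\orb U_i$ appearing in (1)'s covering condition with the fact that $\mathcal{C}$ is only $\perm_S \A$-invariant, verifying that the covering condition $X = \bigcup_i \bigcup \orb U_i$ genuinely collapses to $X = \bigcup_i U_i$ in the uniformly $S$-supported setting---the key point being that each $U_i$ is $S$-saturated, so that the relevant orbit of $U_i$ within $\mathcal{C}$ is just $\{U_i\}$ itself.
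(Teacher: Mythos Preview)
Your argument for $(1) \Rightarrow (2)$ has a genuine gap. From (1) you extract $U_1, \ldots, U_n \in \mathcal{C}$ with $X = \bigcup_i \bigcup \orb U_i$ and then assert that this ``collapses to $X = \bigcup_i U_i$'' because each $U_i$ is $S$-supported. But $S$-supportedness of $U_i$ only gives $\orb_S U_i = \{U_i\}$ (invariance under $\perm_S \A$), not $\orb U_i = \{U_i\}$; the full $\perm \A$-orbit can be strictly larger. Concretely, take $X = \A$ discrete, $S = \{a\}$, and $\mathcal{C} = \{\{a\},\, \A \setminus \{a\}\}$. The single set $U_1 = \{a\}$ already satisfies $\bigcup \orb U_1 = \A = X$, so the displayed condition in \autoref{def:compact} holds with $n=1$; yet $U_1$ alone does not cover $X$. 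Nothing in your argument rules out that (1) hands you this ``bad'' choice of representatives.

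The paper avoids this by reading ``orbit-finite subcover'' literally as a subcover (a finitely supported subset $\mathcal{C}_0 \subseteq \mathcal{C}$ that still covers $X$) which is orbit-finite in $\powfs X$. Since $\mathcal{C}$ is uniformly $S$-supported, every element of $\mathcal{C}_0$ is $S$-supported, and an orbit-finite nominal set contains only finitely many $S$-supported elements; hence $\mathcal{C}_0$ is finite. Your remaining implications---$(2) \Leftrightarrow (3) \Leftrightarrow (4)$ and $(2) \Rightarrow (1)$ via the $T$-hull---are correct and essentially match the paper.
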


The Hausdorff property is more subtle: rather than just separation of
points, we require separation of $S$-orbits (``thick points'') by disjoint $S$-supported
open neighbourhoods.
\begin{defn}\label{def:hausdorff}
  A nominal topological space \( X \) is \emph{(nominal) Hausdorff} if
  for every finite set \( S \subseteq \A \) and every pair
  \( x_{1}, x_{2} \in X \) of points lying in different
  \( S \)-orbits, there exist disjoint \( S \)-supported open sets
  \( U_{1}, U_{2} \subseteq X \) such that \( x_{i} \in U_i \) for
  \( i=1,2 \).
\end{defn}
Note that the nominal Hausdorff condition is clearly equivalent to being able to separate
disjoint \( S \)-\emph{orbits}: If
$\orb_{S} x_{1} \ne \orb_{S} x_{2}$, then any two disjoint open
\( S \)-supported neighbourhoods \( U_{1}, U_{2} \) of
\( x_{1}, x_{2} \) satisfy $\orb_S x_i\seq U_i$ for $i=1,2$. Note also that
$\orb_S x=\{x\}$ whenever $\supp x\seq S$, hence the nominal Hausdorff
condition implies the ordinary one. For bounded nominal compact
Hausdorff spaces, we have a codirected Tychonoff theorem:

\begin{proposition}\label{prop:cod-lim-ne}
 For every codirected diagram of non-empty \kbnd nominal compact Hausdorff spaces, the limit in $\ntop$ is a non-empty \kbnd nominal compact Hausdorff space.
\end{proposition}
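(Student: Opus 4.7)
The plan is to verify the four asserted properties of $L = \lim_{i\in I} D_i$ in turn, reducing nominal statements to their classical counterparts where possible and exploiting the $k$-bound to convert infinite-choice questions into finite ones. For $k$-boundedness, \autoref{lem:cod-lim-set} implies that $L$ coincides as a set with the classical set-theoretic limit, so its elements are compatible families $(x_i)_{i\in I}$ with support $\bigcup_i \supp x_i$; cofilteredness of $I$ and equivariance of the connecting maps ensure that every finite sub-union is contained in some $\supp x_\ell$ of size at most $k$, hence so is the full union.

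For nominal Hausdorffness, suppose $x,y\in L$ satisfy $[x]_S \ne [y]_S$. I would show some projection separates them: for each $i$, let $F_i$ be the set of restrictions $\pi|_{\supp x_i}$ arising from $\pi\in\perm_S\A$ with $\pi\cdot p_i(x) = p_i(y)$. Each $F_i$ has cardinality at most $k!$ (since $|\supp x_i|\le k$) and is non-empty iff $[p_i(x)]_S = [p_i(y)]_S$. The restriction maps along arrows in $I$ form a cofiltered diagram of finite sets; were every $F_i$ non-empty, K\"onig's lemma would produce a compatible system of restrictions patching to a single $\pi\in\perm_S\A$ with $\pi\cdot x = y$, contradicting $[x]_S \ne [y]_S$. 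Hence $[p_{i_0}(x)]_S \ne [p_{i_0}(y)]_S$ for some $i_0$, and nominal Hausdorffness of $D_{i_0}$ supplies disjoint $S$-supported opens whose $p_{i_0}$-preimages separate $x$ and $y$ in $L$.

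For non-emptiness and nominal compactness, I would use the functor $\orb_S\colon \ntop\to\topo$. Each $\orb_S D_i$ is classical compact (\autoref{lem:comp}) and Hausdorff (distinct $S$-orbits of $D_i$ are separated by disjoint $S$-supported opens), so the classical codirected Tychonoff theorem yields a non-empty compact Hausdorff limit $K_S := \lim_i \orb_S D_i$ in $\topo$. The canonical continuous comparison $\phi_S \colon \orb_S L \to K_S$ is to be shown a homeomorphism: injectivity is exactly the K\"onig argument above, and the topologies agree since any $S$-supported open of $L$ is a union of preimages $p_i^{-1}(W)$ with $W$ an $S$-supported open of $D_i$ (apply $\hull_S(\,\cdot\,)$ termwise to any subbasic representation). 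Once $\orb_S L \cong K_S$ is established, $L$ is nominal compact by \autoref{lem:comp} and non-empty because $K_S$ is.

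The main obstacle is surjectivity of $\phi_S$: given $([x_i])_i \in K_S$, one must lift to a compatible family $(y_i)\in L$ with $y_i\in [x_i]_S$. Passing to the subdiagram of closed nominal subspaces $P_i = [x_i]_S\subseteq D_i$---each non-empty, nominal compact Hausdorff, $k$-bounded, with surjective restricted connecting maps---the plan is to iteratively refine partial compatible choices. The key observation is that for any finite $T\supseteq S$ the quotient $\orb_T P_i$ is \emph{finite}: elements of $P_i$ fall into only finitely many support shapes relative to $T$ by the $k$-bound. Hence $\lim_i \orb_T P_i$ is non-empty by a classical K\"onig-type argument, and assembling compatible refinements as $T$ grows yields the required point of $\lim_i P_i\subseteq L$. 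The bounded-support hypothesis is indispensable here, converting infinite choices into finite ones and sidestepping the pathology of \autoref{ex:empty-limit}.
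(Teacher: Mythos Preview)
Your overall strategy aligns with the paper's: reduce Hausdorffness and compactness to their classical counterparts via the functors $\orb_S$ or $\forget{S}$, and establish non-emptiness by a K\"onig argument on finite sets enabled by the $k$-bound. The Hausdorffness step (finitely many relevant permutations plus pigeonhole over a cofiltered index) is essentially the paper's argument.

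There is, however, a genuine gap in your surjectivity argument for $\phi_S$. You correctly observe that $\orb_T P_i$ is finite for finite $T\supseteq S$, so $\lim_i \orb_T P_i$ is non-empty; but an element of this limit is a compatible family of $T$-\emph{orbits}, not of points, and ``assembling compatible refinements as $T$ grows'' does not extract a point. Take the constant diagram $D_i=\A$ with $S=\emptyset$, so $P_i=\A$: the family $Q^T=\A\setminus T$ is a perfectly compatible system of $T$-orbits, each refining the previous as $T$ grows, yet $\bigcap_T Q^T=\emptyset$. Hence a compatible system of ever-finer orbits need not arise from any element, and your iteration over $T$ can fail to converge to a point of $\lim_i P_i$.

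The fix is to look at $T$-\emph{supported elements} rather than $T$-orbits, for a \emph{single} sufficiently large $T$. For $T\supseteq S$ with $|T\setminus S|\ge k$, the set $P_i^T:=\{z\in P_i\mid \supp z\subseteq T\}$ is non-empty (move any $z\in P_i$ by some $\sigma\in\perm_S\A$ so that $\sigma[\supp z]\subseteq T$, using $|\supp z\setminus S|\le k\le |T\setminus S|$) and finite (a single $S$-orbit contains only finitely many elements with support in a fixed finite set). The connecting maps restrict to $P_j^T\to P_i^T$, so one K\"onig argument for this fixed $T$ already yields a compatible family of points in $\lim_i P_i\subseteq L$. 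This is precisely the paper's approach: it first passes to $\orb_\emptyset$ to obtain compatible $\emptyset$-orbits via classical Tychonoff, then fixes a single set of size $k$ and restricts to elements supported there---no iteration over growing $T$ is needed.

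Since your compactness argument also rests on $\phi_S$ being a homeomorphism, it inherits the same gap. The paper decouples this: it shows $\forget[L]{S}\cong\lim_i\forget[D_i]{S}$ as topological spaces, so compactness of each $\forget[L]{S}$ follows from the classical fact about codirected limits of compact spaces, independently of any surjectivity claim for $\phi_S$.
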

Finally, having a basis of clopen sets is not sufficient in our setting. To see this, note that in an ordinary topological space $X$ every clopen subset $C\seq X$ can be represented as $C=f^{-1}[A]$ for some continuous map $f\colon X\to Y$ into a finite discrete space $Y$ and some subset $A\seq Y$. (In fact, one may always take $Y=2$ and $A=\{1\}$.) This is no longer true in the nominal setting, see \autoref{rem:rep} below. Therefore, in lieu of clopens we work with \emph{representable} subsets:

\begin{defn}
  A subset \( R \subseteq X \) of a nominal space \( X \)  is \emph{representable} if there exists a continuous map \( f \colon X \rightarrow Y \)  into an orbit-finite discrete space \( Y \) such that \( R = f^{-1}[A] \) for some \( A \in \powfs Y \).
\end{defn}

\begin{rem}\label{rem:rep}
  \begin{enumerate}
  \item\label{rem:rep:not-equiv} Every representable set is clopen,
    but the converse generally fails. To see this, consider the
    discrete space \( X = \coprod_{n < \omega} \A^{\#n} \). We show
    that for fixed \( a \in \A \) the (clopen) subset
    \( R = \{x \mid a \in \supp x\} \subseteq X \) is not
    representable.  Towards a contradiction suppose that \( R \) is
    represented by \( f \colon X \rightarrow Y \) as
    \( R = f^{-1}[A]\) for some $A\in \powfs Y$.  Since $Y$ is
    orbit-finite, we can choose~\( m \) large enough such that there exists some
    \( x \in \A^{\#m} \setminus R \subseteq X \) for which
    \( \supp f(x) \subsetneq \supp x \). Choose a name
    \( b \in \supp x \setminus \supp f(x) \).  Then
    \( a, b \not\in \supp f(x) \), and so we have
    \[
      f(\tr{a}{b} \cdot x) = \tr{a}{b} \cdot f(x) = f(x).
    \]
    Since \( \tr{a}{b} \cdot x \in R \), this shows $f(x)\in A$ and thus $x\in R$. This contradicts the above choice of $x$. 

  \item\label{rem:rep:basis} If a nominal space \( X \) has a basis of
    representable sets, then we may assume without loss of generality
    that the basic open sets are of the form \( f^{-1}[y] \) for some
    $f \colon X \rightarrow Y$ and $y \in Y$, where~\( Y \) is
    orbit-finite and discrete. Indeed, if $R = f^{-1}[A]$ for
    $A \in \powfs Y$, then \( R = \bigcup_{y \in A}f^{-1}[y] \).
    Moreover, given representable sets
    $R_{i} = f_{i}^{-1}[y_{i}]$, $i = 1,2$, the set
    \( R_{1} \cap R_{2} \) is equal to
    \( \langle f_{1}, f_{2} \rangle^{-1}(y_{1}, y_{2}) \) and
    therefore representable as well.  Hence, to show that
    representable subsets form a basis it suffices to check whether
    every open set is a finitely supported union of subsets of the form $f^{-1}[y]$.
  \end{enumerate}
\end{rem}
\begin{defn}\label{def:nom-stone}
  A \emph{nominal Stone space} is a nominal compact Hausdorff space
  with a basis of representables. We let \( \npro \) denote the full
  subcategory of $\ntop$ given by nominal Stone spaces.
\end{defn}
\begin{rem}\label{rem:nom-stone-prof}
  \newcommand{\rest}{\ensuremath{\mathrm{\mathbf{n}}}\xspace} Nominal
  Stone spaces as per \autoref{def:nom-stone} are conceptually very
  different from \emph{nominal Stone spaces with $\new$}, introduced
  by Gabbay et al.~\cite{gab-lit-pet-11} as the dual of \emph{nominal
    boolean algebras with $\new$}. The latter are equipped with a
  \emph{restriction operator} \rest tightly related to the freshness
  quantifier $\new$ of nominal sets, which enables a nominal version
  of the {ultrafilter theorem} and thus a represention of boolean
  algebras with $\new$ via spaces of ultrafilters. In {nominal Stone
    spaces with $\new$}, the Hausdorff property is implicit (but would
  be analogous to that in standard topology), the basis is given by
  clopen rather than representable sets, and the notion of compactness
  (called \emph{$\rest$-compactness}) considers open covers closed
  under the operator \rest, which are required to have a \emph{finite}
  subcover. By this definition, the orbit-finite discrete space \A
  fails to be compact (the \rest-cover \( \{\{a\} \mid a \in \A\} \cup \{ \emptyset\} \)
  has no finite subcover). Hence, given that algebraic recognition is
  based on orbit-finite sets, nominal Stone spaces with $\new$ are not
  suitable for a topological interpretion of data languages.
\end{rem}

\begin{example}\label{ex:of-prof}
  Every orbit-finite nominal set can be viewed as a nominal Stone
  space equipped with the discrete topology.  We thus regard
  \( \nomof \) as a full subcategory of \( \npro \).  Nontrivial
  examples of nominal Stone spaces are given by the spaces of
  pro-orbit-finite words introduced later.
\end{example}
Within the class of nominal Stone spaces, representable and clopen subsets coincide:
\begin{lemma}\label{lem:rep:lim}
If $X$ is a nominal Stone space, then every clopen set \( C \subseteq X \) is representable.
\end{lemma}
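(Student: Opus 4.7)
Let $C \subseteq X$ be a clopen set and put $S = \supp C$, so that both $C$ and $X \setminus C$ are $S$-supported opens. My strategy is to cover $X$ by $S$-supported representables each contained in $C$ or in $X \setminus C$, apply compactness to extract a finite subcover, and then assemble these finitely many representables into a single representation of $C$.

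First I observe that $S$-supported representables form a basis of the classical topological space $\forget[X]{S}$ (whose opens are the $S$-supported opens of $X$, cf.~\autoref{rem:func-ntop}\ref{rem:func-ntop:supp}). Indeed, if $R = f^{-1}[y]$ is a basic representable of $X$ with $f\colon X \to Y$ continuous into an orbit-finite discrete $Y$, then by equivariance of $f$ its $S$-hull equals $\hull_S R = f^{-1}[\orb_S y]$, and $\orb_S y \in \powfs Y$; hence $\hull_S R$ is an $S$-supported representable. Given any $S$-supported open $U$ and any $x \in U$, pick a basic representable $R_x \ni x$ with $R_x \subseteq U$ (possible as representables form a basis of $X$); then $\hull_S R_x$ is $S$-supported, contained in $U$ (since $U$ is $S$-supported), and contains $x$, so $U$ is a union of $S$-supported representables.

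Next, for every $x \in C$ choose such an $S$-supported representable $T_x$ with $x \in T_x \subseteq C$, and analogously for every $y \in X \setminus C$ choose $T'_y$ with $y \in T'_y \subseteq X \setminus C$. These together form an open cover of $\forget[X]{S}$. By \autoref{lem:comp}\ref{lem:comp:supp} the space $\forget[X]{S}$ is classically compact, so we extract a finite subcover $T_1, \dots, T_N$. Each $T_k = f_k^{-1}[A_k]$ with $f_k\colon X \to Y_k$ continuous into an orbit-finite discrete space $Y_k$ and $A_k \in \powfs Y_k$. Form the tuple $f = \langle f_1, \dots, f_N \rangle\colon X \to Y$ where $Y = Y_1 \times \cdots \times Y_N$; since finite products of orbit-finite nominal sets are orbit-finite, $Y$ is an orbit-finite discrete space, and $f$ is continuous. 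Writing $p_k\colon Y \to Y_k$ for the projection, we have $T_k = f^{-1}[B_k]$ with $B_k = p_k^{-1}[A_k] \in \powfs Y$.

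Finally, let $B = \bigcup\{ B_k : T_k \subseteq C \}$, a finitely supported subset of $Y$. Given $x \in C$, some $T_k$ covers $x$, and by construction $T_k \subseteq C$, so $f(x) \in B_k \subseteq B$; conversely, if $f(x) \in B$, then $f(x) \in B_k$ for some $k$ with $T_k \subseteq C$, hence $x \in T_k \subseteq C$. Thus $C = f^{-1}[B]$, showing that $C$ is representable. The main technical point is the reduction to classical compactness of $\forget[X]{S}$, which allows us to perform a finite combination step that would not be directly available from the nominal definition of compactness alone.
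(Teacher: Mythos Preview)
Your proof is correct. It follows a genuinely different route from the paper's own argument, and it is worth spelling out the contrast.

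The paper's proof relies on the presentation of $X$ as a codirected limit of orbit-finite discrete spaces with projections $p_i\colon X\to D_i$: one writes $C$ as a uniformly $S$-supported union of basic representables $p_i^{-1}[U_i]$ (via \autoref{lem:cod-lim}), uses compactness of $C$ (as a closed subspace of $X$) to cut down to finitely many, and then takes a \emph{lower bound} $j$ in the codirected index poset to amalgamate into a single $C=p_j^{-1}[U]$. In particular, the combination step exploits the directed structure of the diagram rather than products.

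Your argument, by contrast, works straight from the axioms of a nominal Stone space (compactness and a basis of representables) and never invokes the limit presentation. You cover all of $X$ rather than just $C$, reduce to classical compactness of $\forget[X]{S}$ via \autoref{lem:comp}, and amalgamate the finite subcover through the \emph{finite product} $Y_1\times\cdots\times Y_N$, which stays orbit-finite. This is slightly more elementary and self-contained: it avoids a forward reference to the canonical-diagram description of nominal Stone spaces (established only as part of \autoref{thm:pro-comp}) and does not even use the Hausdorff condition. The paper's route has the complementary advantage that the resulting representation is by a single limit projection $p_j$, which meshes directly with the Pro-completion machinery used elsewhere.
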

The following theorem is the key result leading to our topological
approach to data languages.
\begin{theorem}\label{thm:pro-comp}
  For each $k\in \N$, the category of \kbnd nominal Stone spaces is
  the Pro-completion of the category of \kbnd orbit-finite nominal
  sets:
  \[
    \pro(\nomofk) = \npro_k.
  \]
  Moreover, $k$-bounded nominal Stone spaces are precisely the nominal
  topological spaces arising as codirected limits of $k$-bounded
  orbit-finite discrete spaces.%
\end{theorem}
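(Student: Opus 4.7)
The plan is to verify the two standard conditions characterizing $\npro_k$ as the Pro-completion of $\nomofk$: (i) every object of $\npro_k$ is a codirected limit of objects of $\nomofk$, viewed as discrete spaces via the full embedding $\nomofk \hookrightarrow \npro_k$; and (ii) every object of $\nomofk$ is finitely copresentable in $\npro_k$. Combined with the closure of $\npro_k$ under codirected limits in $\ntop$, these conditions yield $\npro_k \simeq \pro(\nomofk)$ and together imply the ``moreover'' clause.

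For closure under codirected limits, given $D\colon I \to \npro_k$ with limit $L$ in $\ntop$ and projections $p_i\colon L \to D_i$, \autoref{lem:cod-lim-set} yields $k$-boundedness of $L$ and \autoref{prop:cod-lim-ne} yields nominal compact Hausdorffness. A basis of representables of $L$ is constructed by substituting the representable bases of each $D_i$ into the initial-topology subbase $\{p_i^{-1}[U]\}$ and using codirectedness of $I$ to merge finite intersections $p_i^{-1}[\cdot] \cap p_j^{-1}[\cdot]$ into a single $p_\ell^{-1}[\cdot]$; representables are closed under finite intersection by \autoref{rem:rep}. Restricting to diagrams valued in discrete objects of $\nomofk$ establishes the $(\Leftarrow)$ direction of the ``moreover'' clause directly.

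For condition (i), I would present every $X \in \npro_k$ as the codirected limit $\lim_e Y_e$ over its equivariant surjections $e\colon X \twoheadrightarrow Y_e$ to discrete orbit-finite $k$-bounded spaces; this poset is codirected since $\nomofk$ is closed under finite products and subobjects, and $k$-boundedness of $Y_e$ is automatic because $\supp e(x) \subseteq \supp x$. The canonical $\eta\colon X \to \lim_e Y_e$ is continuous and injective: for distinct $x_1, x_2 \in X$, choose a finite $S$ containing both supports, so that $\orb_S x_1 \neq \orb_S x_2$, and apply nominal Hausdorffness (\autoref{def:hausdorff}) to obtain separating $S$-supported clopens; \autoref{lem:rep:lim} then realizes these as preimages under some $e$ on which $e(x_1)\neq e(x_2)$. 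Density of $\eta[X]$ follows from surjectivity of each $e$, closedness from compactness, yielding surjectivity of $\eta$. Nominal compact Hausdorffness of both spaces then makes $\eta$ a nominal homeomorphism via \autoref{lem:comp}, reducing to the classical fact that continuous bijections between compact Hausdorff spaces are homeomorphisms applied at each support level.

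For (ii), given a codirected limit $L = \lim_{i \in I} D_i$ in $\npro_k$ and a continuous $f\colon L \to Y$ with $Y \in \nomofk$ discrete, each preimage $f^{-1}[y]$ (for $y$ in a finite set of $\supp f$-orbit representatives of $Y$) is a clopen compact subset of $L$ covered by the basic representables $p_j^{-1}[C]$; compactness yields an orbit-finite subcover, and codirectedness of $I$ lets us pick a common index $i$ dominating all the indices involved. This gives the desired factorization $f = g \cdot p_i$ for some continuous $g\colon D_i \to Y$, with essential uniqueness established by the same argument. The main obstacle throughout is the uniform coordination of supports: nominal Hausdorffness speaks of $S$-orbits rather than points, and the $k$-bound is precisely what enables reducing the nominal compactness, separation, and factorization arguments to their classical Stone-space counterparts via the functors $\orb_S$ and $\forget{S}$ of \autoref{rem:func-ntop}, keeping all codomains within $\nomofk$ throughout.
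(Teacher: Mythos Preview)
Your overall strategy matches the paper's: verify that $\npro_k$ has codirected limits, that every object is a codirected limit of objects of $\nomofk$, and that every object of $\nomofk$ is finitely copresentable in $\npro_k$. Your treatments of closure under codirected limits and of condition~(i) are essentially correct and close to the paper's (the paper proves surjectivity of $\eta$ by a direct compactness argument on $\bigcap_f f^{-1}[y_f]$ rather than via density-plus-closedness, but your route also works once one passes to $\orb_S$).

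The real gap is in condition~(ii). After covering each fibre $f^{-1}[y_\ell]$ by basic opens and merging to a common index~$i$, you assert ``this gives the desired factorization $f = g\cdot p_i$'', but you never explain how $g$ is defined on elements of $D_i$ that lie \emph{outside} the image of $p_i$, nor why the resulting map is equivariant. In the classical Stone setting this is harmless because the limit projections may be taken surjective; in the nominal setting they need not be, and the obvious extension (picking a default value on the complement of $p_i[L]$) typically fails equivariance. The paper's fix is a separate lemma: for every~$i$ there exists $j\le i$ with $D_{ji}[D_j]=p_i[\lim D]$, so that postcomposing with $D_{ji}$ lands entirely inside the image of $p_i$, where equivariance is forced. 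That lemma is proved by a non-emptiness argument for a subdiagram of non-empty $k$-bounded compact Hausdorff spaces, which is exactly where \autoref{prop:cod-lim-ne} does its real work. Without this step the argument breaks. A secondary issue: finite copresentability must be checked against \emph{all} codirected diagrams in $\npro_k$, not only those valued in $\nomofk$; the paper handles the general case by first refining an arbitrary diagram $D$ to a cofiltered diagram of orbit-finite discrete spaces (substituting each $D_i$ by its own canonical diagram) and then applying the orbit-finite case. Your sketch does not make this reduction explicit.
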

For $k=0$, we recover the corresponding characterization of classical Stone spaces.

\section{Nominal Stone Duality}
Next, we give a dual characterization of (bounded) nominal
Stone spaces.  It builds on the known duality between nominal sets and
complete atomic nominal boolean algebras due to Petri\c{s}an~\cite{pet-12}.

\begin{defn}\label{def:nom-ba}
  A \emph{nominal boolean algebra} is a nominal set equipped with the
  structure of a boolean algebra such that all operations are equivariant.
  It is \emph{(orbit-finitely) complete} if every (orbit-finite)
  finitely supported subset has a supremum.  A \emph{subalgebra} of an
  (orbit-finitely) complete nominal boolean algebra is an equivariant
  subset closed under boolean operations and the respective
  suprema. Let \ncofba and \ncba denote the categories of
  (orbit-finitely) complete nominal boolean algebras; their morphisms
  are\ equivariant homomorphisms preserving (orbit-finite) suprema.
\end{defn}

\begin{defn}\label{def:nom-ba-at}
  An element \( x \in B \) of a nominal boolean algebra is an
  \emph{atom} if $x\neq \bot$ and $y<x$ implies $y=\bot$. The
  (equivariant) set of atoms of \( B \) is denoted \( \at B \).  The
  algebra $B$ is \emph{atomic} if every element is the supremum of all
  atoms below it; if additionally \( \at B \in \nomofk \) we call it
  \emph{\( k \)-atomic}.  If \( A \subseteq B \) is a \( k \)-atomic
  subalgebra we write \( A \sa B \).  An algebra \( B \in \ncba \) is
  called \emph{locally \( k \)-atomic} if every element of \( B \) is
  contained in some \( A \sa B \).  We denote by
  \( \ncakba \subseteq \ncba \) the full subcategory of all
  \( k \)-atomic complete nominal boolean algebras, and $\ncofalkba\seq \ncofba$
  denotes the full subcategory of all locally \( k \)-atomic orbit-finitely complete
   nominal boolean algebras.
\end{defn}

\begin{rem}\label{rem:of-complete}
  \begin{enumerate}
  \item\label{rem:of-complete:equiv} Orbit-finite completeness is
    equivalent to the weaker condition that suprema of \( S \)-orbits
    exist for all finite subsets \( S \subseteq \A \).  In fact, every
    $S$-supported orbit-finite subset \( X \subseteq B \) is a finite
    union \( X = \bigcup_{i=1}^{n} \orb_{S} x_{i} \) of
    \( S \)-orbits, whence
    \( \bigvee X = \bigvee_{i=1}^{n} \bigvee \orb_{S} x_{i} \).
    
  \item\label{rem:of-complete:atomic} Every $k$-atomic orbit-finitely complete
    nominal boolean algebra is complete: For every \fs
    subset \( X \subseteq B \) we have
    $\bigvee X = \bigvee \{b \in \at (B) \mid \exists(x \in X).\ b \le
    x\}$, which is a supremum of an orbit-finite subset.
  \end{enumerate}
\end{rem}
\begin{theorem}\label{thm:ind-compl}
  For each \( k \in \N \), the category of locally \( k \)-atomic
  orbit-finitely complete nominal boolean algebras is the
  Ind-completion of the category of \( k \)-atomic complete nominal
  boolean algebras:
  \[
    \ncofalkba \simeq \ind(\ncakba).
  \]
\end{theorem}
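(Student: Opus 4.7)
The plan is to verify the three standard properties characterising an Ind-completion for the fully faithful inclusion $E \colon \ncakba \hookrightarrow \ncofalkba$: (a) $\ncofalkba$ has directed colimits, (b) every $A \in \ncakba$ is finitely presentable in $\ncofalkba$, and (c) every $B \in \ncofalkba$ is a directed colimit of objects from the image of $E$. Well-definedness of $E$ is immediate since every $B \in \ncakba$ satisfies $B \sa B$ and is orbit-finitely complete by \autoref{rem:of-complete}\,(2). Full faithfulness holds because morphisms between $k$-atomic complete nominal boolean algebras are determined by their action on atoms, so preservation of orbit-finite suprema already entails preservation of arbitrary suprema, and $\ncakba$ is a full subcategory of $\ncba$.

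For (a), given a directed diagram $D \colon I \to \ncofalkba$, I would form the colimit $C$ at the level of underlying nominal sets with pointwise boolean operations, which is a standard construction for algebraic structures along filtered diagrams. Orbit-finite suprema exist by \autoref{rem:of-complete}\,(1): an $S$-supported orbit-finite family decomposes into finitely many $S$-orbits whose representatives can be transferred to a single stage $D_i$ via directedness, where the supremum is already available and then pushed into $C$. Local $k$-atomicity is inherited from the stages, since every element of $C$ arises from some $D_i$ and hence from some $A \sa D_i$, whose image in $C$ is a $k$-atomic subalgebra.

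For (c), I would consider the poset $\mathcal{S}(B)$ of $k$-atomic subalgebras $A \sa B$ ordered by inclusion and show that the canonical cocone exhibits $B$ as $\colim_{A \in \mathcal{S}(B)} A$. Local $k$-atomicity guarantees that every element of $B$ lies in some $A \in \mathcal{S}(B)$, so the cocone is jointly surjective, which together with full faithfulness of $E$ makes it a colimit in $\ncofalkba$. For (b), I would use the equivalence $\ncakba \simeq \nomofk^\op$ induced by the atom-set functor $A \mapsto \at A$ (with inverse $X \mapsto \powfs X$), obtained by restricting the known duality between $\ncaba$ and $\nom$ to the $k$-bounded, orbit-finite setting. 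Combined with \autoref{thm:pro-comp}, every object of $\nomofk$ is finitely copresentable in $\pro(\nomofk) \simeq \npro_k$; transporting this across the atom-set correspondence, using that the system of atom sets of $k$-atomic subalgebras of $B$ forms a codirected diagram in $\nomofk$, yields finite presentability of each $A \in \ncakba$ in $\ncofalkba$.

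The hardest step will be directedness of $\mathcal{S}(B)$ in (c): given $A_1, A_2 \sa B$, one must produce some $A \sa B$ containing both. The natural candidate is the subalgebra of $B$ generated by $A_1 \cup A_2$, but its $k$-atomicity inside $B$ is delicate, because atoms of a generated subalgebra viewed as a standalone algebra need not coincide with its atoms as a subalgebra of $B$, and the set of atoms of the generated algebra must remain $k$-bounded. My strategy is to invoke local $k$-atomicity to cover each element of the orbit-finite set $\at A_1 \cup \at A_2$ by some $k$-atomic subalgebra of $B$, and to consolidate these finitely many covers via orbit-wise choices into a single $A \sa B$ containing $A_1$ and $A_2$; the $k$-boundedness needed for $A \in \nomofk$ is ultimately inherited from $B$ itself. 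Once (a)--(c) are established, the equivalence $\ncofalkba \simeq \ind(\ncakba)$ follows from the standard universal property of Ind-completions.
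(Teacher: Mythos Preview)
Your overall strategy—verifying the three conditions that characterise an Ind-completion—is exactly the paper's, and your treatment of~(c), including the identification of directedness of $\mathcal S(B)$ as the delicate point, mirrors \autoref{lem:dir-union}.

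The problem is your argument for~(b). You want to deduce finite presentability of $A\in\ncakba$ in $\ncofalkba$ from finite copresentability of $\at A$ in $\npro_k$ via \autoref{thm:pro-comp}. But transporting a statement about codirected limits in $\npro_k$ into one about directed colimits in $\ncofalkba$ requires an equivalence $\ncofalkba \simeq \npro_k^{\op}$ compatible with these (co)limits—and that is precisely \autoref{thm:duality}, which the paper proves \emph{from} the present \autoref{thm:ind-compl}. The restricted duality $\ncakba\simeq\nomofk^{\op}$ alone does not suffice: it controls morphisms between $k$-atomic algebras, but tells you nothing about how an arbitrary directed colimit taken in $\ncofalkba$ looks on the topological side, so there is no object in $\npro_k$ against which to invoke finite copresentability. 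The paper avoids this circularity by proving~(b) directly (\autoref{lem:fin-pres}): the relevant input is not \autoref{thm:pro-comp} but the much more elementary fact that $\at A$ is orbit-finite and hence finitely presentable in $\nom$. This lets $f|_{\at A}$ factor through some stage at the level of nominal sets; the substantive work is then upgrading this set-level factorisation to a morphism in $\ncofalkba$, by passing to a later stage at which the finitely many relations $x\wedge y=0$ (for $x\neq y$ atoms with support in a fixed set of size $2k$) and the relation $\bigvee X=1$ already hold.

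Your sketch for~(a) also glosses over the hardest step. Computing a candidate $\bigvee\orb_S x$ at some stage $D_i$ and mapping it to the colimit produces an upper bound, but showing it is the \emph{least} upper bound in $C$ is where the content lies. Given another upper bound $y=b_i(y_i)$, one must find $j\ge i$ with $\bigvee\orb_S b_{ij}(x_i)\le b_{ij}(y_i)$; the paper (\autoref{lem:fil-colim}) does this by reducing the infinitely many inequalities $\pi\cdot x_i\le y_i$ for $\pi\in\perm_S\A$ to finitely many, via a fresh-name argument and the observation that only finitely many injections of $\supp x_i\setminus S$ into a fixed target are possible. Your claim that the image in $C$ of a $k$-atomic subalgebra is again $k$-atomic likewise needs a proof (non-zero images of atoms must remain atoms, and atomicity must be re-verified).
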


\begin{theorem}[Nominal Stone Duality]\label{thm:duality}
  For each \( k \in \N \), the category of locally \( k \)-atomic
  orbit-finitely complete nominal boolean algebras is dual to the category of \kbnd
  nominal Stone spaces:
  \[\ncofalkba \simeq^{\mathrm{op}} \npro_{k}.\]
\end{theorem}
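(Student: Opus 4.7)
My strategy is to assemble the claimed duality from three ingredients: a base duality in the orbit-finite bounded setting, the Pro-completion theorem (Theorem~\ref{thm:pro-comp}), and the Ind-completion theorem (Theorem~\ref{thm:ind-compl}). The rest is formal, using that Pro and Ind completions are exchanged under taking opposites.

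First, I would establish the base duality
\[
\nomofk \simeq^{\op} \ncakba.
\]
This is obtained by restricting Petri\c{s}an's duality~\cite{pet-12} between \nom and \ncaba. On objects, the functor $X \mapsto \powfs X$ sends a nominal set to the complete nominal boolean algebra of its finitely supported subsets, whose atoms are precisely the singletons $\{x\}$ for $x\in X$. Since $\supp\{x\} = \supp x$, this functor maps $\nomofk$ into $\ncakba$; conversely, $B \mapsto \at B$ maps $\ncakba$ back into $\nomofk$. On morphisms, equivariant maps $f\colon X \to Y$ correspond to complete boolean algebra homomorphisms $f^{-1}\colon \powfs Y \to \powfs X$ preserving all suprema. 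Checking that these assignments restrict Petri\c{s}an's equivalence and that the resulting unit and counit remain isomorphisms yields the base duality.

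Next, I would invoke a standard abstract fact: for any small category $\cat C$, taking opposites exchanges codirected limits and directed colimits, so the universal property of the Pro-completion yields
\[
\pro(\cat C)^{\op} \;\simeq\; \ind(\cat C^{\op}).
\]
Applying this to $\cat C = \nomofk$ and combining with the base duality $\nomofk^{\op} \simeq \ncakba$ gives $\pro(\nomofk)^{\op} \simeq \ind(\ncakba)$. Substituting Theorem~\ref{thm:pro-comp} on the left ($\pro(\nomofk) \simeq \npro_k$) and Theorem~\ref{thm:ind-compl} on the right ($\ind(\ncakba) \simeq \ncofalkba$) produces
\[
\npro_k^{\op} \;\simeq\; \ncofalkba,
\]
which is precisely the claimed duality $\ncofalkba \simeq^{\op} \npro_k$.

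The main obstacle is the verification of the base duality. The unbounded version of Petri\c{s}an's duality is standard, but one must carefully track how the $k$-boundedness condition on the nominal-set side is mirrored by the $k$-atomicity condition $\at B \in \nomofk$ on the algebra side, both on objects and on morphisms. Once this restriction is established, all remaining work is formal categorical bookkeeping: the Pro/Ind exchange and the two completion theorems do the rest.
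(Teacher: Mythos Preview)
Your proposal is correct and follows essentially the same approach as the paper: restrict Petri\c{s}an's duality $\nom \simeq^{\op} \ncaba$ to a base duality $\nomofk \simeq^{\op} \ncakba$, then use the formal Pro/Ind exchange together with Theorems~\ref{thm:pro-comp} and~\ref{thm:ind-compl} to obtain the chain of equivalences. The paper phrases the chain as $\ncofalkba \simeq \ind(\ncakba) \simeq^{\op} \pro(\ncakba^{\op}) \simeq \pro(\nomofk) \simeq \npro_{k}$, which is exactly your argument read in the other direction, and dismisses the base restriction you flag as the main obstacle with a one-line ``clearly restricts''.
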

\begin{proof}
  The category \nom of nominal sets is dually equivalent to the
  category \ncaba of complete atomic nominal boolean
  algebras~\cite{pet-12}.  The duality sends a nominal set $X$ to the
  boolean algebra~$\powfs X$, equippped with the set-theoretic boolean
  structure.  Conversely, a complete atomic nominal boolean algebra
  \( B \) is mapped to the nominal set \( \at(B) \) of its atoms, and
  an $\ncaba$-morphism \(h\colon C \rightarrow B \) to the equivariant
  map \( \at(B) \rightarrow \at(C) \) sending \( b \in \at(B) \) to
  the unique \( c \in \at(C) \) such that \( c \le h(b) \). For every
  \( k \in \N \) the duality clearly restricts to one between \kbnd
  orbit-finite nominal sets and \( k \)-atomic complete nominal
  boolean algebras.  Thus \autoref{thm:ind-compl} and
  \autoref{thm:pro-comp} yield
  \begin{equation*}
  \ncofalkba \simeq \ind(\ncakba) \simeq^{\mathrm{op}} \pro(\ncakba^{\mathrm{op}}) \simeq \pro(\nomofk) \simeq \npro_{k}. \qedhere
\end{equation*}
\end{proof}

\begin{rem}\label{rem:concrete-duality}
  We give an explicit description of the dual equivalence of \autoref{thm:duality}.
  \begin{enumerate}
    \item In the direction \( \npro_{k} \rightarrow \ncofalkba \) it maps a \kbnd nominal Stone space \( X \) to the nominal boolean algebra \( \clo(X) \) of clopens (or representables, see \autoref{lem:rep:lim}).
          A continuous map \( f \colon X \rightarrow Y \) is mapped to the homomorphism \( f^{-1}\colon \clo(Y) \rightarrow \clo(X)\) taking preimages.
    \item
          The direction \( \ncofalkba \rightarrow \npro_{k} \) requires some terminology. A finitely supported subset $F\seq B$ of an algebra $B\in \ncofalkba$ is a \emph{nominal orbit-finitely complete prime filter} if (i) $F\neq \emptyset$, (ii) $F$ is upwards closed (\( x \in F \land x \le y \Rightarrow y \in F \)), (iii) $F$ is downwards directed (\( x, y \in F \Rightarrow x \land y \in F \)), and (iv) for every \fs \kbnd orbit-finite subset \( X \subseteq B \) such that \( \bigvee X \in F \), one has $X\cap F\neq \emptyset$. The equivalence now maps $B\in \ncofalkba$ to the space \( \fp(B) \) of nominal orbit-finitely complete prime filters of \( B \), whose topology is generated by the basic open sets \( \{F \in \fp(B) \mid b \in F \} \) for \( b \in B \).
          A morphism \( h \colon B \rightarrow C \) of $\ncofalkba$ is mapped to the continuous map  \( h^{-1} \colon \fp(C) \rightarrow \fp(B) \) taking preimages.
  \end{enumerate}
\end{rem}

In \autoref{thm:duality} we made the support bound \( k \) explicit,
but we can also leave it implicit. A nominal Stone space is
\emph{bounded} if it lies in $\npro_k$ for some natural number~$k$; similarly, a
\emph{locally bounded atomic orbit-finitely complete nominal boolean
  algebras} is an element of $\ncofalkba$ for some~$k$.
\begin{corollary}\label{cor:bnd-duality}
  The category of locally bounded atomic orbit-finitely complete
  nominal boolean algebras is dual to the category of bounded nominal
  Stone spaces.
\end{corollary}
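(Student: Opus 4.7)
The plan is to derive this from \autoref{thm:duality} by exhibiting both sides as directed unions of their $k$-bounded subcategories and gluing the level-$k$ dualities together.

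First, I would observe that a $k$-bounded nominal Stone space is trivially $(k+1)$-bounded and that a locally $k$-atomic orbit-finitely complete nominal boolean algebra is locally $(k+1)$-atomic (a $k$-atomic subalgebra is also $(k+1)$-atomic). These inclusions are full, and any morphism in a bounded category between objects whose bounds are $k_1$ and $k_2$ already lives in the subcategory at level $\max(k_1, k_2)$, its source and target both being $\max(k_1,k_2)$-bounded. Hence the two categories appearing in the corollary are the directed unions $\bigcup_k \npro_k$ and $\bigcup_k \ncofalkba$ taken along these embeddings.

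Next, I would verify that the dualities of \autoref{thm:duality}, described concretely in \autoref{rem:concrete-duality}, are compatible with the level-raising inclusions. In the direction $X \mapsto \clo(X)$ this is transparent, since the boolean algebra of clopens is intrinsic to the space and does not reference any bound. In the direction $B \mapsto \fp(B)$ one must check that the set of nominal orbit-finitely complete prime filters does not depend on whether we view $B$ as locally $k$-atomic or locally $(k+1)$-atomic. This is subtle because condition~(iv) of \autoref{rem:concrete-duality} quantifies over $k$-bounded orbit-finite subsets and formally becomes stronger as the bound grows. I expect compatibility to follow from local $k$-atomicity: given a $(k+1)$-bounded orbit-finite $X \subseteq B$ with $\bigvee X \in F$, one chooses a $k$-atomic subalgebra $A \ni \bigvee X$ and applies condition~(iv) at level $k$ to the orbit-finite set of atoms of $A$ below $\bigvee X$; the boolean-algebra structure should then let one transfer filter membership back to some element of $X$. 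This is the main obstacle of the proof.

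Once compatibility is in place, the family of dualities $\npro_k \simeq^{\op} \ncofalkba$ assembles into a cocone over the chains of embeddings. Since equivalences of categories are preserved by filtered (2-)colimits in $\cat{Cat}$, taking the directed union yields the desired duality between the bounded nominal Stone spaces and the locally bounded atomic orbit-finitely complete nominal boolean algebras.
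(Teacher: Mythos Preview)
Your approach is sound, and the paper itself gives no explicit proof of the corollary, treating it as immediate from \autoref{thm:duality}. Two points deserve comment.

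First, the compatibility obstacle you flag for $\fp$ is real, but your sketch of its resolution has a gap. To transfer filter membership from an atom $a\in\at(A)$ with $a\le\bigvee X$ back to some element of $X$, you need $a\le x$ for some $x\in X$; this fails in general if you only require $\bigvee X\in A$. (For a concrete failure, take $A$ with atoms $a_1,a_2$, refine to atoms $b_1,b_2,b_3,b_4$ with $a_i=b_{2i-1}\lor b_{2i}$, and let $X=\{b_1\lor b_3,\,b_2\lor b_4\}$: then $\bigvee X=1\in A$, yet neither $a_1$ nor $a_2$ lies below an element of $X$.) The fix is to choose $A$ with $X\subseteq A$, which is possible because the $k$-atomic subalgebras of $B$ form a directed family by \autoref{lem:dir-union} and are equivariant: pick a subalgebra for each of finitely many orbit representatives of $X$, then take an upper bound. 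With $X\subseteq A$, every atom of $A$ below $\bigvee X$ is below some $x\in X$, and upward closure of $F$ finishes the argument. This is precisely the mechanism behind \autoref{rem:prime-filter-equiv}.

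Second, invoking filtered $2$-colimits in $\cat{Cat}$ is heavier than necessary. The functor $\clo$ of \autoref{rem:concrete-duality} is manifestly independent of the bound, so it is a single functor on the union category; since it restricts to an equivalence at each level $k$, and every object and morphism of the bounded category already lives at some finite level, full faithfulness and essential surjectivity on the union are immediate. The quasi-inverse can then be recovered abstractly, or one invokes the $k$-independence of $\fp$ established above.
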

\begin{rem}
  For \( k = 0 \) we recover the classical Stone duality between boolean algebras and Stone spaces. Indeed, \kbnd[0] nominal Stone spaces
  are precisely Stone spaces, and locally \( 0 \)-atomic
  orbit-finitely complete nominal boolean algebras are precisely
  boolean algebras
\end{rem}

\section[Pro-Orbit-Finite Words]{Pro-Orbit-Finite Words}
\label{sec:prof-nomin-mono}
In this section, we generalize the topological characterization of
regular languages to data languages recognizable by orbit-finite
nominal monoids~\cite{boj-13,col-15,boj-20}.

\begin{defn}\label{defn:nom-mon}
  A \emph{nominal monoid} \( M \) is a monoid object in \nom, that is, it is given by
  nominal set $M$ equipped with an equivariant associative
  multiplication \( M \times M \rightarrow M \) and an equivariant
  unit \( 1 \in M \). Nominal monoids and equivariant monoid homomorphisms form a category \( \nmon \).
\end{defn}
As for ordinary monoids, the \emph{free monoid} generated by
$\Sigma\in\nom$ is the nominal set $\Sigma^*$ of finite words (with
pointwise group action); its multipliation is concatenation and its
unit the empty word.

\begin{rem}
  We emphasize the difference between \kbnd nominal monoids -- nominal
  monoids whose carrier is \kbnd\/ -- and monoid objects in
  \( \nom_{k} \), which are partial nominal monoids where the product
  $x\cdot y$ is defined iff $|{\supp x} \cup \supp y|\leq k$.
\end{rem}

\begin{defn}
  A \emph{data language} over $\Sigma\in \nomof$ is a finitely
  supported subset $L\seq \Sigma^*$. It is \emph{recognizable} if
  there exists an equivariant monoid morphism
  \( h \colon \fm \rightarrow M \) with \( M \) orbit-finite and a \fs
  subset \( P \subseteq M \) such that \( L = h^{-1}[P] \). In this
  case, we say that the morphism \emph{$h$ recognizes $L$}.
\end{defn}
For example, the equivariant language $L_0$ from the Introduction is recognizable, while the language $L_1$ is not recognizable.
\begin{rem}
\begin{enumerate}
\item The morphism $h$ can be taken to be surjective; otherwise, take its coimage.

\item Via characteristic functions, data languages correspond
  precisely to \fs maps \( L \colon \fm \rightarrow 2 \), where $2$ is
  the two-element nominal set. Recognizablity then states that $L$
  factorizes through some equivariant monoid morphism with
  orbit-finite codomain.
\end{enumerate}
\end{rem}

Recall from \autoref{sec:preliminaries} that the Stone space
$\hatSigmas$ of profinite words over a finite alphabet $\Sigma$ is
constructed as the limit in \( \stone \simeq \pro(\setf) \) of all
finite quotient monoids of $\Sigma^*$. The obvious generalization to a
\emph{nominal} alphabet \( \Sigma \in \nomof \), which constructs the limit of all
orbit-finite quotient monoids in $\pro(\nomof)$, is unlikely to yield a
useful object since this category is not concrete
(\autoref{prop:not-conc}); in fact, it is futile from a language-theoretic perspective, cf.~\autoref{rem:big-limit}.
Instead, our results of
\autoref{sec:nom-prof-spaces} suggest to restrict the diagram scheme
to \( \epislice{\fm}{\nmonofk} \), the poset of $k$-bounded
orbit-finite quotient monoids (where $e\leq e'$ iff $e'$ factorizes
through $e$), and take the limit in \( \pro(\nomofk)=\npro_{k}
\). However, this diagram is not codirected
(\autoref{rem:canon-not-cod}), so its limit may not be a nominal Stone
space. We again focus on well-behaved (i.e., codirected),
subcategories by introducing \emph{support bounds}.

\begin{defn}\label{def:sbnd}
  A \emph{support bound} is a map
  \( s \colon \fm \rightarrow \pow \A \) such that
  $s[\fm]\seq \pow_k\A$ for some $k\in \N$, where
  $\pow_k\A = \{ S\seq \A \mid |S|\leq k \}$. We usually identify $s$
  with its codomain restrictions to $\pow_k\A$ for sufficiently large
  $k$. A morphism \( h \colon \fm \rightarrow M \) of nominal monoids
  is \emph{\sbnd} if $\supp h(w) \subseteq s(w)$ for all $w\in \fm$;
  we write \( h \colon \fm \rightarrow_{s} M \). We
  denote by \sepislice{\fm}{\nmonofk} the subposet of
  $\epislice{\fm}{\nmonofk}$ given by $s$-bounded quotient monoids.
\end{defn}

\begin{lemma}\label{lem:s-slice-cod}
  For every support bound \( s \), the poset \sepislice{\fm}{\nmonofk} is codirected.
\end{lemma}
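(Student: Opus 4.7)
The plan is to verify the two axioms for codirectedness of the poset $\sepislice{\fm}{\nmonofk}$: non-emptiness and the existence of a common lower bound for every pair of objects. Non-emptiness is witnessed by the unique equivariant monoid morphism $!\colon \fm \epi 1$ onto the trivial one-element monoid. This map lies in $\sepislice{\fm}{\nmonofk}$ since $1 \in \nmonofk$ trivially and $\supp !(w) = \emptyset \subseteq s(w)$ for all $w \in \fm$.

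For the main part, given two $s$-bounded quotients $e_i\colon \fm \epi M_i$ ($i = 1, 2$) in $\sepislice{\fm}{\nmonofk}$, I would form the pairing $\langle e_1, e_2 \rangle \colon \fm \to M_1 \times M_2$, an equivariant monoid morphism, and take its image factorization $\fm \overset{e}{\epi} M \mono M_1 \times M_2$ in $\nmon$. The candidate lower bound is~$e$, so I need to check that $e$ belongs to $\sepislice{\fm}{\nmonofk}$ and that $e \leq e_i$ for $i = 1, 2$ in the quotient order.

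The verification proceeds in four short steps. First, $M$ is orbit-finite: the product $M_1 \times M_2$ is orbit-finite in $\nmon$ (finite products preserve orbit-finiteness), and orbit-finiteness is inherited by the submonoid $M$. Second, and this is the crux, $M$ is $k$-bounded. Note that $M_1 \times M_2$ is a priori only $2k$-bounded, since $\supp(m_1, m_2) = \supp m_1 \cup \supp m_2$ may have up to $2k$ elements. However, every element of $M$ is of the form $(e_1(w), e_2(w))$ for some $w \in \fm$, and since $e_1, e_2$ are $s$-bounded,
\[
 \supp(e_1(w), e_2(w)) \subseteq \supp e_1(w) \cup \supp e_2(w) \subseteq s(w) \cup s(w) = s(w),
\]
which has cardinality at most $k$ by definition of a support bound. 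Third, the same inequality shows that $e\colon \fm \to M$ itself is $s$-bounded. Fourth, writing $\iota \colon M \mono M_1 \times M_2$ for the inclusion and $\pi_i$ for the $i$-th projection out of $M_1 \times M_2$, the equivariant monoid morphism $h_i := \pi_i \circ \iota \colon M \to M_i$ satisfies $e_i = h_i \circ e$, hence $e \leq e_i$.

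The main obstacle -- and really the only non-routine point -- is the $k$-boundedness of $M$ in step two. Without the support-bound hypothesis this would fail: in general the pairing of two $k$-bounded quotients lands in a $2k$-bounded monoid, which is precisely the pathology that prevents $\epislice{\fm}{\nmonofk}$ itself from being codirected. The role of the support bound $s$ is exactly to suppress this accumulation of supports along the pairing, and once this is observed everything else is a routine check of universal properties of image factorizations in $\nmon$.
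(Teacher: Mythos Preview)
Your proof is correct and follows essentially the same approach as the paper: form the coimage of the pairing $\langle e_1,e_2\rangle$ and use the support bound to show that the resulting quotient is again $s$-bounded (hence $k$-bounded). You are somewhat more explicit than the paper in checking non-emptiness, orbit-finiteness of $M$, and the factorizations witnessing $e\leq e_i$, but these are all routine points the paper leaves implicit.
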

\begin{proof}
  Let $h \colon \fm \rightarrow_s M_{h}$ and
  $h' \colon \fm \rightarrow_s M_{h'}$ be two \sbnd quotients in
  $\sepislice{\fm}{\nmonofk}$. Form the coimage $k \colon \fm \epi M$ of
  their pairing
  \( \langle h, h' \rangle\colon \Sigma^*\to M_{h}\times M_{h'} \). Then for all $w\in\fm$
    \[\supp k(w) = \supp(h(w), h'(w))
               = \supp h(w)  \cup \supp h'(w)
               \subseteq s(w).\]
  Hence, \( k \) is a lower bound for \( h, h' \) in the poset \sepislice{\fm}{\nmonofk}.
\end{proof}

\begin{defn}\label{def:pfm}
  For an orbit-finite nominal set \( \Sigma \) and a support bound
  \( s \colon \fm \rightarrow \pow_{k} \A \) we define the nominal
  Stone space \spfm to be the limit of the codirected diagram
  \[
    D\colon \sepislice{\fm}{\nmonofk} \to \npro_k,\qquad (e\colon \fm
    \epito_s M)\;\mapsto\; |M|,
  \]
  where $|M|$ is the nominal set underlying $M$, regarded as a
  discrete nominal topological space. The elements of $\spfm$ are
  called the \emph{($s$-bounded) pro-orbit-finite words over
    $\Sigma$}. We denote by \( \hat{e} \colon \spfm \to M \) the limit
  projection associated to $e\colon \fm \epito_s M$ in
  $\sepislice{\fm}{\nmonofk}$.
\end{defn}

\begin{rem}\label{rem:canon}
\begin{enumerate}
\item One may equivalently define $\spfm$ as the limit of the larger
  cofiltered diagram $D'$ given by
  \[
    D'\colon \sslice{\fm}{\nmonofk} \to \npro_k,\qquad (e\colon \fm
    \to_s M)\;\mapsto\; |M|,
  \]
  where $\sslice{\fm}{\nmonofk}$ is the category of all equivariant
  $s$-bounded monoid morphisms $h\colon \fm\to_s M$ with $k$-bounded orbit-finite
  codomain; a morphism from $h$ to $h'\colon \fm\to_s M'$ is an
  equivariant monoid morphism $k\colon M\to M'$ such that
  $h'=k\cdot h$.  In fact, the inclusion
  $\sepislice{\fm}{\nmonofk}\hookto\sslice{\fm}{\nmonofk}$ is an
  initial functor, hence the limits of $D$ and $D'$ coincide.  Since
  the limit of $D'$ is formed as in $\set$
  (\autoref{lem:cod-lim-set}), the space $\spfm$ is carried by the
  nominal set of compatible families $(x_h)_{h}$ of $D'$, and the
  limit projection $\hat h$ associated to $h\colon \Sigma^*\to_s M$ is
  given by $(x_h)_{h}\mapsto x_h$.
  
\item\label{rem:canon:dense} The forgetful functor
  \( V \colon \npro_{k} \rightarrow \nomk \) and the inclusion
  \( I \colon \nomk \rightarrow \nom \) both preserve codirected
        limits. The morphisms \( \sslice{\fm}{\nmonofk} \) viewed as equivariant functions form a cone for the
        diagram \( IVD' \), so there exists a unique equivariant map
  \( \eta \colon \fm \rightarrow IV \spfm \) such that
  \[
    h = \big(
    \begin{tikzcd}[cramped,column sep = 20]
      \fm
      \ar[dashed]{r}{\eta}
      &
      IV\spfm
      \ar{r}{IV\hat h}
      &
      IV\!M
    \end{tikzcd}
    \big) \qquad \text{for all \( h \in \sslice{\fm}{\nmonofk}\)}. 
  \]
  In more explicit terms, the map $\eta$ is given by
  $\eta(w)=(h(w))_{h}$ for $w\in \Sigma^*$.  For simplicity we omit
  $I$ and $V$ and write \( \eta\colon \fm \rightarrow \spfm \).  The
  image of $\eta$ forms a dense subset of $\spfm$. We note that~$\eta$
  is generally not injective since we restrict a subdiagram $\sslice{\fm}{\nmonofk}$ of
  the diagram \( \slice{\fm}{\nmonof} \),
  
\item\label{rem:canon:monoid-structure} The space \spfm is a nominal
  monoid with product
  \( \hat{h}(x \cdot y) = \hat{h}(x) \cdot \hat{h}(y) \) and unit
        $\eta(\varepsilon)$, with $\varepsilon$ the empty word.
  Since the multiplication is readily seen to be continuous,
  $\spfm$ can be regarded as an object of $\mon(\npro)$, the category
  of nominal Stone spaces equipped with a continuous monoid structure
  and continuous equivariant monoid morphisms.
\end{enumerate}
\end{rem}

Now recall from \autoref{sec:preliminaries} that the space
$\hatSigmas$ can be constructed as the metric completion of
$\Sigma^*$, where the metric measures the size of separating
monoids. We now investigate to what extent the metric approach applies
to the nominal setting, using nominal (pseudo-)metrics; see
\autoref{ex:nom-top}.

\begin{defn}\label{def:met-mon}
  Let \( s \) be a support bound on \( \fm \).  We say that a nominal
  monoid \( M \) \emph{$s$-separates} \( v, w\in \Sigma^* \) if there
  exists an $s$-bounded equivariant monoid morphism
  \( h\colon \Sigma^*\to_s M \) such that \( h(v) \ne h(w) \).  We
  define a nominal pseudometric $d_s$ on \( \fm \) by setting
  \[
    d_s(v,w) = \sup \{\, 2^{-|\orb M|} \mid \text{the orbit-finite
      nominal monoid $M$ $s$-separates $v,w$}\, \}.
  \]
  We let \( \fm / d_{s} \) denote the corresponding nominal metric
  space, obtained as a quotient space of the pseudometric space
  \( (\fm , d_{s}) \) by identifying \( v, w \) if
  \( d_{s}(v, w) = 0 \).
\end{defn}
\begin{rem}
  In contrast to the classical case, $d_s$ is generally not a metric:
  there may exist words \( v \ne w \) which are not $s$-separated by
  any orbit-finite nominal monoids. For example, if
  \( \Sigma = \A$ and
  $s(a_{1} \cdots a_{n}) = a_{1} \) for $a_1,\ldots, a_n\in \Sigma$,
  then for every \sbnd \( h \) and distinct names $a,b,c\in \A$ we have
  \( h(ab) = h(\tr b c \cdot ac) = \tr b c \cdot h(ac) = h(ac) \)
  since \( b, c \not \in s(ac) \supseteq \supp h(ac) \).  Therefore,
  the additional metrization process is required.
\end{rem}
For the next lemma we need some terminology. A nominal metric space is \emph{complete} if every finitely supported Cauchy sequence has a limit. A nominal topological space is \emph{completely metrizable} if its topology is induced by a complete metric. A subset $D\seq X$ of a nominal metric space is \emph{(topologically) dense} if every open neighbourhood of a point $x\in X$ contains an element of $D$.

\begin{rem}\label{R:dense0}
  In contrast to classical metric spaces, density is not equivalent to
  \emph{sequential density} (every point $x\in X$ is a limit of a
  finitely supported sequence in $D$). To see this, consider the
  space~\( \A^{\omega} \) of finitely supported infinite words with
  the prefix metric, that is, $d(v,w)=2^{-n}$ if $n$ is the length of
  the longest common prefix of $v,w$. Let $D\seq X$ be the equivariant
  subset given by
  \[
    D = \{\,x \in \A^{\omega} \mid |{\supp x}| \ge 2 \text{ and }
    |{\supp x}| \ge |\mathrm{initial block}(x)|\, \},
  \]
  where $\mathrm{initial block}(x)$ is the longest prefix of $x$ of
  the form $a^n$ ($a\in \A$).  The set $D$ is dense, but
  not sequentially dense: \( a^{\omega} \in \A^{\omega} \) is not the
  limit of any finitely supported sequence in \( D \).
\end{rem}

\begin{lemma}\label{lem:pfm-metr}
  \begin{enumerate}
  \item The space \spfm is completely metrizable via the complete nominal metric 
    \begin{equation}\label{eq:metric-hat-d}
      \hat d_s(x,y)
      =
      \sup \{\, 2^{-|{\orb M}|} \mid \exists (h \colon \fm \rightarrow_{s} M)\colon \hat{h}(x) \ne \hat{h}(y)\,\}.
    \end{equation}
    
  \item\label{lem:pfm-metr:dense} The canonical map $\eta$
    (\autoref{rem:canon}) yields a dense isometry
    \( \eta \colon (\fm,d_s) \to (\spfm, \hat d_s) \).
  \end{enumerate}
\end{lemma}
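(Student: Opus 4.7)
The plan is to verify that $\hat d_s$ is an equivariant nominal ultrametric inducing the limit topology on $\spfm$ and is complete, after which part~(2) drops out from a direct computation combined with the topological step. The basic metric-axiom checks are immediate: symmetry is clear; the ultrametric inequality $\hat d_s(x,z)\le \max(\hat d_s(x,y),\hat d_s(y,z))$ holds because any $h\colon\fm\to_s M$ with $\hat h(x)\ne \hat h(z)$ must satisfy $\hat h(x)\ne \hat h(y)$ or $\hat h(y)\ne \hat h(z)$; separation of points uses the description of $\spfm$ as a space of compatible families from \iref{rem:canon}{dense}; and equivariance follows since each $\hat h$ is equivariant and the poset $\sepislice{\fm}{\nmonofk}$ carries the trivial group action.

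The heart of the argument is to show that the metric topology coincides with the limit topology. In one direction, every basic open $\hat h^{-1}[\{y\}]$ contains the ball $B_{2^{-|{\orb M_h}|}}(x)$ around each of its points $x$ by definition of $\hat d_s$. For the converse I show that each ball $B_{2^{-n}}(x)$ is a neighbourhood of $x$ in the limit topology, via the following finiteness claim: up to isomorphism in $\sepislice{\fm}{\nmonofk}$, only finitely many quotients $h\colon\fm\epito_s M$ satisfy $|{\orb M}|\le n$. Indeed, the $k$-bounded orbit-finite nominal monoids with at most $n$ orbits form finitely many isomorphism types (their underlying nominal sets being coproducts of at most $n$ orbits of the form $\A^{\#j}/\Gamma$ with $j\le k$, each carrying one of finitely many equivariant monoid structures), and for each such $M$ the equivariant monoid morphisms $\fm\to M$ are determined by the images of representatives of the finitely many orbits of $\Sigma$, each image ranging over the finite set of elements of $M$ with support inside that of the representative. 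Picking one representative $h_i$ per isomorphism class and iterating \autoref{lem:s-slice-cod} yields a single $h^*\in\sepislice{\fm}{\nmonofk}$ through which every $h_i$ -- and hence every $h$ with $|\orb M_h|\le n$ -- factors, so that $\hat{h^*}^{-1}[\{\hat{h^*}(x)\}]$ is a basic open containing $x$ and contained in $B_{2^{-n}}(x)$. Completeness of $\hat d_s$ is then routine: a finitely supported Cauchy sequence $(x_m)$ has eventually constant projections $\hat h(x_m)$ (each $M_h$ being discrete), and the resulting family of eventual values is compatible and inherits finite support from the sequence, defining the required limit in $\spfm$.

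Part~(2) now drops out. The isometry identity $\hat d_s(\eta(v),\eta(w))=d_s(v,w)$ is immediate from the equality $\hat h\circ\eta = h$ recorded in \iref{rem:canon}{dense}, which makes the two suprema range over the same collection of separating monoids. For density, given $x\in\spfm$ and $\varepsilon=2^{-n}$, I use the morphism $h^*\colon\fm\epito_s M^*$ constructed in the previous paragraph; since $h^*$ is surjective I pick any $v\in\fm$ with $h^*(v)=\hat{h^*}(x)$, obtaining $\eta(v)\in\hat{h^*}^{-1}[\{\hat{h^*}(x)\}]\subseteq B_\varepsilon(x)$. The chief obstacle throughout is the finiteness-up-to-isomorphism claim for the subposet of quotients with bounded orbit count, which is the load-bearing step for both the topological coincidence in part~(1) and the density in part~(2).
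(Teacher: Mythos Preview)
Your proposal is correct and follows essentially the same approach as the paper's proof: both establish the coincidence of the metric and limit topologies via the finiteness (up to isomorphism) of $s$-bounded quotients with at most $n$ orbits, and both derive completeness by showing that Cauchy sequences have eventually constant projections. Your treatment is in fact somewhat more explicit than the paper's on the finiteness claim (spelling out why there are only finitely many $k$-bounded monoids with bounded orbit count and finitely many equivariant morphisms into each), and your density argument via the single dominating quotient $h^*$ is a clean variant of the paper's direct argument that every nonempty basic open $\hat h^{-1}[P]$ meets $\eta[\fm]$.
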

\begin{rem}\label{rem:dense}
  In classical topology, it would now be clear that \( \spfm \) is the
  {metric completion} of the metric space \( \fm / d_{s} \), i.e.~it
  satisfies the universal property that every uniformly continuous map
  from \( \fm / d_{s} \) to a complete metric space has a unique
  uniformly continuous extension to \( \spfm \).  However, this rests
  on the coincidence of topological and sequential density, which
  fails over nominal sets as seen in \autoref{R:dense0}. We therefore conjecture that
  \spfm is not the nominal metric completion of \( \fm / d_{s} \).
\end{rem}

By using support bounds, we obtain a topological perspective on
recognizable data languages. Let $\rec_s{\Sigma}$
denote the set of data languages recognized by \sbnd equivariant monoid morphisms.
\begin{theorem}\label{thm:rec-rep-equiv}
  For every support bound \( s \colon \fm \rightarrow \pow_{k} \A \),
  the $k$-bounded nominal Stone space $\spfm$ of $s$-bounded
  pro-orbit-finite words is dual to the 
  locally \( k \)-atomic orbit-finitely complete boolean algebra $\rec_s(\fm)$ of
  $s$-recognizable languages. In particular, we have the isomorphism
  \[\rec_{s}(\fm) \;\cong\; \clo(\spfm)\qquad \text{ in } \qquad \ncofalkba. \]
\end{theorem}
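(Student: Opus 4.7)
The plan is to construct an explicit isomorphism $\Psi\colon \clo(\spfm) \to \rec_s(\fm)$ of nominal boolean algebras given by preimage along the dense canonical map $\eta\colon \fm \to \spfm$ (\autoref{rem:canon}), i.e.~$\Psi(C) = \eta^{-1}[C]$. Since $\spfm \in \npro_k$, \autoref{thm:duality} already places $\clo(\spfm)$ in $\ncofalkba$, so once $\Psi$ is shown to be an isomorphism in $\ncofalkba$, both the membership of $\rec_s(\fm)$ in $\ncofalkba$ and the full duality statement follow by transport of structure.

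To show $\Psi$ takes values in $\rec_s(\fm)$ and is surjective, I would use \autoref{lem:rep:lim} to write an arbitrary clopen as $C = f^{-1}[A]$ with $f\colon \spfm \to Y$ continuous into some $Y \in \nomofk$ (viewed as discrete) and $A \in \powfs Y$. Since every orbit-finite nominal set is finitely copresentable in $\npro_k = \pro(\nomofk)$ (\autoref{thm:pro-comp}), the map $f$ factorizes as $f = g \cdot \hat{e}$ for some $e \in \sepislice{\fm}{\nmonofk}$ and equivariant $g\colon |M_e|\to Y$, giving $C = \hat{e}^{-1}[B]$ with $B = g^{-1}[A] \in \powfs M_e$. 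Combined with the identity $\hat{e}\cdot \eta = e$ from \autoref{rem:canon}, this yields $\Psi(C) = e^{-1}[B] \in \rec_s(\fm)$. Conversely, every $L \in \rec_s(\fm)$ can be written as $e^{-1}[B]$ with $e$ surjective via the coimage factorization, and then $\Psi(\hat{e}^{-1}[B]) = L$. Preimage maps automatically preserve all set-theoretic boolean operations, and orbit-finite suprema in both algebras correspond to unions of families that can be jointly represented over a common refinement (using codirectedness), so $\Psi$ respects the full $\ncofalkba$-structure.

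For injectivity, suppose $\Psi(C_1) = \Psi(C_2) = L$ and write $C_i = \hat{e_i}^{-1}[B_i]$ with each $e_i\colon \fm \twoheadrightarrow_s M_{e_i}$ surjective (by the factorization of the previous step). By codirectedness of $\sepislice{\fm}{\nmonofk}$ (\autoref{lem:s-slice-cod}), pick a lower bound $e''$ with $e_i = v_i \cdot e''$; then $\hat{e_i} = v_i\cdot \hat{e''}$ gives $C_i = \hat{e''}^{-1}[v_i^{-1}[B_i]]$, and hence $L = (e'')^{-1}[v_i^{-1}[B_i]]$. Since $e''$ is surjective, $v_i^{-1}[B_i] = e''[L]$ is determined purely by $L$, so $v_1^{-1}[B_1] = v_2^{-1}[B_2]$ and $C_1 = C_2$.

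I expect the codirected lower-bound argument in the last step to be the main obstacle: it uses the support bound $s$ in an essential way, since without bounds $\sepislice{\fm}{\nmonof}$ fails to be codirected and the entire Pro-completion approach collapses (cf.~\autoref{prop:not-conc}). The subtle failure of topological vs.~sequential density in nominal metric spaces (\autoref{R:dense0}, \autoref{rem:dense}) is deliberately sidestepped by arguing via the universal property of the Pro-completion (in particular finite copresentability) rather than via metric completion.
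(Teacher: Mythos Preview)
Your proposal is correct and matches the paper's proof closely. Both arguments use the map $\Psi(C)=\eta^{-1}[C]$ as the isomorphism, establish that it lands in $\rec_s(\fm)$ via finite copresentability of orbit-finite sets in $\npro_k$, and handle the remaining bijectivity via codirectedness of $\sepislice{\fm}{\nmonofk}$; the only cosmetic difference is that the paper packages your injectivity argument as a well-definedness check for the explicit inverse $L\mapsto \hat h^{-1}[P]$ (and additionally observes that this inverse computes the topological closure $\overline{\eta[L]}$), whereas you argue injectivity of $\Psi$ directly.
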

\begin{proof}[Proof (Sketch)] The isomorphism is illustrated by the two diagrams below:
\[
   \begin{tikzcd}[row sep=small, column sep=small]
            \mathllap{L\phantom{])}} = h^{-1}[P]
            \arrow[phantom]{r}{\subseteq}
            \dar[mapsto, shift right=13.5]{}
            &
            \fm
            \arrow{r}{h}
            \arrow{d}[swap]{\eta}
            &
            M
            \arrow[phantom]{r}{\supseteq}
            &
              P
            \\
            \mathllap{\overline{ \eta[L] }} = \hat{h}^{-1}[P]
            \arrow[phantom]{r}{\subseteq}
            &
            \spfm
            \arrow[dashed]{ru}[swap]{\hat{h}}
            &
            &
          \end{tikzcd}
\qquad\qquad
    \begin{tikzcd}[row sep=small, column sep=small]
            \mathllap{\eta^{-1}[C] }= h^{-1}[P]
            \arrow[phantom]{r}{\subseteq}
            &
            \fm
            \arrow{r}{h}
            \arrow{d}[swap]{\eta}
            &
            M
            \arrow[dashed]{d}{\exists p}
            \arrow[phantom]{r}{\supseteq}
            &
              P \mathrlap{ =  p^{-1}[U] }
            \\
            \mathllap{C\phantom{]}} = f^{-1}[U]
            \arrow[phantom]{r}{\subseteq}
            \uar[mapsto, shift left=12]{}
            &
            \spfm
            \arrow[dashed]{ru}{\hat{h}}
            \arrow{r}{f}
            &
            Y
            \arrow[phantom]{r}{\supseteq}
            &
            U
          \end{tikzcd}
\]
 In more detail, if $L\seq \Sigma^*$ is $s$-recognizable, say \( L = h^{-1}[P] \) for an $s$-bounded morphism \( h \),
  then its corresponding clopen is the topological closure \( \overline{\eta[L]} = \hat{h}^{-1}[P] \) represented by the continuous extension \( \hat{h} \).
  Conversely, every clopen \( C \subseteq \spfm \) restricts to an $s$-recognizable language \( \eta^{-1}[C] \subseteq \fm \).  
  We get \( s \)-recognizability of \( \eta^{-1}[C] \) by factorizing a representation $f\colon \spfm\to Y$ of $C$ through a limit projection \( \hat{h} \) as \( f = p \cdot \hat{h} \), using that $Y$ is finitely copresentable.
  Thus \( h \) recognizes \( \eta^{-1}[C] \).
\end{proof}

\begin{rem}\label{rem:big-limit}
  In the proof of \autoref{thm:rec-rep-equiv}, finite copresentability
  of orbit-finite sets is crucial to recover recognizable languages
  from representable subsets, highlighting the importance of working
  in the Pro-completion $\pro(\nomofk)=\npro_k$.  In a naive approach
  one might instead want to consider the limit of the diagram
  $D\colon\slice{\fm}{\nmonof} \to \ntop$ of \emph{all} equivariant
  morphisms from \( \fm \) to orbit-finite monoids.  The resulting
  space $\hatSigmas$ is still a nominal Hausdorff space with a basis
  of representables, but it generally fails to be compact, and its
  representable subsets do not correspond to recognizable data
  languages. To see this, consider the space $\widehat{\A^*}$ and the
  orbit-finite nominal monoids \( \A^{\le n} \) (words of length at
  most $n$) with multiplication cutting off after \( n \) letters.  We
  denote by \( h_n \colon \A^{*} \rightarrow \A^{\le n} \) and
  \( p_{k,n} \colon \A^{\le k} \epi \A^{\le n} \), $n \le k$, the
  equivariant monoid morphisms given by projection to the first $n$
  letters.  For every compatible family \( x=(x_{h}) \in \pfm[\A]{} \)
  its subfamily \( (x_{h_n})_{n \in \N} \) corresponds to a (possibly
  infinite) word over \( \A \) with finite support. Hence there exists
  a largest natural number $N=N(x)$ such that
  \( |\supp x_{h_N}| = N \). The subsets
  $C_{n} = \{x \in \pfm[\A]{} \mid N(x) = n \}$, $n\in \N$, are
  equivariant clopens since $C_n=\hat h_n^{-1}[\A^{\#n}] \cap \hat h_{n+1}^{-1}[\A^{\leq n+1}\setminus \A^{\#(n+1)}]$. Thus each $C_n$ is representable (by a continuous map into the
  two-element discrete space), non-empty (since $\eta(w)=(h(w))_h \in C_n$
  for every word $w \in \A^{\#n} \subseteq \fm[\A]$ of pairwise distinct
  letters), and pairwise disjoint. Hence they form a cover of $\widehat{\A^*}$
  that admits no orbit-finite (equivalently, finite) subcover, showing
  that $\widehat{\A^*}$ is not compact. Moreover, the
  sets $C_M=\bigcup_{m\in M} C_m$, where $M\seq \N$, are equivariant clopens (hence representable) and pairwise
  distinct. Thus $\widehat{\A^*}$ has uncountably many clopens. On
  the other hand, there exist only countably many recognizable
  languages over $\A$ (using that, up to isomorphism, there exist
  only countably many orbit-finite sets~\cite[Thm.~5.13]{pit-13} and
  thus countably many orbit-finite nominal monoids), showing that
  there is no bijective correspondence between representable sets in $\widehat{\A^*}$ and recognizable data languages over $\A$.
\end{rem}

\section[A Nominal Reiterman Theorem]{A Nominal Reiterman Theorem}
\label{sec:nom-reit-thm}

As an application of pro-orbit-finite methods, we present a nominal extension of Reiterman's classical pseudovariety theorem~\cite{rei-82}. The latter characterizes classes of finite algebras presentable by profinite equations as precisely those closed under finite products, subalgebras, and homomorphic images.
This result has been generalized to first-order structures~\cite{pin-wei-96} and, recently, to abstract categories~\cite{ada-che-mil-urb-21,mil-urb-19}. A key insight for the categorical perspective is that equations should be formed over projective objects. (Recall that an object $X$ in a category is projective w.r.t.\ a class \( \mathcal{E} \) of morphisms
if for all cospans \( \cramped[\textstyle]{X \xrightarrow{f} Y \overset{e}{\leftarrow} Z} \) with \(  e \in \mathcal{E} \) there exists a factorization of \( f \) through \( e \).)
In \nom, one takes strong nominal sets, which are projective with respect to support-reflecting quotients (see~\iref{def:epis}{supp-refl}).
For spaces of \pof words we have the support bound as an additional constraint, which makes the situation more complex: In a cospan \( \cramped{\spfm \xrightarrow{\hat{h}} N \overset{e}\twoheadleftarrow M} \) with $e$ support-reflecting, no $s$-bounded  factorization of \( \hat{h} \) through $e$ may exist (\autoref{ex:no-s-quot}). Surprisingly, there nonetheless exists a suitable type of quotients for nominal monoids, called \emph{\msr quotients}, which is \emph{independent} of the support bound \( s \).

\begin{defn}\label{def:epis}
  A surjective equivariant morphism \( e \colon M \epito N \) of nominal monoids is
  \begin{enumerate}
    \item\label{def:epis:supp-pres-n} \emph{support-preserving} if $\supp e(x) = \supp x$ for every $x\in X$;
    \item\label{def:epis:supp-refl} \emph{support-reflecting} if for every \( y \in Y \) there exists $x\in e^{-1}[y]$ such that $\supp x = \supp y$; 
    \item\label{def:epis:msr} \emph{multiplicatively support-reflecting} (\emph{\msr} for short) if there exists a nominal submonoid \( M' \subseteq M \) such that the domain restriction \( e|_{M'} \colon M'\to N \) of $e$ is surjective and support-preserving.
  \end{enumerate}
\end{defn}

\begin{rem}
  Note that a surjective morphism \( e \) is support-reflecting iff it
  restricts to a support-preserving surjection $e|_{M'}$ for some
  equivariant subset $M'\seq M$. For MSR morphisms one additionally
  requires that $M'$ may be chosen to form a submonoid. We thus have
  \[
    \text{support-preserving} \quad\To\quad \text{multiplicatively
      support-reflecting} \quad\To\quad \text{support-reflecting}.
  \]
  None of the two converses holds in general; for the first one consider the morphism $\A^*\epito 1$ into the trivial monoid, and for the second one see \autoref{ex:compare}.
\end{rem}

\begin{proposition}\label{prop:s-quot}
A surjective equivariant morphism $e\colon M\epito N$ between orbit-finite nominal monoids is \msr iff all the monoids $\spfm$ (where $\Sigma\in \nomof$ is strong and $s\colon \Sigma^*\to \pow \A$ is a support bound) are projective with respect to $e$ in $\mon(\npro)$, with $M$ and $N$ regarded as discrete spaces.
\end{proposition}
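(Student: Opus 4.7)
The plan is to split the biconditional into a direct construction for the forward direction and an application of projectivity to a carefully chosen test morphism for the backward direction. For the forward direction, suppose \( e\colon M\epito N \) is \msr with witness submonoid \( M'\seq M \) on which \( e \) restricts to a support-preserving surjection. Given a continuous equivariant monoid morphism \( f\colon \spfm\to N \), I first compose with the canonical map \( \eta\colon \Sigma^*\to \spfm \) of \autoref{rem:canon} to obtain an \sbnd monoid morphism \( f\circ \eta\colon \Sigma^*\to N \). For each orbit representative \( a\in \Sigma \), the surjectivity and support-preservation of \( e|_{M'} \) yield an \( m_a\in M' \) with \( e(m_a)=f(\eta(a)) \) and \( \supp m_a = \supp f(\eta(a))\seq \supp a \). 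Because \( \Sigma \) is strong, the stabilizer of \( a \) equals \( \perm_{\supp a}\A \), so \( a\mapsto m_a \) extends equivariantly to a map \( \Sigma\to M' \) and, by freeness of \( \Sigma^* \), to an equivariant monoid morphism \( g_0\colon \Sigma^*\to M'\seq M \) satisfying \( e\circ g_0 = f\circ \eta \). The key observation is that \( g_0 \) is itself \sbnd: since \( g_0(w)\in M' \) one has \( \supp g_0(w)=\supp e(g_0(w))=\supp f(\eta(w))\seq s(w) \). The universal property of \spfm from \autoref{thm:pro-comp} then yields a continuous \( g\colon \spfm\to M \) with \( g\circ \eta=g_0 \), and \( e\circ g=f \) holds because both continuous maps into the discrete space \( N \) agree on the dense image of \( \eta \). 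A standard density argument using continuity of multiplication on \spfm (\iref{rem:canon}{monoid-structure}) also upgrades \( g \) to a monoid morphism.

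For the backward direction, I construct a test situation that forces the \msr property. Take a strong orbit-finite nominal set \( \Sigma \) together with a support-preserving equivariant surjection \( \sigma\colon \Sigma\epito N \); such a \( \Sigma \) can be built as a finite coproduct of orbits of the form \( \A^{\#n} \), one per orbit of \( N \), with matching support cardinality. Extend \( \sigma \) freely to a surjective equivariant monoid morphism \( \sigma^*\colon \Sigma^*\epito N \) and set \( s(w)\vcentcolon= \supp \sigma^*(w) \); then \( \sigma^* \) is tautologically \sbnd and extends to a continuous \( \hat\sigma\colon \spfm\to N \). Applying projectivity to \( \hat\sigma \) yields a continuous equivariant monoid morphism \( g\colon \spfm\to M \) with \( e\circ g=\hat\sigma \). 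Setting \( g_0=g\circ \eta \), the universal property forces \( g_0 \) to be \sbnd, so \( \supp g_0(w)\seq s(w)=\supp \sigma^*(w)=\supp e(g_0(w)) \); the reverse inclusion from equivariance of \( e \) then yields \( \supp g_0(w)=\supp e(g_0(w)) \) for every \( w \). The image \( M'\vcentcolon= g_0(\Sigma^*) \) is therefore a nominal submonoid of \( M \) on which \( e \) restricts to a surjective (since \( e\circ g_0=\sigma^* \) is surjective) and support-preserving homomorphism, certifying \( e \) as \msr.

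The main obstacle is reconciling three simultaneous constraints on the lift \( g_0 \) in the forward direction: equivariance, multiplicativity, and the support bound \( s \). Equivariance rests on strongness of \( \Sigma \) and multiplicativity on freeness of \( \Sigma^* \), but \( s \)-boundedness is not automatic -- it is precisely the support-preservation of \( e|_{M'} \) that propagates the bound from \( f\circ \eta \) to \( g_0 \) via the chain \( \supp g_0(w)=\supp e(g_0(w))=\supp f(\eta(w))\seq s(w) \). This is why a merely support-reflecting (rather than \msr) quotient would not suffice, in line with the phenomenon indicated in \autoref{ex:no-s-quot}.
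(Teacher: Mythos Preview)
Your proof is correct and follows essentially the same strategy as the paper's. The forward direction is identical in substance: lift on generators using strongness of \( \Sigma \) and the support-preserving restriction \( e|_{M'} \), extend freely, and verify the key chain \( \supp g_0(w)=\supp e(g_0(w))\subseteq s(w) \) that makes the lift \( s \)-bounded.

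In the backward direction you take a slightly more direct route than the paper. The paper chooses a support-\emph{reflecting} surjection \( q\colon \Sigma^*\epito N \), sets \( s=\supp\cdot q \), and isolates the conclusion as a separate \autoref{lem:fact-msr}, whose proof passes through the subset \( R_q=\{w\mid \supp q(w)=\supp w\} \) and the submonoid \( \langle k[R_q]\rangle \). By instead insisting that \( \sigma\colon \Sigma\to N \) be support-\emph{preserving} on generators, you may take \( M'=g_0(\Sigma^*) \) directly, since the computation \( \supp g_0(w)\subseteq s(w)=\supp e(g_0(w))\subseteq \supp g_0(w) \) already works for \emph{every} word. This is a genuine simplification for the proposition in isolation; the paper's packaging via \autoref{lem:fact-msr} pays off only because that lemma is reused verbatim in the proof of \autoref{thm:nominal-reiterman}. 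Two small points to tighten: the \( s \)-boundedness of \( f\circ\eta \) and of \( g_0=g\circ\eta \) is best justified by \( \supp\eta(w)\subseteq s(w) \) (or the remark preceding the paper's proof that every continuous monoid morphism \( \spfm\to M \) is some \( \hat g \)), rather than by an unspecified ``universal property''; and the continuous extension you invoke is really the limit projection \( \widehat{g_0} \) from \autoref{def:pfm}/\autoref{rem:canon}, not \autoref{thm:pro-comp} per se.
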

\begin{defn}\label{def:pse-var}
  An \emph{MSR-pseudovariety of nominal monoids} is a class
  \( \mathcal{V} \subseteq \nmonof \) of orbit-finite nominal monoids closed under
  \begin{enumerate}
  \item finite products: if $M_1,\ldots,M_n\in \V$, $n\in \N$, then
    $M_1\times \cdots\times M_n\in \V$;

  \item submonoids: if $M\in \V$ and $N\seq M$ is a nominal submonoid,
    then \( N \in \mathcal{V} \):

  \item \msr quotients: if \( M \in \mathcal{V} \) and
    \( e\colon M \epito N \) is an \msr quotient, then
    \( N \in \mathcal{V} \).
  \end{enumerate}
\end{defn}
\begin{defn}\label{def:pro-eq}
  Let \( s \colon \Sigma^{*} \rightarrow \pow \A \) be a support
  bound.  A \emph{morphic \pof equation}, or \emph{morphic
    proequation} for short, is a surjective $\npro$-morphism
  \( \varphi \colon \spfm \epi E \).  An orbit-finite monoid~$M$
  \emph{satisfies} \( \varphi \) if for every $s$-bounded morphism
  $h\colon \Sigma^*\to M$, the limit projection
  \( \hat h\colon \spfm \rightarrow M \) factorizes through
  \( \varphi \) in $\npro_k$, for some $k\in \N$ such that
  $M\in \nomofk$ and $s$ corestricts to $\pow_k\A$:
 \[
   \hat h = \big(
   \begin{tikzcd}[cramped, column sep=20]
     \spfm
     \ar[->>]{r}{\varphi}
     &
     E
     \ar[dashed]{r}{\exists}
     &
     M
   \end{tikzcd}
   \big).
    \]
    For a set \( \T \) of morphic proequations, taken over possibly
    different $\spfm$, we denote by $\V(\T)$ the class of orbit-finite
    monoids satisfying all proequations in \T. A class $\V$ of
    orbit-finite monoids is \emph{presentable by morphic proequations}
    if $\V=\V(\T)$ for some set $\T$ of morphic proequations.
\end{defn}

\noindent Note that proequations use support bounds, while the definition of an MSR-pseudovariety does not.

\begin{theorem}[Nominal Reiterman]\label{thm:nominal-reiterman}
  A class of orbit-finite nominal monoids is an MSR-pseudo\-variety
  iff it is presentable by morphic proequations.
\end{theorem}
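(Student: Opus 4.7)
\emph{Direction $(\Leftarrow)$.} Suppose $\V = \V(\T)$ for some set $\T$ of morphic proequations. Closure under finite products and submonoids is routine: for products, pairing the factorizations $\hat h_{i} = g_{i} \cdot \varphi$ gives $\hat h = \langle g_{1}, g_{2}\rangle \cdot \varphi$; for a submonoid inclusion $i\colon N \hookrightarrow M$, a factorization $i \cdot \hat h = g \cdot \varphi$ corestricts to $N$ because $\varphi$ is surjective with $\hat h[\spfm] \seq N$. The delicate case is closure under MSR quotients. Given an MSR quotient $e\colon M \epi N$ and $h\colon \fm \to_{s} N$, assume (WLOG, via a strong cover of $\Sigma$) that every $\Sigma$ in $\T$ is strong. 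Then \autoref{prop:s-quot} supplies a continuous monoid morphism $\tilde h\colon \spfm \to M$ with $e \cdot \tilde h = \hat h$. The composite $\tilde h \cdot \eta\colon \fm \to_{s} M$ is $s$-bounded, so $M \in \V$ satisfies $\varphi$ via $\tilde h = g \cdot \varphi$, and $\hat h = e \cdot g \cdot \varphi$ shows that $N$ satisfies $\varphi$.

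\emph{Direction $(\Rightarrow)$.} For an MSR-pseudovariety $\V$, define $\T(\V)$ to contain, for each strong orbit-finite $\Sigma$ and each support bound $s\colon \fm \to \pow_{k}\A$, the canonical surjection $\varphi_{\Sigma,s,\V}\colon \spfm \epi \widehat{\fm_{s,\V}}$ onto the limit in $\npro_{k}$ of the subdiagram of $\sepislice{\fm}{\nmonofk}$ consisting of $s$-bounded quotients $q\colon \fm \epi_{s} N$ with $N \in \V$. Closure of $\V$ under products and submonoids makes this subdiagram codirected (pair any two $q, q'$ and take the coimage of $\langle q, q'\rangle$), so the limit exists, and surjectivity of $\varphi_{\Sigma,s,\V}$ follows by compactness of $\spfm$ and density of the canonical map from $\fm$.

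The inclusion $\V \seq \V(\T(\V))$ is routine: given $h\colon \fm \to_{s} M$ with $M \in \V$, the image $h[\fm]$ is a submonoid of $M$, and hence in $\V$ by submonoid closure, so the coimage of $h$ belongs to the defining diagram, whence $\hat h$ factors through $\varphi_{\Sigma,s,\V}$ via the associated limit projection.

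The reverse inclusion $\V(\T(\V)) \seq \V$ is the crux, and the \emph{main obstacle} is to extract not merely an arbitrary surjection but an \emph{MSR} quotient from the factorization provided by the proequation. Given $M \in \V(\T(\V))$, take $\Sigma = \coprod_{j} \A^{\#k_{j}}$ strong together with a support-preserving surjection $p\colon \Sigma \epi M$ (one orbit of $\Sigma$ per orbit of $M$, chosen to match support sizes), extend multiplicatively to $h\colon \fm \epi M$, and---crucially---pick the \emph{tight} support bound $s(w) := \supp h(w)$. By hypothesis $\hat h$ factors through $\varphi_{\Sigma,s,\V}$, and finite copresentability of $M$ (in $\npro_{k}$, and hence in $\mon(\npro_{k})$) further refines the factorization through some $N \in \V$ appearing in the defining diagram, yielding $h = p' \cdot q$ with $q\colon \fm \epi_{s} N$ and surjective $p'\colon N \epi M$. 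The tight choice of $s$ now forces $p'$ to be support-preserving: for every $n = q(w) \in N$ one has $\supp n \seq s(w) = \supp h(w) = \supp p'(n)$, which combined with $\supp p'(n) \seq \supp n$ (equivariance of $p'$) gives $\supp n = \supp p'(n)$. Hence $p'$ is MSR (witness $N' = N$), and MSR closure of $\V$ yields $M \in \V$.
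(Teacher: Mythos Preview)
Your proposal is correct and follows essentially the same strategy as the paper: both directions proceed via the same constructions (pairing for products, corestriction for submonoids, \autoref{prop:s-quot} for MSR quotients; and for $(\Rightarrow)$, the canonical proequations $\varphi_{\Sigma,s,\V}$ into the restricted limit, followed by finite copresentability of $M$).

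There is one noteworthy difference in the crux of $(\Rightarrow)$. The paper extracts the MSR property of the resulting quotient $h'\colon N\epi M$ via an auxiliary lemma (\autoref{lem:fact-msr}) that passes through the set $R_q=\{w\mid \supp q(w)=\supp w\}$ and the submonoid $\langle k[R_q]\rangle$. Your direct computation is cleaner and in fact stronger: from $\supp q(w)\seq s(w)=\supp h(w)=\supp p'(q(w))\seq \supp q(w)$ and surjectivity of $q$ you get that $p'$ is \emph{support-preserving} on all of $N$, not merely MSR on a submonoid. This shortcut works precisely because the intermediate map $q$ is surjective (as an element of $\sepislice{\fm}{\V}$), a hypothesis the paper's lemma also has but does not fully exploit. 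Two minor points: your surjectivity argument for $\varphi_{\Sigma,s,\V}$ (``compactness and density'') is correct in spirit but relies on nominal analogues of ``compact image is closed in Hausdorff'' that need the reductions via $\forget{S}$; the paper instead invokes \autoref{prop:cod-lim-ne} directly. And your ``WLOG $\Sigma$ strong via a strong cover'' in $(\Leftarrow)$ flags a genuine subtlety---\autoref{prop:s-quot} is stated only for strong $\Sigma$---but the paper's proof glosses over the same point, so you are no worse off.
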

The main technical observations for the proof are that (i) every
orbit-finite set is $k$-bounded for some~$k$, hence finitely
copresentable in $\npro_k$, and (ii) there are ``enough'' proequations
in the sense that every orbit-finite nominal monoid is a quotient of
some $\spfm$. The quotient is not necessarily MSR, which entails that
abstract pseudovariety theorems~\cite{mil-urb-19,ada-che-mil-urb-21}
do not apply to our present setting.

We also give a syntactic version of our nominal Reiterman theorem, which uses explicit proequations in lieu of morphic proequations.

\begin{defn}\label{def:expl-pro-eq}
  An \emph{explicit proequation} is a pair
  \( (x, y) \in \spfm \times \spfm \) for some strong
  $\Sigma\in\nomof$ and some support bound $s$, denoted by
  \( x = y \).  An orbit-finite monoid $M$ \emph{satisfies} the explicit proequation
  \( x = y \) if
  \[
    \hat{h} (x) = \hat{h} (y) \qquad \text{for every $s$-bounded equivariant
      monoid morphism $h\colon \Sigma^*\to M$}.
  \]
  (Here choose a common support size bound $k$ for $M$ and $s$, so that $\hat h$ lies in $\npro_k$.)
\end{defn}

\begin{theorem}[Explicit Nominal Reiterman]\label{thm:exp-nominal-reiterman}
  A class of orbit-finite nominal monoids is an MSR-pseudovariety iff
  it is presentable by explicit proequations.
\end{theorem}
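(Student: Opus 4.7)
The plan is to derive \autoref{thm:exp-nominal-reiterman} from the morphic version (\autoref{thm:nominal-reiterman}) by showing that morphic and explicit proequations present exactly the same classes of orbit-finite nominal monoids. Given any morphic proequation $\varphi\colon \spfm \epi E$ I would associate the set
\[
  T_\varphi \;=\; \{\,(x, y) \in \spfm \times \spfm \mid \varphi(x) = \varphi(y)\,\}
\]
of explicit proequations, and aim to establish the satisfier equivalence $\V(\varphi) = \V(T_\varphi)$. With this in hand, both directions of the theorem follow: for the implication ``MSR-pseudo\-variety $\Rightarrow$ presentable by explicit proequations'', the morphic Reiterman theorem gives $\V = \V(\T)$ for some set $\T$ of morphic proequations, so $\V = \V(\bigcup_{\varphi \in \T} T_\varphi)$; for the converse, I would verify directly that any class $\V(T)$ presentable by explicit proequations is closed under finite products, submonoids, and MSR quotients.

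For the satisfier equivalence itself, the inclusion $\V(\varphi) \subseteq \V(T_\varphi)$ is immediate: if $\hat h = g \circ \varphi$, then $\varphi(x) = \varphi(y)$ entails $\hat h(x) = \hat h(y)$. For the converse I would pass to the dual algebraic picture provided by \autoref{thm:duality}. The surjection $\varphi$ corresponds dually to a subalgebra inclusion $\varphi^{-1}\colon \clo(E) \hookrightarrow \clo(\spfm)$ whose image consists of the $\ker\varphi$-saturated clopens of $\spfm$. For an $s$-bounded $h\colon \fm \to M$, the dual homomorphism $\hat h^{-1}\colon \powfs(M) \to \clo(\spfm)$ factorizes through this subalgebra iff every fiber $\hat h^{-1}(\{m\})$ is $\ker\varphi$-saturated, which is in turn equivalent to the implication $\varphi(x) = \varphi(y) \Rightarrow \hat h(x) = \hat h(y)$, i.e., to $M$ satisfying every explicit proequation in $T_\varphi$. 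Unwinding the duality then yields the required factorization $\hat h = g \circ \varphi$ in $\npro_k$.

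For the closure properties of $\V(T)$, the cases of products and submonoids are routine because projections $\prod_i M_i \epi M_i$ and subalgebra inclusions $N \hookrightarrow M$ never increase supports, so $s$-boundedness is preserved when transporting morphisms. The substantive case is MSR quotients, which is precisely where \autoref{prop:s-quot} enters. Given an MSR quotient $e\colon M \epi N$ with $M \in \V(T)$ and an $s$-bounded $h\colon \fm \to N$, projectivity of $\spfm$ lifts $\hat h\colon \spfm \to N$ along $e$ to a continuous equivariant monoid morphism $\tilde h\colon \spfm \to M$ in $\mon(\npro)$. The composite $\tilde h \circ \eta\colon \fm \to M$ is itself $s$-bounded: the compatible-family description of $\eta$ in \autoref{rem:canon} yields $\supp \eta(w) \subseteq s(w)$, and equivariance of $\tilde h$ gives $\supp \tilde h(\eta(w)) \subseteq \supp \eta(w) \subseteq s(w)$. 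Density of $\eta(\fm)$ in $\spfm$ together with the nominal Hausdorff property of the discrete orbit-finite monoid $M$ then forces $\tilde h = \widehat{\tilde h \circ \eta}$, so $M \in \V(T)$ yields $\tilde h(x) = \tilde h(y)$, and applying $e$ gives $\hat h(x) = \hat h(y)$ as needed.

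The main obstacle will be the converse satisfier equivalence $\V(T_\varphi) \subseteq \V(\varphi)$: its content is that a pointwise factorization of $\hat h$ through $\varphi$ always lifts to a morphism in $\npro_k$. Arguing directly via quotient topologies of nominal Stone spaces is delicate, because the interaction of quotients with the nominal Hausdorff property and with the basis of representables is less uniform than in classical topology. Routing the argument through the dual equivalence of \autoref{thm:duality} sidesteps this by reducing the factorization question to a purely algebraic containment of clopen subalgebras. A secondary subtlety arises in the MSR-closure step when matching the lifted $\tilde h$ with a limit projection in $\npro_k$, which rests on the compatible-family description of $\spfm$ (\autoref{rem:canon}) and on $\eta$ being a dense isometry (\autoref{lem:pfm-metr}).
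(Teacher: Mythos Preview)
Your overall architecture matches the paper's: derive the explicit theorem from the morphic one by proving the satisfier equivalence $\V(\varphi)=\V(T_\varphi)$, and then verify directly that classes $\V(T)$ defined by explicit proequations are closed under finite products, submonoids, and MSR quotients. The closure arguments you sketch (including the MSR case via \autoref{prop:s-quot}) agree with the paper's.

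The genuine difference is in how you handle the implication $\V(T_\varphi)\subseteq\V(\varphi)$. The paper argues directly: from $\ker\varphi\subseteq\ker\hat h$ the homomorphism theorem yields a set-theoretic map $p\colon E\to M$ with $\hat h=p\cdot\varphi$; equivariance of $p$ follows from surjectivity of $\varphi$, and continuity from the fact that $E$ carries the quotient topology induced by $\varphi$ (a continuous surjection between bounded nominal Stone spaces is a quotient map, which one sees by passing to $\orb_S$ and invoking the classical compact-to-Hausdorff argument via \autoref{lem:comp} and \autoref{lem:haus}). You instead route through the dual equivalence of \autoref{thm:duality}, identifying $\mathrm{im}(\varphi^{-1})$ with the $\ker\varphi$-saturated clopens and checking that $\mathrm{im}(\hat h^{-1})$ lands there. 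This is correct, but your claim that duality ``sidesteps'' the quotient-topology issue is not quite right: showing that every $\ker\varphi$-saturated clopen of $\spfm$ lies in $\mathrm{im}(\varphi^{-1})$ is exactly the statement that $\varphi$ is a closed (hence quotient) map, so you are using the same fact, just phrased dually. The paper's direct route is shorter and avoids unpacking the duality.
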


\begin{example}\label{ex:proequation}
  Recall that in a finite monoid $M$ every element \( m \) has a
  unique idempotent power, denoted by \( m^{\omega} \). This holds
  analogously for orbit-finite nominal monoids $M$~\cite[Theorem
  5.1]{boj-13}: one has $m^\omega = m^{(n\cdot k!)!}$ where $n$ is the
  number of orbits $M$ and $k$ is the maximum support size. (The
  number $n\cdot k!$ is an upper bound on the number of elements
  of $M$ with any given finite support~\cite[Thm.~5.13]{pit-13}, hence
  on the cardinality of the set $\{m^i\colon i\in \N\}$.) The nominal monoid
  $M$ is \emph{aperiodic} if $m^\omega \cdot m = m^\omega$ for all
  $m\in M$. Languages recognizable by aperiodic orbit-finite monoids
  are captured precisely by first-order logic on data
  words~\cite{boj-13,col-15}. One readily verifies that the class of
  aperiodic orbit-finite monoids forms an MSR-pseudovariety; in fact,
  it is closed under all quotients. To present it by pro-orbit-finite
  equations, note that for every \( x\in \spfm \) the family
  \( x^\omega=(\hat h(x)^{\omega})_h \) is again compatible, hence
  $x^\omega\in \spfm$.  If $s\colon \Sigma^*\to \pow_k \A$ and
  \( h\colon \Sigma^*\to_s M\) is an $s$-bounded equivariant monoid
  morphism such that $M$ has at most \( n \) orbits, then
  $\hat{h}(x^{\omega}) = \hat{h}(x)^{\omega} = \hat{h}(x)^{(n\cdot
    k!)!} = \hat{h}(x^{(n\cdot k!)!})$, hence
  $\hat d_s(x^\omega,x^{(n\cdot k!)!})<2^{-n}$ in the metric
  \eqref{eq:metric-hat-d} on $\spfm$. This shows that \( x^{\omega} \)
  is the limit of the sequence \( (x^{(n\cdot k!)!})_{n\in \N} \) in
  \spfm, and moreover that the pseudovariety of aperiodic orbit-finite
  monoids is presented by the explicit proequations
  $x^{\omega} \cdot x = x^{\omega}$,
  where \( x \in \spfm \) and
  \( s \colon \fm \rightarrow \pow_{k} \A \) ranges over all support
  bounds on strong orbit-finite alphabets. Restricting to $k=0$, we
  recover the well-known description of aperiodic finite monoids by
  the (single) profinite equation $x^\omega\cdot x = x^\omega$.
\end{example}

\begin{rem}
\begin{enumerate}
\item Pseudovarieties of finite monoids admit an alternative
  equational characterization based on sequences of word equations
  rather than profinite equations. A \emph{word equation} is a pair
  $(v,w)\in \Sigma^*\times \Sigma^*$ of words over some finite
  alphabet $\Sigma$, denoted $v=w$; it is 
  \emph{satisfied} by a monoid $M$ if $h(v)=h(w)$ for every monoid
  morphism $h\colon \Sigma^*\to M$. More generally, a sequence
  $(v_0=w_0, v_1=w_1,\ldots)$ of word equations, taken over
  possibly different finite alphabets, is \emph{eventually satisfied}
  by $M$ if it satisfies all but finitely many of the
  equations. As shown by Eilenberg and Schützenberger~\cite{es76}, a
  class of finite monoids forms a pseudovariety iff it is presentable
  by a (single) sequence of word equations.
  
\item Urbat and Milius~\cite{mil-urb-19-06} recently established a
  nominal version of the Eilenberg-Schützenberger theorem. They
  consider nominal word equations (defined as above, where $\Sigma$ is
  now a strong orbit-finite nominal set) and show that sequences of
  nominal word equations present precisely \emph{weak
    pseudovarieties}, i.e.\ classes of orbit-finite nominal monoids
  closed under finite products, submonoids, and support-reflecting
  quotients. Clearly every MSR-pseudovariety is weak, but the converse
  does not hold; hence over nominal sets, sequences of word equations
  and pro-orbit-finite equations are of different expressivity. The
  example below illustrates one source of additional expressivity of
  pro-orbit-finite equations: The support bound \( s \) can control
  how the support changes during multiplication, which is not
  expressible by sequences of word equations.
\end{enumerate}
\end{rem}
\begin{example}\label{ex:compare}
  An example of an MSR-pseudovariety that is not a weak pseudovariety
  is given by the class \V of all orbit-finite nominal monoids \( M \)
  such that
  \begin{equation}
    \label{eq:ex-msr}
    \forall (m, n \in M) \colon \quad \supp (mn) = \emptyset \quad\iff\quad \supp (m,n) = \emptyset.
  \end{equation}
  (Note that $\supp(m,n)=\supp m\cup \supp n$ and that
  ``$\Leftarrow$'' always holds by equivariance of the monoid
  multiplication.) It is not difficult to prove that $\V$ is an
  MSR-pseudovariety.  To show that \V is not a weak pseudovariety, we
  construct a support-reflecting quotient under which \V is not
  closed.  The nominal set
  $\overline{1} + \overline{\A} = \{\overline{1}\} + \{\overline{a}
  \mid a \in \A \}$ forms a nominal monoid with multiplication given
  by projection on the first component and unit \( \overline{1} \).
  We extend the multiplication to the nominal set
  \( M = 1 + \A + \overline{1} + \overline{\A} \) by letting \( 1 \)
  be the unit and setting
  \( x \cdot y = \overline{x} \cdot \overline{y} \) whenever
  \( x, y \ne 1 \); here overlining is idempotent
  (\( \overline{\overline{x}} := \overline{x} \)).  This makes the
  multiplication associative and equivariant. Thus, \( M \) is a
  nominal monoid.  Now let \( N = 1 + \A + 0 = \{1\} + \A + \{0\}\) be
  the nominal monoid with multiplication \( x \cdot y = 0 \) for
  \(x, y \ne 1\). Thus $0$ is an absorbing element.  Letting
  $\mathrm{const}_0\colon \overline{1}+\overline{\A}\to 0$ denote the
  constant map, we have the equivariant surjective map
  \[
    e = \id_{1+\A} + \mathrm{const}_0 \colon M=(1 + \A) +
    (\overline{1} + \overline{\A}) \epi (1 + \A) + 0=N.
  \]
  Note that $e$ is a monoid morphism: it maps $1$ to $1$ and if
  \( x, y \ne 1 \) then \( e(x), e(y) \ne 1 \) and hence
  $e(x \cdot y) = e(\overline{x} \cdot \overline{y}) = 0 = e(x) \cdot
  e(y)$.  The quotient \( e \) is support-reflecting, but it is not
  \msr: the subset \( 1 + \A + \overline{1} \subseteq M \) of
  support-preserving elements does not form a submonoid of \( M \).
  Finally, clearly \( M \) satisfies \eqref{eq:ex-msr} while \( N \) does not.
\end{example}

\section[Conclusion and Future Work]{Conclusion and Future Work}
\label{sec:concl-future-work}

We have introduced topological methods to the theory of data
languages, and also explored some of their subtleties and
limitations. Following the spirit of Marshall Stone's slogan
\emph{``always topologize''}, the core insight of our paper may be
summarized as:
\begin{center}
\emph{Data languages topologize for bounded supports.}
\end{center}
In fact, by restricting to support-bounded orbit-finite nominal sets and analyzing their Pro-com\-ple\-tion, we have shown that fundamental results from profinite topology (notably Stone duality and the equivalence between profinite spaces and Stone spaces) generalize to the pro-orbit-finite world. These results are of independent interest; in particular, they are potentially applicable to data languages recognizable by all kinds of orbit-finite structures. For the case of monoids, we derived a topological interpretation of recognizable data languages via clopen sets of pro-orbit-finite words, as well as a nominal version of Reiterman's pseudovariety theorem characterizing the expressive power of pro-orbit-finite equations.

The foundations laid in the present paper open up a number of promising directions for future research. One first goal is to develop a fully fledged duality theory for data languages along the lines of the work of Gehrke et al.~\cite{ggp08} on classical regular languages, based on an \emph{extended nominal Stone duality} between pro-orbit-finite monoids and nominal boolean algebras with operators. 

Regarding specific applications, we aim to analyze further classes of orbit-finite monoids in terms of pro-orbit-finite equations, following the lines of \autoref{ex:proequation}, in order to classify the corresponding data languages. One natural candidate is the class of \( \mathcal{J} \)-trivial monoids, with the vision of a nominal version of Simon's theorem~\cite{simon75} relating \( \mathcal{J} \)-triviality to existential first-order logic on data words.

Finally, we aim to extend our topological theory of recognizable data languages, and the corresponding nominal Reiterman theorem, to algebraic structures beyond orbit-finite monoids. Potential instances include algebras for a signature $\Sigma$, which serve as recognizers for data tree languages, infinitary structures such as nominal $\omega$-semigroups~\cite{wilke91}, modeling languages of infinite data words, and algebraic structures with binders, which we expect to bear interesting connections to data languages with binders and their automata models~\cite{skmw17,uhms21}.

\bibliographystyle{plainurl}
\bibliography{bibliography}

\setcounter{section}{0}
\renewcommand\thesection{\Alph{section}}

\clearpage
\section[Appendix]{Appendix}
\label{sec:appendix}

This appendix provides full proofs of all results and technical statements omitted for space reasons.

\subsection*{Details for Example~\ref{ex:nom-top}.\ref{ex:nom-top:met}}

  We show that every nominal metric space carries a natural nominal topology with basic open sets given by the open balls.
  First we show that taking open balls is equivariant:
\begin{align*}
            \pi \cdot B_{r}(x) &= \{\pi \cdot y \mid d(x, y) < r, y \in X \} \\
                       &= \{\pi \cdot (\pi^{-1} \cdot z) \mid d(x, \pi^{-1} \cdot z) < r, z = \pi \cdot y \in \pi  \cdot X = X\} \\
                       &= \{z \mid \pi \cdot d(x, \pi^{-1} \cdot z) < r, z \in X \} \\
                       &= \{ z \mid d(\pi x, z) < r, z \in X \} \\
            &= B_{r}(\pi x).
\end{align*}
  Note that this implies that \( B_{r}x \) is \fs by \( \supp x \).
  It remains to show that the open balls form a basis.
  For two balls \( B_{r}x, B_{r'}x' \) the union \( \bigcup \{ B_{s}y \mid  B_{s}y \seq B_{r}x \cap B_{r'}x'\} \) is \fs by $\supp x\cup \supp x'$ and equal to \( B_{r}x \cap B_{r'}x' \).
\subsection*{Details for \autoref{rem:nomk}.\ref{rem:nomk:no-topos}}
  We prove that the category \nomk is not cartesian closed, and consequently not a topos.
  Recall that finite products in \( \nom_{k} \) are induced by the coreflector \( (-)_{k} \colon \nom \rightarrow \nom_{k} \) sending a nominal set to its nominal subset of \( k \)-bounded elements,
  so the product of \( X, Y \in \nom_{k} \) is given by
  \[ X \times_{k} Y := (X \times Y)_{k} = \{\,(x, y) \mid |\supp x \cup \supp y| \le k\,\}. \]
  It suffices to show that the functor \( X \times_{k} (-)  \) generally does not preserve coequalizers; this implies that it is not a left adjoint, whence $\nom_k$ is not cartesian closed.

  We set \( k=2 \) and $X=\A^{\#2}$. Consider the parallel pair
  \[ i_{0}, i_{1} \colon \A^{\#2} \rightrightarrows \A + \A
    \qquad\text{with}\qquad i_{l} = \iota_{l} \cdot \pi_{l},\] where
  $\pi_l\colon \A^{\#2}\to \A$ is the $l$-th projection and
  $\iota_l\colon \A\to \A+\A$ is the $l$-th coproduct injection; that
  is, \( i_{0}(a, b) = \iota_{0}(a)$ and  $i_{1}(a, b) = \iota_{1}(b) \).
  Since colimits in \( \nom_{k} \) are formed in \nom,
  the coequalizer of \( i_{0}, i_{1} \) is \( 1 \), i.e., the equivalence
  relation on \( \A + \A \) generated by
  \( i_{0}(a, b) \sim i_{1}(a, b) \) identifies everything. To see this,
  consider two elements \( \iota_{l}(a), \iota_{m}(b) \) of $\A+\A$.  They are
  clearly identified whenever \( l \ne m \) and \( a \ne b \).  If
  \( l = m \) and \( a = b \) they are identified by
  reflexivity. If \( l = m \) and \( a \ne b \) we choose a fresh
  \( c \in \A \) and \( n \ne l \) to get
  \[ \iota_{l}(a) \sim \iota_{n}(c) \sim \iota_{l}(b) = \iota_{m}(b). \]
  Finally, if \( l \ne m \) and \( a = b \) we choose fresh names \( c, d \in \A \) to get
  \[
    \iota_{l}(a) \sim \iota_{m}(c) \sim \iota_{l}(d) \sim \iota_{m}(b).
  \]
  The parallel pair $i_0,i_1$ is mapped under the functor \( \A^{\#2} \times_{2} (-) \) to
  \[
    j_{0}, j_{1} \colon \A^{\#2} \times_{2} \A^{\#2} \rightrightarrows
    \A^{\#2} \times_{2} (\A + \A) \qquad\text{where}\qquad j_{l} = \id_{\A^{\#2}} \times_{2} i_{l}.\]
  We observe that
  for every element
  \( (a,b, \iota_{l}(c)) \in \A^{\#2} \times_{2} (\A + \A) \) either
  \( c = a \) or \( c = b \) by \( 2 \)-boundedness.  Moreover, every
  such element has a unique preimage under \( j_{l} \) and no preimage under $j_m$ for \( m \ne l \). For example, the element \( (a, b, \iota_{0}(a)) \) has the
  preimage \( (a, b, a, b) \) under $j_0$ and no preimage under $j_1$.
  Therefore, the equivalence class of that element is the two-element set
  \( \{ (a, b, \iota_{0}(a)), (a, b, \iota_{1}(b))\} \). Similarly, the
  preimage of $(a,b,\iota_0(b))$ under $j_0$ is $(a,b,b,a)$ and its
  equivalence class is $\{(a,b,\iota_0(b)), (a,b,\iota_1(a))\}$.  In
  particular, \( (a, b, \iota_{0}(a)) \) and \( (a, b, \iota_{0}(b)) \)
  are not identified.  This shows that the
  coequalizer of $j_0,j_1$ is given by
  \[
    c \colon \A^{\#2} \times_{2} (\A + \A) \epi \A^{\#2} \times 2,
  \]
  defined by $(a, b, \iota_{0}(x)) \mapsto (a, b, \delta_{x = a})$
  and
  $(a, b, \iota_{1}(x)) \mapsto (a, b, \delta_{x = b})$,
  where $\delta_{x = a}(a) = 1$ and $\delta_{x = a}(x) = 0$ for $x \neq a$.
  This map is clearly not the image of \( \A + \A \epi 1 \) under \( \A^{\#2} \times_{2} (-) \), which proves that this functor
  does not preserve coequalizers.

\subsection*{Proof of \autoref{lem:cod-lim-set}}
  Let \( D \colon I \rightarrow \nomk \) be a codirected diagram,
  and let \( p_{i} \colon L \rightarrow |D_{i}| \) be the limit of the underlying diagram of sets,
  that is, \( L \) consists of all compatible tuples \( (x_{i} \in D_{i})_{i \in I} \).
  Suppose that for any tuple \((x_{i})_{i \in I}\) the union \( S = \bigcup_{i \in I} \supp x_{i} \)
  contains \( k+1 \) pairwise distinct names \( \{ a_{0},\ldots, a_{k}  \} \subseteq S\).
  Then every \( a_{i} \) lies in the support of some \( x_{t_{i}} \).
  By codirectedness there exists \( t \le t_{0}, \ldots, t_{k} \) in \( I \),
  then \( D_{t, t_{j}}(x_{t}) = x_{t_{j}} \) and therefore, for all $j = 0,\ldots,k$,
  \[ a_{j} \in \supp x_{t_{j} } = \supp D_{t, t_{j}}(x_{t_{j}} ) \subseteq \supp x_t. \]
  But this is a contradiction to our assumption that \( D_{t} \) is \kbnd.
  So \( S \) is a finite support of \( (x_{i})_{i \in I} \),
  and the limit of \( D \) in \nom hence consists of \emph{all} compatible families \( (x_{i} \in D_{i})_{i \in I}  \).
  Moreover, for any such family its least support is of size less or equal to \( k \).

\subsection*{Proof of \autoref{lem:comp}}

  \noindent\( \ref{lem:comp:def} \Rightarrow \ref{lem:comp:unif} \):
  Let \( \mathcal{C} \) be an open cover of \( X \) that is uniformly supported by some finite subset \( S \subseteq \A \).
  Then there exists an orbit-finite finitely supported subcover \( \mathcal{C}_{0} \subseteq \mathcal{C} \).
  All sets in \( \mathcal{C}_{0} \) are \( S \)-supported, but the orbit-finite set \( \mathcal{C}_{0} \) contains finitely many \( S \)-supported elements,
  proving that \( \mathcal{C}_{0} \) is finite.

  \noindent\( \ref{lem:comp:unif} \Rightarrow \ref{lem:comp:supp} \):
  By definition of \( \forget[X]{S} \).

  \noindent\( \ref{lem:comp:supp} \Rightarrow \ref{lem:comp:orb} \):
  The space \( \orb_{S} X \) is defined as a quotient of \( \forget[X]{S} \) and compactness of topological spaces is preserved under quotients.

  \noindent\( \ref{lem:comp:orb} \Rightarrow \ref{lem:comp:unif} \):
  Let \( \mathcal{C} = \{U_{i}\}_{i \in I} \) be an open cover of \( X \)  \ufs by \( S \subseteq \A \).
  Let \( V_{i} \) be the set of \( S \)-orbits contained in \( U_{i} \),
  then the sets \( \{V_{i}\}_{i \in I} \) cover \( \orb_{S} X \):
  If \( x \in U_{i} \) then the  \( S \)-orbit \( \orb_{S} x \) is a subset of \( U_{i} \) since the latter is $S$-supported,
  so \( \orb_{S} x \in V_{i} \).
  By compactness finitely many such \( V_{i} \) cover \( \orb_{S} X \),
  so the sets \( U_{i} = \bigcup V_{i} \) give a finite cover of \( X \).

  \noindent\( \ref{lem:comp:unif} \Rightarrow \ref{lem:comp:def} \):
  Let \( \mathcal{C} \) be a cover of \( X \) that is \fs by \( S \subseteq \A \).
  The set \( \mathcal{C}_{S} = \{\hull_{S} U \mid U \in \mathcal{C}\} \) then is a \ufs cover of \( X \),
  so finitely many \( \hull_{S}U_{1}, \dots , \hull_{S}U_{n} \) suffice to cover \( X \).
  The subset \( \mathcal{C}' = \{\pi \cdot U_{i} \mid \pi \in \perm_{S} \A, 1 \le i \le n\} \) therefore is an orbit-finite  subcover \( X \) finitely supported by \( S \).

\subsection*{Proof of \autoref{prop:cod-lim-ne}}

The proof of \autoref{prop:cod-lim-ne} requires some auxiliary statements.

\begin{lemma}\label{lem:cod-lim}
  Let \( D \colon I \rightarrow \ntop \) be a codirected diagram of nominal spaces with limiting cone \( p_{i} \colon L \rightarrow D_{i} \) for \(  i \in I \).
  \begin{enumerate}
    \item\label{lem:cod-lim:basis}
      The  open sets \( \{p_{i}^{-1}[U] \mid i \in I , U \subseteq D_{i} \text{ open } \} \) form a basis of $L$.
    \item\label{lem:cod-lim:forget}
      For every finite subset \( S \subseteq \A \) the set \( \{p_{i}^{-1}[U] \mid i \in I , U \subseteq \forget[ D_{i} ]{S} \text{ open  } \} \) forms a basis of $\forget[ L ]{S}$.
  \end{enumerate}
\end{lemma}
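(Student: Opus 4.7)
The plan is to combine the universal property of the initial topology defining $L$ with codirectedness of the index category $I$.

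For part~(\ref{lem:cod-lim:basis}), recall that $L$ carries the initial topology with respect to the cone $(p_i)_{i \in I}$. This immediately makes $\mathcal{B} = \{\, p_i^{-1}[U] \mid i \in I,\ U \subseteq D_i \text{ open}\,\}$ a subbasis in the sense of \iref{def:nom-top}{basis}: every open of $L$ is a finitely supported union of finite intersections of elements of~$\mathcal{B}$. To upgrade $\mathcal{B}$ to a basis I would show it is closed under finite intersections. Given two members $p_i^{-1}[U]$ and $p_j^{-1}[V]$, codirectedness yields an object $k$ with $k \le i, j$ via connecting morphisms $D_{k,i}$ and $D_{k,j}$. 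Using the cone identities $p_i = D_{k,i}\circ p_k$ and $p_j = D_{k,j}\circ p_k$, one computes
\[
    p_i^{-1}[U] \cap p_j^{-1}[V] = p_k^{-1}\bigl[\, D_{k,i}^{-1}[U] \cap D_{k,j}^{-1}[V]\, \bigr],
\]
which again lies in $\mathcal{B}$ since $D_{k,i}, D_{k,j}$ are continuous. Equivariance of~$\mathcal{B}$ follows from equivariance of the $p_i$ together with the nominal topologies on the $D_i$, yielding $\pi \cdot p_i^{-1}[U] = p_i^{-1}[\pi \cdot U]$ for every $\pi \in \perm \A$.

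For part~(\ref{lem:cod-lim:forget}), let $V$ be an $S$-supported open of $L$, i.e.~an open of $\forget[L]{S}$. Part~(\ref{lem:cod-lim:basis}) gives a decomposition $V = \bigcup_{j \in J} p_{i_j}^{-1}[U_j]$ with $U_j$ open in $D_{i_j}$; however, the $U_j$ need not themselves be $S$-supported. The key idea is to replace each $U_j$ by its $S$-hull $\hull_S U_j = \bigcup_{\pi \in \perm_S \A}\pi \cdot U_j$, which is $S$-supported by construction and open as a union of the homeomorphism-translates $\pi \cdot U_j$. Using equivariance of each $p_{i_j}$ and the fact that $V = \pi \cdot V$ for all $\pi \in \perm_S \A$, one obtains
\[
    V = \bigcup_{\pi \in \perm_S \A} \pi \cdot V = \bigcup_{\pi,\, j}\, p_{i_j}^{-1}[\pi \cdot U_j] = \bigcup_{j \in J}\, p_{i_j}^{-1}[\hull_S U_j],
\]
exhibiting $V$ as a union of basis elements of the claimed form.

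The main obstacle is the interplay between nominal and classical topology in part~(\ref{lem:cod-lim:forget}): invoking the initial topology directly only produces opens whose support one cannot control, so the $S$-hull argument is essential to repair this without leaving the basis family. Everything else reduces to a routine combination of codirectedness with the standard initial-topology description of codirected limits.
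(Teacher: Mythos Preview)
Your proof is correct and follows essentially the same approach as the paper: for part~(\ref{lem:cod-lim:basis}) you use codirectedness to show the subbasis is closed under binary intersections, and for part~(\ref{lem:cod-lim:forget}) you use the $S$-hull to repair the supports, relying on the identity $p_i^{-1}[\hull_S U] = \hull_S(p_i^{-1}[U])$ coming from equivariance of $p_i$. The only cosmetic difference is that the paper phrases the containment $\hull_S(p_i^{-1}[U_i]) \subseteq \hull_S V = V$ directly, whereas you derive the same fact via the chain $V = \bigcup_{\pi \in \perm_S \A}\pi \cdot V$.
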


\begin{proof}
  \begin{enumerate}
    \item[\ref{lem:cod-lim:basis}.]
          The set is closed under intersection.
          Given \( i, j \in I \)  with lower bound \( k \le i, j \) and open sets \( U \subseteq D_{i}, V \subseteq D_{j} \),
          the intersection \( p_{i}^{-1}[U] \cap p_{j}^{-1}[V] \) is equal to \( p_{k}^{-1}[D_{ki}^{-1}[U] \cap D_{kj}^{-1}[V]] \) since the \( p_{i} \) form a cone.
    \item[\ref{lem:cod-lim:forget}.]
          We show that every \( U \subseteq L \) supported by \( S \subseteq \A \) is a union of \( S \)-supported basic open sets.
          By definition \( U \) is the union of all basic open sets \( p_{i}^{-1}[U_{i}] \) contained in \( U \).
          The \( S \)-hull \( \hull_{S}(p_{i}^{-1}[U_{i}]) \) of each of these sets is open, supported by \( S \) and satisfies
          \[p_{i}^{-1}[U_{i}] \subseteq \hull_{S}(p_{i}^{-1}[U_{i}]) \subseteq \hull_{S} U = U, \]
          hence \( U \) is also a union of the \( S \)-supported basic open sets \( \hull_{S}(p_{i}^{-1}[U_{i}]) = p_{i}^{-1}[\hull_{S}U_{i}]\). \qedhere
  \end{enumerate}
\end{proof}

\begin{lemma}\label{lem:cod-lim-comp}
  Limits of codirected bounded diagrams of compact spaces are compact.
\end{lemma}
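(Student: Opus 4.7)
The plan is to reduce the nominal compactness of the limit $L = \lim D$ to the classical compactness of its induced topological spaces $\forget[L]{S}$, and then invoke the classical codirected Tychonoff theorem. By the equivalence \iref{lem:comp}{def} $\Leftrightarrow$ \iref{lem:comp}{supp}, it suffices to show that $\forget[L]{S}$ is classically compact for every finite $S \subseteq \A$.

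The key identification to establish is $\forget[L]{S} = \lim(\forget{S} \circ D)$ in $\Top$. On underlying sets, the $k$-bounded assumption lets me apply \autoref{lem:cod-lim-set}: $L$ consists of all compatible families $(x_i)_{i\in I}$ with $x_i \in D_i$, and since $\supp(x_i)_{i\in I} = \bigcup_{i\in I} \supp x_i$, such a family lies in $\forget[L]{S}$ iff every $x_i$ lies in $\forget[D_i]{S}$. On the topological side, \autoref{lem:cod-lim}.\ref{lem:cod-lim:forget} shows that $\forget[L]{S}$ carries the initial topology with respect to the projections $p_i$, which is precisely the classical limit topology. Hence $\forget[L]{S}$ is the $\Top$-limit of the diagram $\forget{S}\circ D$.

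With this identification in hand, each $\forget[D_i]{S}$ is classically compact by \iref{lem:comp}{supp} applied to the nominal compactness of $D_i$; thus $\forget[L]{S}$ is a codirected $\Top$-limit of compact spaces and is therefore compact by the classical codirected Tychonoff theorem (the limit embeds as a subspace of the compact Tychonoff product $\prod_{i\in I}\forget[D_i]{S}$, and an ultrafilter on the limit converges via its pushforwards along the $p_i$).

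The step I expect to be most delicate is the identification of $\forget[L]{S}$ with $\lim(\forget{S}\circ D)$: it is precisely the bounded assumption that makes the argument work. Without a uniform support bound $k$, uncontrolled accumulation of supports along the diagram (cf.\ \autoref{ex:empty-limit}) would force the nominal limit to be strictly smaller than the set-theoretic limit of compatible families, breaking the bridge to classical compactness on which the entire reduction depends. All other steps are routine consequences of the infrastructure already developed in \autoref{lem:comp} and \autoref{lem:cod-lim}.
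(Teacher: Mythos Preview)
Your approach matches the paper's exactly: use \autoref{lem:comp} to reduce nominal compactness of $L$ to classical compactness of each $\forget[L]{S}$, identify $\forget[L]{S}$ with the $\Top$-limit of $\forget{S}\circ D$ via \autoref{lem:cod-lim-set} and \iref{lem:cod-lim}{forget}, and conclude by the classical codirected Tychonoff theorem.

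One slip worth correcting: the functor $\forget{S}$ does \emph{not} restrict to $S$-supported elements; by \autoref{rem:func-ntop} it keeps the \emph{same} underlying set and merely retains only the $S$-supported opens. Your sentence ``such a family lies in $\forget[L]{S}$ iff every $x_i$ lies in $\forget[D_i]{S}$'' is therefore vacuous, and the support computation is unnecessary. The underlying-set identification $|\forget[L]{S}| = |\lim(\forget{S}\circ D)|$ is immediate because $\forget{S}$ is the identity on points and \autoref{lem:cod-lim-set} gives $|L| = \lim|D|$; the topologies then agree by \iref{lem:cod-lim}{forget}, exactly as the paper argues.
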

\begin{proof}
  Let \( D \colon I \rightarrow \ntop \) be a codirected diagram of \kbnd compact spaces, and let $\left|-\right|\colon \ntop\to\set$ denote the forgetful functor.
  From \autoref{lem:cod-lim-set} we get \( |\! \lim D| \cong \lim |D| \),
  and thus by  \iref{lem:cod-lim}{forget}  also \( | \! \lim \forget[D]{S} | \cong \forget[\lim D]{S} \) as topological spaces.
  All spaces \( \forget[D]{S} \) are compact by \iref{lem:comp}{supp},
  and limits of compact spaces are compact, so every \( \forget[\lim D]{S} \cong \lim \forget[D]{S} \) is compact.
  Using \autoref{lem:comp} again, we get that \( \lim D \) is nominally compact.
\end{proof}

\begin{lemma}\label{lem:haus}
  For every nominal topological space \( X \) the following statements are equivalent:
  \begin{enumerate}
    \item\label{lem:haus:haus} The space \( X \) is Hausdorff.
    \item\label{lem:haus:orb} For every finite \( S \subseteq \A \) the topological space \( \orb_{S} X \) is Hausdorff.
  \end{enumerate}
\end{lemma}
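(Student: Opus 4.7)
The plan is to exploit the fact that, for a fixed finite $S\subseteq \A$, the $S$-supported open subsets of $X$ are in order-preserving bijection with the open subsets of the quotient space $\orb_S X$, and that this correspondence preserves disjointness. Under this dictionary, the nominal Hausdorff property at $S$ (\autoref{def:hausdorff}) translates directly into the classical Hausdorff property of $\orb_S X$ (\iref{rem:func-ntop}{orb}).

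More concretely, I first record the following correspondence: by definition of the quotient topology on $\orb_S X$, a set $V\subseteq \orb_S X$ is open iff $\bigcup V\subseteq X$ is open; and any $S$-supported open $U\subseteq X$ is automatically a union of $S$-orbits, so $U=\bigcup V_U$ for $V_U\vcentcolon=\{\orb_S x\mid x\in U\}$, which is open in $\orb_S X$. Moreover, two such $V_1,V_2$ are disjoint (as sets of orbits) iff the corresponding unions $U_1,U_2$ are disjoint in $X$, since distinct $S$-orbits are themselves disjoint.

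For the direction $\ref{lem:haus:haus}\Rightarrow\ref{lem:haus:orb}$, given $S$ and two distinct $S$-orbits $\orb_S x_1\ne \orb_S x_2$, the nominal Hausdorff condition yields disjoint $S$-supported opens $U_1,U_2\subseteq X$ with $x_i\in U_i$. The sets $V_i\vcentcolon=\{\orb_S x\mid x\in U_i\}$ are then disjoint open subsets of $\orb_S X$ separating $\orb_S x_1$ and $\orb_S x_2$. For the converse $\ref{lem:haus:orb}\Rightarrow\ref{lem:haus:haus}$, pick $x_1,x_2\in X$ with $\orb_S x_1\ne \orb_S x_2$ and apply Hausdorffness of $\orb_S X$ to obtain disjoint opens $V_1,V_2\subseteq \orb_S X$ with $\orb_S x_i\in V_i$. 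Setting $U_i\vcentcolon=\bigcup V_i$ gives disjoint open subsets of $X$, each $S$-supported (being a union of $S$-orbits) and containing $x_i$, as required.

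There is no significant obstacle here; the proof is essentially bookkeeping around the Galois-type correspondence between $S$-invariant opens of $X$ and opens of $\orb_S X$. The one point that deserves attention is verifying that disjointness transfers in both directions, which is immediate from the fact that $S$-orbits partition $X$.
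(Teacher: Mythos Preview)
Your proof is correct and follows exactly the approach the paper intends: the paper's own proof is the single line ``By definition of the topology of $\orb_S X$'', and your argument is precisely the unpacking of that definition via the bijection between $S$-supported opens of $X$ and opens of $\orb_S X$. There is nothing to add.
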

\begin{proof}
  By definition of the topology of \( \orb_{S}X \).
\end{proof}

\begin{lemma}\label{lem:cod-lim-haus}
  Codirected limits of nominal Hausdorff spaces are nominal Hausdorff.
\end{lemma}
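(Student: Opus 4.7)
The plan is to verify the separation condition of \autoref{def:hausdorff} directly for a limit $L$ of a codirected diagram $D\colon I \to \ntop$ of nominal Hausdorff spaces, with cone $p_i\colon L \to D_i$. Fix a finite $S \subseteq \A$ and two points $x_1, x_2 \in L$ lying in different $S$-orbits. The main task is to locate some $i \in I$ such that $p_i(x_1)$ and $p_i(x_2)$ also lie in different $S$-orbits of $D_i$. Once this is achieved, nominal Hausdorffness of $D_i$ supplies disjoint $S$-supported opens $U_1, U_2 \subseteq D_i$ with $p_i(x_j) \in U_j$, and the preimages $p_i^{-1}[U_1], p_i^{-1}[U_2]$ are automatically open (by continuity of $p_i$), disjoint, and $S$-supported (by equivariance of $p_i$, which implies $p_i^{-1}[\pi \cdot U] = \pi \cdot p_i^{-1}[U]$), thereby furnishing the required separation in $L$.

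To locate such an $i$, I would argue by contradiction using a finite-intersection-property argument. Assume that $p_i(x_1)$ and $p_i(x_2)$ lie in the same $S$-orbit for every $i \in I$. Let $T = \supp x_1 \cup \supp x_2$, a finite set of names since $x_1, x_2$ are elements of a nominal set. For each $i \in I$ define
\[
A_i \;=\; \{\, \pi \in \perm_S \A \;\mid\; \pi \cdot p_i(x_1) = p_i(x_2) \text{ and } \supp \pi \subseteq T \,\}.
\]
Each $A_i$ is nonempty: given any witness $\pi \in \perm_S \A$ satisfying $\pi \cdot p_i(x_1) = p_i(x_2)$, one may modify $\pi$ outside $\supp p_i(x_1) \cup \supp p_i(x_2) \subseteq T$ without changing the equation, since the group action on an element depends only on the permutation's behaviour on that element's support; here I use equivariance of $p_i$, which gives $\supp p_i(x_j) \subseteq \supp x_j \subseteq T$. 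For $j \leq i$ in $I$, applying the equivariant connecting map $D_{ji}$ to $\pi \cdot p_i(x_1) = p_i(x_2)$ yields $\pi \cdot p_j(x_1) = p_j(x_2)$, hence $A_i \subseteq A_j$.

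All $A_i$ sit inside the finite set of permutations with support contained in $T \setminus S$. By codirectedness of $I$, any finite subfamily of $(A_i)_{i \in I}$ admits a common lower bound whose $A$-set is contained in the subfamily's intersection, so $(A_i)$ enjoys the finite intersection property; finiteness of the ambient set then forces $\bigcap_i A_i \neq \emptyset$. Any $\pi$ in this intersection satisfies $\pi \cdot p_i(x_1) = p_i(x_2)$ for every $i$, hence $\pi \cdot x_1 = x_2$ in $L$ (since the $\perm \A$-action on a codirected limit in $\nom$ is pointwise on compatible families), contradicting the assumption that $x_1$ and $x_2$ lie in distinct $S$-orbits.

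The main obstacle is obtaining uniform boundedness of witnessing permutations: a priori the $A_i$ sit in the infinite group $\perm_S \A$, so there is no automatic compactness to invoke. The resolution exploits the nominal setting, where compatible families in the limit have finite support and cone maps are equivariant; together these constraints confine every useful witness to a finite subset of $\perm_S \A$ depending only on $x_1$ and $x_2$, after which codirectedness completes the argument via the finite intersection property. Note that no boundedness hypothesis on $D$ is required; if the underlying limit happens to be empty (as can occur by \autoref{ex:empty-limit}), the conclusion holds vacuously.
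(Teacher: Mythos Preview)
Your proof is correct and follows the same strategy as the paper: reduce to finding an index $i$ at which the projections lie in distinct $S$-orbits, then argue by contradiction via a pigeonhole over the finitely many permutations with support contained in a fixed finite set determined by $x_1,x_2$. Your packaging via the finite intersection property on the sets $A_i$ is a mild reformulation of the paper's version, which instead partitions $I$ into finitely many pieces $I_k=\{i\mid x_i=\pi_k\cdot y_i\}$ and shows one piece is cofinal. One slip to fix: with the paper's convention $D_{ji}\colon D_j\to D_i$ for $j\le i$, the inclusion goes the other way, $A_j\subseteq A_i$; your FIP paragraph in fact already uses this correct direction, so only the displayed inclusion and the sentence justifying it need adjusting.
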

\begin{proof}
  Let \( D \colon I \rightarrow \ntop \) be a codirected diagram of nominal Hausdorff spaces and take its limit \( p_{i} \colon L \rightarrow D_{i} \).
  We fix a finite subset \( S \subseteq \A \) and prove that every pair of distinct orbits \( \orb_{S} x, \orb_{S} y \subseteq L \) with \( x = (x_{i})_{i \in I}, y = (y_{i})_{i \in I} \) has disjoint \( S \)-supported open neighbourhoods.
  To this end, it suffices to show that there exists an \(i \in I\) such that \( x_{i} \) and \( y_{i} \) lie in different \( S \)-orbits:
  then there exist, because \( D_{i} \) is Hausdorff,
  disjoint open \( S \)-supported neighbourhoods \( U_{1}, U_{2} \subseteq D_{i} \) of \( \orb_{S} x_{i}, \orb_{S} y_{i} \) whose preimages under $p_i$ are the desired disjoint \( S \)-supported open neighbourhoods of \( \orb_{S} x, \orb_{S} y \).
  Suppose the contrary, i.e., that \( x_{i}, y_{i} \) lie in the same \( S \)-orbit for all \( i \in I \).
  Choose \( T \subseteq \A \) finite and disjoint from \( S \) such that \( S \cup T \) supports both \( x, y \).
  We enumerate all permutations~\( \pi_{0}, \ldots, \pi_{n} \) supported by \( T \) and consider for each \( \pi_{k} \) the set
  \[ I_{k} = \{ i \in I \mid x_{i} = \pi_{k} \cdot y_{i} \} \subseteq I. \]
  Then \( I = \bigcup_{k=0}^{n} I_{k} \):
  For each \( i \in I \) there exists some \( \pi \in \perm_{S} \A \) such that \( x_{i} = \pi \cdot y_{i} \),
  and since \( x_{i}, y_{i}  \) are supported by \( S \cup T \) we may choose \( \pi \) such that it fixes all elements in \( \A \setminus T \).
  We now show that some $I_k$ forms a cofinal subset of the partial order $I$, that is, each element of $I$ has a lower bound in $I_k$.
Suppose the contrary. Then, for each $k\in \{0,\ldots,n\}$ there exists $i_k\in I$ such that no $i\leq i_k$ lies in $I_k$. Since $I$ is codirected, there exists a lower bound $i\leq i_0,\ldots, i_n$. Thus $i\not\in I_k$ for any $k$, a contradiction.

Thus let $I_k$ be cofinal. Then for each \( i \in I \) there exists \( j \in I_{k} \) such that \( j \le i \),
  and so we compute
  \[ x_{i} = D_{ji}(x_{j}) = D_{ji}(\pi_{k} \cdot y_{j}) = \pi_{k} \cdot D_{ji}(y_{j}) = \pi_{k}\cdot y_{i}\]
  for all \( i \in I \).
  This contradicts our assumption of \( x, y \) lying in different \( S \)-orbits of \( L \).
\end{proof}

We are now ready to prove \autoref{prop:cod-lim-ne}.

  Let \( D \colon I \rightarrow \ntop \) be a \kbnd non-empty diagram of nominal compact Hausdorff spaces.
  Combining \autoref{lem:haus}~(\(\ref{lem:haus:haus} \Rightarrow \ref{lem:haus:orb}\)) and \iref{lem:comp}{orb}
  we get that the codirected diagram \( \orb_{\emptyset} \cdot D \colon I \rightarrow \ntop \rightarrow \topo \) of non-empty compact Hausdorff spaces,
  hence its limit \( L \) is non-empty~\cite[Prop.~1.1.4]{rz10}.
  This limit contains a family of orbits \( (O_{i} \subseteq D_{i })_{i \in I} \) that is compatible, i.e.,
  \( D_{ij}[O_{i}] = O_{j} \) for all \( i \le j \) in \(I \).
  Now fix an arbitrary set \( S \subseteq \A \) of size \( k \). Then the set
  \[ O_{i}^{S} = \{ x \in O_{i} \mid x \text{ is supported by } S\} \]
  is non-empty:
  If \( x \in O_{i} \) has support \( T \) then \( |T| \le k  \) by boundedness of \( D_{i} \), and  \( \pi \cdot x \) has support \( \pi \cdot T = S \) for any permutation \(\pi\) mapping \( T \) to \( S \).
  The sets \( O_{i}^{S} \) thus give a codirected diagram of non-empty finite sets,
  so their limit in \set is itself non-empty.
  Any element of this limit is supported by \( S \) and hence an element in \( \lim D \).
  Moreover, the limit is a nominal compact Hausdorff space by \autoref{lem:cod-lim-haus} and \autoref{lem:cod-lim-comp}.

\subsection*{Proof of \autoref{lem:rep:lim}}
Let $C\seq X$ be clopen and let $S=\supp C$. Then \( C \) is equal to the \ufs union of all \( S \)-supported basic representables \( p_{i}^{-1}[U_{i}] \) it contains (\iref{lem:cod-lim}{forget}).
          Since \( C \) is compact, being a closed subspace of the compact space \( X \),  finitely many representables \( p_{i}^{-1}[U_{i}] \) suffice to cover \( C \).
          Take a lower bound \( j \) of all these \( i \); then $C=p_j^{-1}[U]$ where $U=\bigcup_i D_{ji}^{-1}[U_{i}]$.

\subsection*{Proof of \autoref{thm:pro-comp}}

To prove that \( \npro_{k} \) is the Pro-completion of \( \nomofk \), we make use of the following characterization of Pro-completions of small categories.
\begin{lemma}[\!\!{\cite[Corollary~A.5]{ada-che-mil-urb-21}}]\label{lem:pro-comp-char}
  If \cat{C} is small, then \( \pro(\cat C) \) is characterized, up to equivalence, as a category \( \cat{P} \) containing \( \cat C \) as a full subcategory such that
  \begin{enumerate}
    \item the category \label{lem:pro-comp-char:has-lim} \cat{P} has codirected limits,
    \item\label{lem:pro-comp-char:is-lim} every object in \cat P is a codirected limit of objects in \cat C, and
    \item\label{lem:pro-comp-char:fin-copres} every object in \cat C is finitely copresentable in \cat P.
  \end{enumerate}
\end{lemma}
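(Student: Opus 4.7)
The plan is to verify the two directions of the characterization. The forward direction — that $\pro(\cat C)$ itself satisfies (1)--(3) — follows from its construction: codirected limits exist in $\pro(\cat C)$ by definition, every object in $\pro(\cat C)$ is a codirected limit of objects from $\cat C$ (reading the universal property applied to identity functors backwards), and each $EC$ with $C \in \cat C$ is finitely copresentable in $\pro(\cat C)$, as noted in \autoref{sec:preliminaries}. The substantive task is the reverse direction: any category $\cat P$ containing $\cat C$ as a full subcategory and satisfying (1)--(3) is equivalent to $\pro(\cat C)$ over $\cat C$.

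To establish this, I would apply the universal property of $\pro(\cat C)$ to the full embedding $E\colon \cat C \hookrightarrow \cat P$. Since $\cat P$ has codirected limits by (1), this yields a functor $\bar E\colon \pro(\cat C)\to \cat P$ preserving codirected limits with $\bar E\circ J = E$, where $J\colon \cat C \hookrightarrow \pro(\cat C)$ is the canonical embedding. The plan is then to show $\bar E$ is an equivalence, by checking essential surjectivity and full faithfulness separately.

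For essential surjectivity, take any $X\in \cat P$. By (2), $X\cong \lim_i EC_i$ for some codirected diagram $(C_i)$ in $\cat C$. Lifting to $\pro(\cat C)$ gives $\widehat X = \lim_i JC_i$, and $\bar E(\widehat X)\cong \lim_i EC_i \cong X$, since $\bar E$ preserves codirected limits and agrees with $E$ on $\cat C$. For full faithfulness, the key step is the hom-set formula: writing $X = \lim_i JC_i$ and $Y = \lim_j JC_j'$ in $\pro(\cat C)$, finite copresentability of the $JC_j'$ combined with fullness of $J$ yields
\[
\pro(\cat C)(X, Y) \;\cong\; \lim_j \colim_i \cat C(C_i, C_j').
\]
The same computation in $\cat P$, now using (1) to take the limits, the fullness of $E$, and (3) for finite copresentability of the $EC_j'$, yields
\[
\cat P(\bar E X, \bar E Y) \;\cong\; \lim_j \colim_i \cat C(C_i, C_j').
\]
Tracking the action of $\bar E$ on hom-sets through these descriptions shows the induced map is the identity on $\lim_j \colim_i \cat C(C_i, C_j')$, hence a bijection.

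The main obstacle will be justifying and carefully bookkeeping the hom-set formula. Deriving an element of $\lim_j \colim_i \cat C(C_i, C_j')$ from a morphism $X\to Y$ requires, for each $j$, selecting a factorization of the component $X\to JC_j'$ (respectively $\bar E X\to EC_j'$) through some $JC_i$ (respectively $EC_i$), and the resulting class in $\colim_i \cat C(C_i, C_j')$ must be independent of the chosen factorization representative. This independence is exactly the essential-uniqueness clause of finite copresentability, and preserving naturality in $j$ under the cofiltered reindexing requires combining both clauses in a diagram chase. Once this machinery is in place uniformly in both $\pro(\cat C)$ and $\cat P$, the equivalence follows by matching the two formulas and checking that $\bar E$ restricts to the identity on representatives coming from $\cat C$.
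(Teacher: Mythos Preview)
The paper does not prove this lemma; it is stated with a citation to \cite[Corollary~A.5]{ada-che-mil-urb-21} and used as a black box. Your proposal is the standard proof of this well-known characterization of Pro-completions: verify that $\pro(\cat C)$ satisfies (1)--(3), and conversely use the universal property to produce a codirected-limit-preserving functor $\bar E\colon \pro(\cat C)\to \cat P$, then check it is an equivalence via essential surjectivity (from~(2)) and full faithfulness (via the hom-set formula $\lim_j\colim_i \cat C(C_i,C_j')$, which holds on both sides by finite copresentability and fullness of the embeddings). This is correct, and there is nothing in the paper to compare it against.

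One minor remark: your justification that ``every object in $\pro(\cat C)$ is a codirected limit of objects from $\cat C$'' by ``reading the universal property applied to identity functors backwards'' is a bit loose. This fact is immediate from any concrete construction of $\pro(\cat C)$ (e.g.\ as formal cofiltered diagrams), but is not a direct consequence of the universal property as stated in the paper. Since the lemma is a cited result anyway, this does not affect the paper.
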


Our task is thus to show that $\npro_k$ satisfies the above conditions. Towards the proof of the second condition, we introduce the canonical diagram of a space:

\begin{construction}\label{con:canon-diag}
  To every \kbnd nominal Stone space \( X \) we can construct a \emph{canonical diagram}:
  The small {category} \slice{X}{\nomofk} has as objects continuous equivariant functions \( f \colon X \rightarrow Y_{f} \) with \( Y_{f} \in \nomofk \) (regarded as a discrete space),
  and \( h \colon Y_{f} \longrightarrow Y_{f'} \) is a morphism from \( f \colon X \rightarrow Y_{f} \) to \( f' \colon X \rightarrow Y_{f'} \) if \( hf = f' \).
  The projection $f\mapsto Y_f$ gives a diagram in $\npro_k$, which by abuse of notation we also denote \slice{X}{\nomofk}.
\end{construction}

\begin{lemma}\label{lem:prof-canon-diag}
Every \kbnd nominal Stone space is the limit of its canonical diagram, with limit cone $f\colon X\to Y_f$ ($f\in \slice{X}{\nomofk}$).
\end{lemma}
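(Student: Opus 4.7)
The approach is to form the limit $L$ of the canonical diagram in $\ntop$ (with projections $p_f \colon L \to Y_f$), obtain the induced continuous equivariant map $\alpha \colon X \to L$ determined by $p_f \cdot \alpha = f$, and prove it is a homeomorphism. Since $\npro_k$ is a full subcategory of $\ntop$ closed under codirected limits of $k$-bounded objects, this will identify $X$ with the limit of the canonical diagram in $\npro_k$ as well. The argument must avoid \autoref{thm:pro-comp}, since the present lemma is one of its inputs.

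The first step is to verify that $\slice{X}{\nomofk}$ is essentially small (by countability of $\nomofk$ up to isomorphism) and cofiltered. For a pair $f, f'$, the coimage $g \colon X \twoheadrightarrow Y$ of $\langle f, f' \rangle \colon X \to Y_f \times Y_{f'}$ gives a common lower bound, with $Y$ a $k$-bounded orbit-finite subset of $Y_f \times Y_{f'}$; parallel morphisms over $X$ are equalized by factoring $f$ through their equalizer in $\nomofk$.

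The heart of the proof is the bijectivity of $\alpha$. Injectivity uses nominal Hausdorffness together with the basis of representables: for distinct $x, x' \in X$ with joint support $S$ of size at most $2k$, \iref{rem:rep}{basis} (after restricting codomains to images) furnishes $f \in \slice{X}{\nomofk}$ with $f(x) \ne f(x')$. Surjectivity is the crux. Given a compatible family $(y_f)_f \in L$, \autoref{lem:cod-lim-set} ensures $S = \supp (y_f)_f$ has size at most $k$, so every clopen $C_f = f^{-1}[\{y_f\}]$ is $S$-supported. The family $(C_f)_f$ has the finite intersection property: for any $f_1, \dots, f_n$, the coimage $g \colon X \twoheadrightarrow Y$ of $\langle f_1, \dots, f_n \rangle$ lies in the canonical diagram, and compatibility of $(y_f)_f$ with the projections $Y \to Y_{f_i}$ forces any $g$-preimage of $y_g$ into $\bigcap_i C_{f_i}$. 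Applying classical compactness of $\forget[X]{S}$ (\iref{lem:comp}{supp}) to the $S$-supported closed family $(C_f)_f$ produces $x \in \bigcap_f C_f$ with $\alpha(x) = (y_f)_f$.

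Finally, $\alpha$ is a homeomorphism because bases of basic opens correspond: by \iref{lem:cod-lim}{basis} the sets $p_f^{-1}[\{y\}]$ for $f \in \slice{X}{\nomofk}$ and $y \in Y_f$ form a basis of $L$, and analogously the sets $f^{-1}[\{y\}]$ for $f \in \slice{X}{\nomofk}$ and $y \in Y_f$ form a basis of $X$ via \iref{rem:rep}{basis} and codomain restriction; bijectivity of $\alpha$ gives $\alpha[f^{-1}[\{y\}]] = p_f^{-1}[\{y\}]$, matching the two bases. The main obstacle is the $k$-boundedness constraint, which permeates the proof: naive products and pairings leave $\nomofk$ but coimage factorizations restore the bound, and the fact that compatible families in the limit are uniformly $k$-bounded (\autoref{lem:cod-lim-set}) is precisely the ingredient enabling the FIP argument via classical compactness of $\forget[X]{S}$.
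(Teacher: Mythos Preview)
Your proposal is correct and follows essentially the same approach as the paper's proof: form the comparison map into the limit, establish injectivity via Hausdorffness and the basis of representables, establish surjectivity via compactness together with the coimage-of-pairing construction, and conclude homeomorphism by matching (sub)bases. The only cosmetic difference is that you phrase the surjectivity step as a finite-intersection-property argument using classical compactness of $\forget[X]{S}$, whereas the paper argues contrapositively via a uniformly finitely supported open cover; these are dual formulations of the same idea, and you additionally spell out cofilteredness, essential smallness, and the codomain-restriction step for $k$-boundedness that the paper leaves implicit.
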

\begin{proof}
  Let \( X \) be a \kbnd nominal Stone space.
  To prove that  \( X \) is isomorphic to the limit of its canonical diagram \( \slice{X}{\nomofk} \) we verify three conditions, namely that
  (1) every pair of distinct points of \( X \) can be separated by some morphism in \( \slice{X}{\nomofk} \);
  (2) for every finitely supported compatible family \( (y_{f})\) where \( f \) ranges over \(\slice{X}{\nomofk} \) there exists some \( x \in X \) with \( f(x) = y_{f} \) for all \( f \).
  Conditions (1) and (2) ensure that the induced continuous equivariant map
  \[i\colon X \rightarrow \lim (\slice{X}{\nomofk}) \qquad x \mapsto (x_{f})_{f \in (\slice{X}{\nomofk})} \]
  is a bijection (they are responsible for injectivity and surjectivity, respectively). Moreover, since the representables $f^{-1}[y]$ ($f\in \slice{X}{\nomofk}$, $y\in Y_f$) form a subbase of $X$, the inverse $i^{-1}$ is also continuous, hence $i$ is a homeomorphism.
  \begin{enumerate}
    \item\label{pr:prof:1}
         Let $x,x'\in X$ be distinct points. 
         The space \( X \) is Hausdorff, whence for \( x \) we get a basic clopen neighbourhood not containing \( x' \);
          this neighbourhood is representable by \( f \colon X \rightarrow Y_{f} \),
          and this function clearly separates \( x, x' \).
    \item\label{pr:prof:2}
          We need to prove that the \fs intersection \( \bigcap_{f} f^{-1}[y_{f}] \) is non-empty.
          Suppose the contrary. Then the open sets \( U_{f} = X \setminus f^{-1}[y_{f}] \) form a cover \( \mathcal{C} \) of \( X \), uniformly finitely supported by any finite support $S$ of the family $(y_f)$.
          By compactness of \( X \) finitely many such sets \( U_{f_{1}}, \ldots, U_{f_{n}} \subseteq \mathcal{C} \) suffice to cover \( X \);
          hence \( \bigcap_{i=1}^{n} f^{-1}[y_{f_{i}}] \) is empty.
          Let \( f \colon X \epi Y_{f} \) be the subdirect product of the \( f_{i}, i=1,\dots,n \), i.e.,
          the coimage of the continuous equivariant map
          \[ \langle f_{1}, \ldots, f_{n} \rangle \colon X \rightarrow Y_{f_{1}} \times \cdots \times Y_{f_{n}}. \]
          Note that \( Y_{f} \) is \kbnd since \( X \) is \kbnd and \( f \) is surjective.
          By surjectivity of \( f \) there exists an \( x \in X \)  such that \( f(x) = y_{f} \);
          but then \( x \in \bigcap_{i=1}^{n} f^{-1}[y_{f_{i}}] \), a contradiction.\qedhere
  \end{enumerate}
\end{proof}

\begin{construction}\label{con:of-diag}
  For every codirected diagram \( D \colon I \rightarrow \npro_{k} \) one can construct a diagram \( D' \colon I' \rightarrow \nomofk \) of orbit-finite discrete spaces:
  Let \( I' \) be the category with objects
  \[ (i, f) \text{ with } i \in I, f \in (\slice{D_{i}}{\nomofk}) \]
  with arrows \( (i, f) \rightarrow (j, g) \) those \(  h \colon Y_{f} \rightarrow Y_{g} \) making the following square commute:
\[
    \begin{tikzcd}
      D_i
      \arrow[r, "D_{ij}"]
      \arrow[d, "f", ]
      &
      D_j
      \arrow[d, "g", ]
      \\
      Y_f
      \arrow[r, "h", dashed]
      &
      Y_g
    \end{tikzcd}
  \]
\end{construction}

\begin{lemma}\label{lem:of-diag}
  Let \( D \colon I \rightarrow \npro_{k} \) be a codirected diagram.
  Then the diagram from \autoref{con:of-diag} is cofiltered,
  and if \( (p_{i} \colon \lim D \rightarrow D_{i})_{i \in I} \) is a limiting cone for \( D \),
  then \( (f \cdot p_{i} \colon \lim D \rightarrow D_{i} \rightarrow Y_{f})_{(i, f) \in I'} \) is a limiting cone for \( D' \).
\end{lemma}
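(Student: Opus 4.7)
My plan is to prove the two claims separately: cofilteredness of $I'$, and universality of the composite cone. For cofilteredness, I would verify the three defining conditions. Non-emptiness is immediate (pick any $i \in I$ and the unique map $D_i \to 1$ into the one-point discrete space). For spans, given $(i, f)$ and $(j, g)$, I would use cofilteredness of $I$ to find $\alpha: k \to i$ and $\beta: k \to j$ and then take the coimage $h: D_k \epi Y_h$ of the pairing $\langle f \cdot D_\alpha, g \cdot D_\beta \rangle: D_k \to Y_f \times Y_g$. Since $D_k$ is $k$-bounded and coimages are quotients, $Y_h$ lies in $\nomofk$, and the two projections to $Y_f, Y_g$ supply morphisms $(k, h) \to (i, f)$ and $(k, h) \to (j, g)$. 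For the equalizer condition, given parallel $h_1, h_2: (i, f) \rightrightarrows (j, g)$, the identity $h_\ell \cdot f = g \cdot D_{ij}$ forces $f$ to factor through the equalizer $E \hookrightarrow Y_f$ of $h_1, h_2$ in $\nomofk$ via some $f': D_i \to E$; the inclusion is then a morphism $(i, f') \to (i, f)$ equalizing $h_1$ and $h_2$.

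For universality, I would first check the cone condition: for $h: (i, f) \to (j, g)$ one computes $h \cdot f \cdot p_i = g \cdot D_{ij} \cdot p_i = g \cdot p_j$. Given any competing cone $(q_{(i, f)}: C \to Y_f)_{(i, f) \in I'}$ in $\npro_k$, the strategy is to first build mediating maps $u_i: C \to D_i$ for each $i$ using \autoref{lem:prof-canon-diag}, which identifies each $D_i$ with the limit of its canonical diagram. Fixing $i$, the restricted family $(q_{(i, f)})_{f \in \slice{D_i}{\nomofk}}$ is a cone for the canonical diagram of $D_i$, because every morphism $h: f \to f'$ in $\slice{D_i}{\nomofk}$ is also a morphism $(i, f) \to (i, f')$ in $I'$. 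This yields $u_i: C \to D_i$ with $f \cdot u_i = q_{(i, f)}$ for every $f$.

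The crucial step is then to verify that $(u_i)_{i \in I}$ is a cone for $D$. For any $\phi: i \to j$ in $I$ and $g \in \slice{D_j}{\nomofk}$, the composite $g \cdot D_\phi$ lies in $\slice{D_i}{\nomofk}$, and $\id_{Y_g}$ is a morphism $(i, g \cdot D_\phi) \to (j, g)$ in $I'$. Invoking the cone condition on $q$ at this morphism yields $g \cdot D_\phi \cdot u_i = g \cdot u_j$ for every $g$, and joint monicity of the maps $g$ (again via \autoref{lem:prof-canon-diag}) then forces $D_\phi \cdot u_i = u_j$. The universal property of $\lim D$ produces a unique $u: C \to \lim D$ with $p_i \cdot u = u_i$, whence $(f \cdot p_i) \cdot u = q_{(i, f)}$. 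Uniqueness follows by reversing the construction: any such $u$ determines the $u_i = p_i \cdot u$, each of which is uniquely pinned down by its composites with all $f \in \slice{D_i}{\nomofk}$.

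The main obstacle I anticipate is the bookkeeping needed to exhibit the right morphism in $I'$ at each step; in particular, identifying $\id_{Y_g}: (i, g \cdot D_\phi) \to (j, g)$ as the bridge that transfers the cone condition on $q$ into an equation involving $D_\phi$ is the conceptual crux. Once this correspondence is in place, the remaining argument simply chains \autoref{lem:prof-canon-diag} with the universal property of $\lim D$.
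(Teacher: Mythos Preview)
Your proposal is correct and follows essentially the same approach as the paper's proof: the span construction via coimages, the equalizer argument, the reduction to the canonical diagram of each $D_i$ via \autoref{lem:prof-canon-diag}, and the use of the identity morphism $(i, g \cdot D_\phi) \to (j, g)$ to verify the cone condition all appear there in the same form (with different variable names). The paper does not state non-emptiness of $I'$ explicitly, but otherwise the arguments coincide.
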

\begin{proof}[Proof of \autoref{lem:of-diag}.]
  We prove that \( I' \) is cofiltered.
  Firstly, given objects \( (i, f), (j, g) \in I' \) there exists by codirectedness of \( I \) a lower bound \( k \le i, j \) in \( I \).
  The coimage \( h \colon  D_{k} \epi Y_{h} \) of \( \langle f \cdot D_{ki}, g \cdot  D_{kj} \rangle \) has a \kbnd codomain
  and the projections \( p_{f} \colon Y_{h} \rightarrow Y_{f}, p_{g} \colon Y_{h} \rightarrow Y_{g} \) are morphisms in \( I' \).
  This makes \( (k, h) \) a span over \( (i, f) \) and \( (j, g) \) in \( I' \).
  Secondly, assume that \( h, h' \colon (i, f) \rightarrow (j, g) \) are parallel arrows in \( I' \).
  This in particular implies \( h \cdot f = h' \cdot f \),
  so by forming their equalizer \( e \colon E \mono Y_{f} \) in \nomofk its universal property yields a morphism \( \bar{f} \colon D_{i} \rightarrow E \) with \( e \cdot \bar{f} = f \). Note that $\bar f$ is continuous, as it is just a corestriction of the continuous map $f$.
  This is precisely the statement that \( (i, \bar{f}) \) equalizes the morphisms \( h, h' \) in \( I' \). 

  Now we prove the rest of the statement. If \( (p_{i} \colon \lim D \rightarrow D_{i})_{i \in I} \) is a limiting cone for \( D \)
  then the family \( (f \cdot p_{i} \colon \lim D \rightarrow D_{i} \rightarrow Y_{f})_{(i, f) \in D'} \) is clearly a cone for \( D' \), we just have to show it is universal.
  Suppose \( (a_{(i, f)} \colon A \rightarrow Y_{f})_{(i, f) \in I'} \) is another cone for \( D' \).
  If we restrict this cone to a fixed \( i \in I \) we obtain a cone over the canonical diagram of \( D_{i} \).
  By \autoref{lem:prof-canon-diag} every \( D_{i} \) is the limit of its canonical diagram,
  so we get a unique \( a_{i} \colon A \rightarrow D_{i} \) satisfying \( a_{(i, f)} = f \cdot a_{i} \) for \( (i, f) \in I' \).
  The family of all \( (a_{i})_{i \in I} \) is a cone over \( D \):
  For \( i \le j \) and every \( (j, g) \in I' \) we get the following diagram:
  \begin{equation}\label{diag:cone}
    \begin{tikzcd}[column sep = large]
      D_i
      \arrow[rr, "D_{ij}", ]
      \arrow[dd, "g \cdot D_{ij}", ]
      &
      &
      D_j
      \arrow[dd, "g", ]
      \\
      &
      A
      \arrow[lu, "a_i", ]
      \arrow[ru, "a_j", ]
      \arrow[ld, "a_{(i, g \cdot D_{ij})}", ]
      \arrow[rd, "a_{(j, g)}", ]
      &
      \\
      Y_{g \cdot D_{ij}}
      \arrow[rr, "\id"', ]
      &
      &
      Y_g
    \end{tikzcd}
  \end{equation}
The outer square and all but the upper triangle commute.
The morphisms \( (j, g) \) for fixed \( j \) form the limiting cone of \( D_{j} \) and thus are jointly monic,
so the upper triangle also commutes for all \( i \le j \).
Since the family \( (a_{i})_{i \in i} \) is a cone for \( D \)
we get a unique \( \alpha \colon A \rightarrow \lim D \) with \( a_{i} = p_{i} \cdot \alpha \),
postcomposition yields \( f \cdot a_{i} = (f \cdot p_{i}) \cdot \alpha \) for all \( (i, f) \in I' \).
Regarding uniqueness, let \( f \cdot p_{i} \cdot \alpha = f \cdot p_{i} \cdot \beta \) for all \( (i, f) \in I' \).
First for every \( i \in I \) we get \( p_{i} \cdot \alpha = p_{i} \cdot \beta \) as the \( (i, f) \in I' \) are jointly monic,
and second also \( \alpha = \beta \) since the \( p_{i} \) are jointly monic.
\end{proof}

\begin{theorem}\label{thm:pro-nom-equiv}
  For any \kbnd nominal space \( X \) the following statements are equivalent:
  \begin{enumerate}
    \item\label{thm:pro-nom-equiv:pro} \( X \) is a nominal Stone space.
    \item\label{thm:pro-nom-equiv:cof-canon} \( X \) is the codirected limit of its canonical diagram.
    \item\label{thm:pro-nom-equiv:cod-diag} \( X \) is a codirected limit of \kbnd orbit-finite discrete spaces.
  \end{enumerate}
\end{theorem}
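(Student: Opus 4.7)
The plan is to prove the cyclic chain $(1) \Rightarrow (2) \Rightarrow (3) \Rightarrow (1)$, assembling the equivalence entirely from lemmas already established in the excerpt.

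For $(1) \Rightarrow (2)$ I would simply invoke \autoref{lem:prof-canon-diag}, which states that every $k$-bounded nominal Stone space is the limit of its canonical diagram $\slice{X}{\nomofk}$ via the cone consisting of the morphisms $f \colon X \to Y_{f}$. That this index category is cofiltered---and hence the limit qualifies as codirected, since cofiltered and codirected limits agree---follows from \autoref{lem:of-diag} applied to the trivial one-object diagram at $X$. The step $(2) \Rightarrow (3)$ is then immediate from \autoref{con:canon-diag}: by definition the canonical diagram takes values in the full subcategory $\nomofk \hookrightarrow \npro_{k}$ of $k$-bounded orbit-finite discrete spaces.

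The interesting direction is $(3) \Rightarrow (1)$. Assume $X \cong \lim D$ for a codirected diagram $D \colon I \to \npro_{k}$ whose values are $k$-bounded orbit-finite discrete spaces; I need to verify the three defining properties of a nominal Stone space in turn. For compactness, each $D_{i}$ is compact (every open cover admits an orbit-finite subcover obtained by picking one representative per orbit of $D_{i}$), and since the diagram is $k$-bounded, \autoref{lem:cod-lim-comp} yields compactness of $X$. For the Hausdorff property, each discrete space $D_{i}$ is trivially nominal Hausdorff, and \autoref{lem:cod-lim-haus} lifts this to $X$. For the basis of representables, \iref{lem:cod-lim}{basis} provides the basic opens $p_{i}^{-1}[U]$ for $i \in I$ and $U \subseteq D_{i}$ open; since $D_{i}$ is orbit-finite and discrete we have $U \in \powfs D_{i}$, and $p_{i}^{-1}[U]$ is representable witnessed by the continuous map $p_{i}$ together with the parameter $U$.

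The main obstacle is the Hausdorff step: the nominal Hausdorff axiom demands separation of entire $S$-orbits rather than just points, so one cannot directly reduce to classical Tychonoff-style reasoning. The heavy lifting is packaged in \autoref{lem:cod-lim-haus}, which argues by cofinality in the codirected index---if no projection separates the chosen pair of orbits, then some single permutation supported by a fresh finite witness set realises the identification along a cofinal subset of $I$, contradicting distinctness. The compactness step is also noteworthy in that it genuinely uses $k$-boundedness (\autoref{lem:cod-lim-comp}), which is precisely the structural restriction keeping us inside $\npro_{k}$ and ruling out the pathology of \autoref{ex:empty-limit}.
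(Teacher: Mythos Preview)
Your proposal is correct and follows exactly the same cyclic route as the paper: $(1)\Rightarrow(2)$ via \autoref{lem:prof-canon-diag}, $(2)\Rightarrow(3)$ trivially, and $(3)\Rightarrow(1)$ by combining \autoref{lem:cod-lim-comp}, \autoref{lem:cod-lim-haus}, and \iref{lem:cod-lim}{basis}. Your additional remark that the canonical diagram is cofiltered (via \autoref{lem:of-diag} applied to the one-object diagram) is a detail the paper leaves implicit but which is indeed needed for the statement of~(2) to make sense.
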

\begin{proof}[Proof of \autoref{thm:pro-nom-equiv}.]
  \( \ref{thm:pro-nom-equiv:pro} \Rightarrow \ref{thm:pro-nom-equiv:cof-canon} \): \autoref{lem:prof-canon-diag}.

  \( \ref{thm:pro-nom-equiv:cof-canon} \Rightarrow \ref{thm:pro-nom-equiv:cod-diag} \):
  This holds trivially.

  \( \ref{thm:pro-nom-equiv:cod-diag} \Rightarrow \ref{thm:pro-nom-equiv:pro} \):
  If \( X \) is a codirected limit of \kbnd orbit-finite discrete spaces then it is nominal compact and Hausdorff by Lemma~\ref{lem:cod-lim-comp} and~\ref{lem:cod-lim-haus}.
  It has a basis of representable sets by \iref{lem:cod-lim}{basis}. \qedhere
\end{proof}

The following result critically depends on non-emptiness of codirected limits of \kbnd compact Hausdorff codirected diagrams.

\begin{lemma}\label{lem:surj-proj}
  Let \( D \colon I \rightarrow \nomofk \) non-empty codirected diagram with limit cone \( p_{i} \colon L \rightarrow D_{i} \).
  Then there exists for every \( i \in I \) some \( j \le i \) with
  \[ D_{ji}[D_{j}] = p_{i}[\lim D].\]
\end{lemma}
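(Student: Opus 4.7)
My strategy is to show that the decreasing family of images $\{D_{ji}[D_j] : j \le i\}$ stabilizes at some value $A$, and then to identify $A$ with $p_i[\lim D]$ using a Tychonoff-style fiber argument based on \autoref{prop:cod-lim-ne}.

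\emph{Stabilization.} I would first observe that the family $\{D_{ji}[D_j]\}_{j \le i}$ is codirected under reverse inclusion: from $k \le j \le i$ one gets $D_{ki}[D_k] = D_{ji}[D_{kj}[D_k]] \subseteq D_{ji}[D_j]$. Each member is an equivariant subset of the orbit-finite set $D_i$, and the latter has only finitely many equivariant subsets, one for each subset of its finite set of orbits. A cofinality argument in $I \downarrow i$, analogous to the one in the proof of \autoref{lem:cod-lim-haus}, then yields a value $A$ attained on a cofinal subposet, so $A = \bigcap_{j \le i} D_{ji}[D_j] = D_{j^\ast i}[D_{j^\ast}]$ for some $j^\ast \le i$. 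The inclusion $p_i[\lim D] \subseteq A$ is immediate from $p_i = D_{ji} \circ p_j$, so only the reverse inclusion $A \subseteq p_i[\lim D]$ remains.

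\emph{Fiber argument.} Given $x \in A$, I would set $G_j := D_{ji}^{-1}[\orb x] \subseteq D_j$ for $j \le i$. Each $G_j$ is an equivariant subset of a $k$-bounded orbit-finite nominal set, hence itself a $k$-bounded orbit-finite nominal set; it is nonempty because $\orb x \subseteq A \subseteq D_{ji}[D_j]$ and $D_{ji}[D_j]$ is equivariant. The connecting maps restrict to a codirected diagram $(G_j)_{j \le i}$ in $\nomofk$, and each $G_j$ equipped with the discrete topology is a $k$-bounded nominal compact Hausdorff space (via \autoref{lem:comp} and \autoref{lem:haus}, since every $\forget[G_j]{S}$ and $\orb_S G_j$ is finite). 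By \autoref{prop:cod-lim-ne} the limit $\lim G_j$ is nonempty. Any compatible family in it extends, by codirectedness of $I$ (with independence of choices following from compatibility), to an element $y \in \lim D$ whose $i$-component satisfies $y_i \in \orb x$, say $y_i = \pi \cdot x$; then $\pi^{-1} \cdot y \in \lim D$ has $p_i(\pi^{-1} \cdot y) = x$, proving $x \in p_i[\lim D]$.

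\emph{Main obstacle.} The delicate point is the correct setup of the fiber diagram: the naive attempt with $F_j := D_{ji}^{-1}[\{x\}]$ fails because this is only $\supp x$-supported rather than equivariant, and so does not obviously live in $\nomofk$. Using $\orb x$ instead restores equivariance at the cost of having to translate the resulting limit element back to $x$ at the end. Once this is in place, the heart of the argument is simply the nominal codirected Tychonoff property \autoref{prop:cod-lim-ne}, as flagged in the preamble to the lemma.
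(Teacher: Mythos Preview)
Your proof is correct and uses the same core ingredient as the paper, namely \autoref{prop:cod-lim-ne}, but the overall architecture is different.

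The paper argues by contradiction: assuming no $j \le i$ satisfies $D_{ji}[D_j] \subseteq p_i[\lim D]$, the equivariant fibres $D_j' = D_{ji}^{-1}[D_i \setminus p_i[\lim D]]$ are all nonempty, so \autoref{prop:cod-lim-ne} yields a compatible family in $\lim D'$ which extends (via initiality of $I^{\le i} \hookrightarrow I$) to an element of $\lim D$ whose $i$-component lands in $D_i \setminus p_i[\lim D]$, a contradiction. The target set $D_i \setminus p_i[\lim D]$ is already equivariant, so no orbit trick is needed, and no separate stabilization step occurs.

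Your route is direct rather than by contradiction: you first stabilize the decreasing chain of equivariant images $D_{ji}[D_j]$ using that $D_i$ has only finitely many equivariant subsets, obtaining the witness $j^\ast$ explicitly; then you prove the nontrivial inclusion pointwise by fibring over $\orb x$ and translating back at the end. This buys you an explicit description of $j^\ast$ and avoids the detour through a contradiction, at the price of an extra stabilization argument and the orbit-vs-point manoeuvre you flagged. The paper's version is shorter because the complement $D_i \setminus p_i[\lim D]$ packages all ``missing'' orbits at once and is automatically equivariant.
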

\begin{proof}[Proof of \autoref{lem:surj-proj}.]
  We fix \( i \in I \).
  Note that for all \( j \le i \) already \( p_{i}[\lim D] \subseteq D_{ji}[D_{j}] \) since the \( (p_{i})_{i \in I} \) form a cone,
  hence we only need to prove the reverse inclusion.
  Suppose no \( j \in I \) satisfies \( D_{ji}[D_{j}] \subseteq p_{i}[\lim D] \),
  then for all \( j \in I^{\le i} \) the sets \( D_{j}' = D_{ji}^{-1}[D_{i} \setminus p_{i}[\lim D]] \subseteq D_{j} \) are non-empty
  (here \( I^{\le i} \) denotes the the downset \( \{j \in I \mid j \le i\} \) of \( i \) in \( I \)).
  The restriction \( D^{\le i} \colon I^{\le i} \mono I \rightarrow \nomofk \) of \( D \) to \( I^{\le i} \) is a non-empty codirected diagram in \nomofk,
  of which \( D' \) is a non-empty subdiagram.
  Its limit \( \lim D' \) is a non-empty nominal space by~\autoref{prop:cod-lim-ne},
  containing a compatible family \( (x_{j})_{j \le i} \in \lim D' \).
  By initiality of the inclusion functor \( I^{\le i} \mono I\) the limits of \( D^{\le i} \) and \( D \) are isomorphic,
  hence the family \( (x_{j})_{j \le i} \) extends by \( \lim D' \subseteq \lim D^{\le i} \cong \lim D \) to an element \( (x_{i})_{i \in I} \in \lim D \).
  But this element satisfies both \( x_{i} = p_{i}(x) \in p_{i}[\lim D] \) and \( x_{i} \in D_{i}' = X \setminus p_{i}[\lim D] \), a contradiction.
\end{proof}

\begin{lemma}\label{lem:of-copres}
  Every \kbnd orbit-finite nominal set is finitely copresentable in \( \npro_{k} \).
\end{lemma}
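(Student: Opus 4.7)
To verify that $C \in \nomofk$ is finitely copresentable in $\npro_k$, I consider an arbitrary codirected limit $L = \lim_i D_i$ in $\npro_k$ with cone $p_i \colon L \to D_i$ and a continuous equivariant map $f \colon L \to C$; I must show that $f$ factors essentially uniquely through some $p_i$. My first step will be to apply \autoref{lem:of-diag}: the associated cofiltered diagram $D' \colon I' \to \nomofk$ of orbit-finite discrete spaces has the same limit $L$ and cone $q_{(i,g)} = g \cdot p_i$, and factoring $f$ through a $q_{(i,g)}$ via $h \colon Y_g \to C$ immediately yields a factorization through $p_i$ via the composite $h \cdot g$. This reduces the problem to the case where the diagram is valued in $\nomofk$ (i.e.~consists of $k$-bounded orbit-finite discrete spaces), which I assume henceforth.

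\textbf{Existence.} I would fix a finite support $S \subseteq \A$ of $f$ with $|S| \ge 2k$. By \autoref{lem:comp} and \iref{lem:cod-lim}{forget}, $\forget[L]{S}$ is the $\Top$-limit of the finite discrete spaces $\forget[D_i]{S}$, hence a classical Stone space; and $\forget[C]{S}$ is finite. Finite copresentability of finite sets in $\pro(\setf)$ applied to the continuous map $\forget[f]{S}$ then yields $i_0 \in I$ and a function $h \colon \forget[D_{i_0}]{S} \to \forget[C]{S}$ with $\forget[f]{S} = h \cdot \forget[p_{i_0}]{S}$. Next, I will show that $f$ is constant on the fibres of $p_{i_0}$: given $x_1, x_2 \in L$ with $p_{i_0}(x_1) = p_{i_0}(x_2)$, I pick $\pi \in \perm\A$ satisfying $\pi(\supp x_1 \cup \supp x_2) \subseteq S$---possible since $|\supp x_1 \cup \supp x_2| \le 2k \le |S|$---so that $\pi x_1$ and $\pi x_2$ are both $S$-supported with equal $p_{i_0}$-images, whence $f(\pi x_1) = h(p_{i_0}(\pi x_1)) = h(p_{i_0}(\pi x_2)) = f(\pi x_2)$, and equivariance of $f$ gives $f(x_1) = f(x_2)$. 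The assignment $p_{i_0}(x) \mapsto f(x)$ therefore defines an equivariant map $\bar h \colon p_{i_0}[L] \to C$. Applying \autoref{lem:surj-proj} to obtain $i_1 \le i_0$ with $D_{i_1, i_0}[D_{i_1}] = p_{i_0}[L]$, the composite $\tilde h := \bar h \cdot D_{i_1, i_0} \colon D_{i_1} \to C$ satisfies $f = \tilde h \cdot p_{i_1}$, as required.

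\textbf{Essential uniqueness and main obstacle.} For essential uniqueness, suppose $\tilde h_1, \tilde h_2 \colon D_i \to C$ both satisfy $\tilde h_1 \cdot p_i = \tilde h_2 \cdot p_i$. Another application of \autoref{lem:surj-proj} gives $j \le i$ with $D_{j i}[D_j] = p_i[L]$; for any $y \in D_j$, we have $D_{j i}(y) = p_i(x)$ for some $x \in L$, and $\tilde h_1(D_{j i}(y)) = \tilde h_1(p_i(x)) = \tilde h_2(p_i(x)) = \tilde h_2(D_{j i}(y))$, yielding $\tilde h_1 \cdot D_{j i} = \tilde h_2 \cdot D_{j i}$. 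The technical heart of the argument is the equivariant well-definedness of $\bar h$, which bridges the purely topological factorization on $\forget[L]{S}$ to a global equivariant factorization on $L$. Two features specific to the bounded setting enable this bridge: the support bound $k$, which ensures that a single $S$ with $|S| \ge 2k$ can simultaneously accommodate the supports of any pair of elements of $L$; and \autoref{lem:surj-proj}, which lifts the partial factorization $\bar h$ on the image $p_{i_0}[L]$ to a global factorization through the transition map $D_{i_1, i_0}$. Without the support bound, the first feature would fail in exactly the manner exposed by \autoref{prop:not-conc}.
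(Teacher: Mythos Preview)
Your strategy of reducing to classical Stone-space arguments via the functor $\forget{S}$ contains a genuine error. The functor $\forget{S}$ does \emph{not} change the underlying set: $\forget[D_i]{S}$ has carrier $D_i$, which is in general infinite (think $D_i = \A$), and its topology consists only of the $S$-supported subsets. Two distinct points lying in the same $S$-orbit are topologically indistinguishable in $\forget[D_i]{S}$, so the space is not even $T_0$, let alone a finite discrete Stone space. For the same reason $\forget[C]{S}$ is neither finite nor discrete, and $\forget[L]{S}$ is compact but not Hausdorff. Consequently your appeal to ``finite copresentability of finite sets in $\pro(\setf)$'' is not justified, and the factorization $h \colon \forget[D_{i_0}]{S} \to \forget[C]{S}$ with $f = h \cdot p_{i_0}$ has not been established.

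Your argument can be repaired by replacing $\forget{S}$ with the \emph{$S$-supported elements} functor $(-)^S$, where $X^S = \{x \in X : \supp x \subseteq S\}$. Then $D_i^S$ and $C^S$ are genuinely finite (an orbit-finite nominal set has only finitely many elements with a given finite support), and since the $\nom_k$-limit is formed in $\set$ (\autoref{lem:cod-lim-set}) one has $L^S = \lim_i D_i^S$ as a profinite space. Classical finite copresentability then yields $i_0$ and $h \colon D_{i_0}^S \to C^S$ with $f(x) = h(p_{i_0}(x))$ for all $S$-supported $x$, which is precisely what your permutation trick requires. From there your argument (the $2k$-bound on $|S|$ to move any pair of elements into $L^S$, followed by \autoref{lem:surj-proj}) goes through as written. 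This repaired route is genuinely different from the paper's, which instead analyses the clopen partition $\{f^{-1}[y]\}$ directly and builds a possibly non-equivariant factorization before invoking \autoref{lem:surj-proj}; your permutation trick is arguably slicker once the classical factorization is in hand.

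A second, smaller gap: your opening reduction via \autoref{lem:of-diag} gives existence of a factorization through some $p_i$ from existence of one through some $q_{(i,g)}$, but it does not immediately yield \emph{essential uniqueness} with respect to the original cone $(p_i)$. The paper handles this separately (step~(2.ii)), factoring each $h_1, h_2 \colon D_i \to C$ through the canonical diagram of $D_i$; you would need an analogous argument, since \autoref{lem:surj-proj} as stated applies only to diagrams in $\nomofk$.
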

\begin{proof}
  Let \( Y \) be a $k$-bounded orbit-finite set, w.l.o.g.\ $Y\neq \emptyset$, and let \( D \colon I \rightarrow \npro_{k} \) be a codirected diagram with limiting cone \( (p_{i} \colon \lim D \rightarrow D_{i})_{ i \in I } \).
  We show that every continuous, equivariant function \( f \colon \lim D \rightarrow Y \) factors essentially uniquely through some \( p_{i} \).
  \begin{enumerate}

    \item[(1)]\label{lem:of-copres:(1)}
          We first assume that all \( D_{i} \) are orbit-finite discrete spaces.

    \item[(1.i)]\label{lem:of-copres:(1.i)}
          We prove that there exists some \( i \in I \) and a continuous equivariant function \( g \) with \( f = g \cdot p_{i} \).
          We may assume that \( f \) is surjective;
          otherwise we replace \( f \) by its coimage.
          First, we choose representatives \( y_{i}, \ldots, y_{m} \) of the orbits of \( Y \) with (representable!) preimages \( Y_{i} = f^{-1}[y_{i}] \).
          Note that the sets \( \pi \cdot Y_{i}, \pi \in \perm \A, \) form a partition of \( L = \lim D \).
          Let \( S \subseteq \A \) be a finite set supporting all \( y_{i}, i = 1, \ldots, m \),
          then \( S \) also supports all sets \( Y_{i} \) due to equivariance of \( f \).

          We first show that there exists some \( j \in I \) such  that
          \begin{equation}
            \label{eq:Y_i}
            Y_{i} = p_{j}^{-1}[p_{j}[Y_i]] \quad \text{ for all } i = 1, \ldots, m.
          \end{equation}
          Fix \( i \in \{1, \ldots, m\} \).
          The set \( Y_{i} \) is a clopen (and hence compact) subset of \( \forget[L]{S} \).
          Recall that by \iref{lem:cod-lim}{forget} the space \( \forget[L]{S}  \) has a basis given by the sets \( p_{l}^{-1}[U] \) where \( l \in I \) and \( U \subseteq D_{l} \) is an \( S \)-supported set.
          So we can present \( Y_{i} \) as a finite union of basic open sets
          \begin{equation}
            Y_{i} = \bigcup_{\alpha=1}^{n}p_{j_{\alpha}}^{-1}[U_{\alpha}],\label{eq:Y_i:union}
          \end{equation}
          with \( j_{\alpha} \in I \) and such that all \( U_{\alpha} \subseteq D_{j_{\alpha}} \) are \( S \)-supported.
          The set \( I \) is codirected so there exists a lower bound \( j \in I \) of all \( j_{\alpha}, \alpha = 1, \ldots, n \).
          By putting
          \[ Y_{i}'  = \bigcup_{\alpha=1}^{n}D_{j, j_{\alpha}}^{-1}[U_{\alpha}] \]
          we compute
          \[ Y_{i} = \bigcup_{\alpha=1}^{n} p_{j_{\alpha}}^{-1}[U_{\alpha}] = \bigcup_{\alpha=1}^{n} p_{j}^{-1}[D_{j, j_{\alpha}}^{-1}[U_{\alpha}]] = p_{j}^{-1}[Y_{i}']. \]
          Using codirectedness of \( I \) again allows us to choose \( j \) independently of \( i = 1, \ldots, m \) to obtain
          \[ Y_{i} = p_{j}^{-1}[Y_{i}'] \quad \text{ for all } i = 1, \ldots, m.\]
          This proves~\eqref{eq:Y_i}.

          This enables us to define a function \( g' \colon D_{j} \rightarrow Y \) with \( f = g' \cdot p_{j} \) as follows:
          for \( x \in D_{j} \),
          if there exists \( \pi \in \perm \A \) such that \( x \in p_{j}[\pi \cdot Y_{i}] = \pi \cdot p_{j}[Y_{i}] \) then we define
          \[ g'(x) := \pi \cdot y_{i};\]
          otherwise we set \( g'(x) := y_{1} \).
          To show that \( g' \) is well-defined let us assume some \( x \) lies in \( p_{j}[\pi \cdot Y_{i}] \cap p_{j}[\pi' \cdot Y_{i'}] \).
          By \eqref{eq:Y_i} and equivariance, we have $p^{-1}[x]\seq \pi\cdot Y_i \cap \pi'\cdot Y_{i'}$, hence the latter intersection is non-empty.
          But the sets \( \sigma \cdot Y_{l} \) form a partition, so \( \pi \cdot Y_{i} = \pi' \cdot Y_{i'} \) and thus $\pi\cdot y_i = \pi'\cdot y_{i'}$.
          This proves that the choice of \( g'(x) \) does not depend on neither \( \pi \) nor \( i \).
          We verify the factorization \( f = g'p_{j} \) pointwise:
          Take \( z \in L \); it is mapped by \( f \) to \( f(z) = \pi \cdot y_{i} \) for some \( \pi \in \perm \A\) and some $i\in \{1,\ldots, m\}$.
          This implies
          \[ p_{j}(z) \in p_{j}(f^{-1}[f(z)]) = p_{j}(f^{-1}[\pi \cdot y_{i}]) = p_{j}(\pi \cdot Y_{i}), \]
          now the definition of \( g' \) yields \( g'(p_{j}(z)) = \pi \cdot y_{i} = f(z) \).

          However,
          the function \( g' \) just defined need not be equivariant.
          But by~\autoref{lem:surj-proj} there exists \( i \le j \) with \( p_{j}[L] = D_{ij}[D_i] \);
          set \( g := g' \cdot D_{ij} \colon D_i \rightarrow Y \).
          Then \( f \) also factorizes as \( f = g' \cdot p_{j} = g' \cdot D_{ij} \cdot p_{i} = g \cdot p_{j} \),
          and moreover, for \( g \) we are able to prove equivariance: let \( x \in D_{i} \). Since \( p_{j}[L] = D_{ij}[D_i] \), the element $D_{ij}(x)$ has a preimage \( y \in L \) under \( p_{j} \). Then, for every \( \pi \in \perm \A \),
          \begin{align*}
            g(\pi \cdot x) &= g'(D_{ij}(\pi \cdot x)) && \text{def. $g$} \\
                     &= g'(\pi \cdot D_{ij}(x)) && \text{$D_{ij}$ equiv.} \\
                     &= g'(\pi \cdot p_{j}(y)) && \text{def. $y$} \\
                     &= g'(p_{j}( \pi \cdot y)) && \text{$p_{j}$ equiv.} \\
                     &= f(\pi \cdot y) && \text{$f = g' \cdot p_{j}$} \\
                     &= \pi \cdot f(y) && \text{$f$ equivariant} \\
                     &= \pi \cdot g'(p_{j}(y)) = \pi \cdot g'(D_{ij}(x)) = \pi \cdot g(x). && \text{(steps backwards)}
          \end{align*}
          Hence, \( g \) is equivariant.

    \item[(1.ii)]\label{lem:of-copres:(1.ii)}
          We show that this factorization is essentially unique, that is,
          if for some \( i \in I \) there are two equivariant maps \( g, h \colon D_{i} \rightarrow Y \) with \( g \cdot p_{i} = h \cdot p_{i} \),
          then they are equalised by some \( D_{ji} \) for \( j \le i \).
          We choose \( j \) as provided by~\autoref{lem:surj-proj},
          then the maps \( g, h \) coincide on \( p_{i}[L] = D_{ji}[D_{j}] \),
          hence \( g \cdot D_{ji} = h \cdot D_{ji} \).

    \item[(2)]\label{lem:of-copres:(2)}
          Now let \( D \) be an arbitrary codirected diagram in \( \npro_{k} \).
          We construct the corresponding diagram \( D' \) with scheme \( I' \) (see \autoref{con:canon-diag});
          it has the limiting cone
          \[ (h \cdot p_{i} \colon L \rightarrow D_{i} \rightarrow Y_{h})_{(i, h) \in I'} \]
          by \autoref{lem:of-diag}.
    \item[(2.i)]\label{lem:of-copres:(2.i)}
          Since \( D' \) is a diagram of orbit-finite spaces,
          the map \( f \) factorizes by Item~(1.i) as \( f = g \cdot h \cdot p_{i} \), where
          \[ h \cdot p_{i } \colon \lim D' \cong \lim D = L \rightarrow D_{i} \rightarrow Y_{h}\]
          for some \( (i, h) \in I' \).
          This immediately yields the desired factorization of \( f \) through \( p_{i} \) via \( g \cdot h \).

    \item[(2.ii)]
          Let \( g, h \colon D_{i} \rightarrow Y  \) be two equivariant, continuous maps satisfying \( g \cdot p_{i} = h \cdot p_{i} \).
          The space \( D_{i} \) lies in \( \npro_{k} \), hence it is by \iref{thm:pro-nom-equiv}{cof-canon} the codirected limit of its canonical diagram \( \slice{D_{i}}{\nomofk} \).
          We thus can use Item~(1.i) above to get factorizations of both maps \( g, h \) through \( g', h' \colon Y_{q} \rightarrow Y \) for some \( q \colon D_{i} \rightarrow Y_{q} \),
          i.e., \( g = g' \cdot q \) and \( h = h' \cdot q \). Note that we may choose the index to be the same for \( g, h \) by codirectedness of \( D \),
          so in the bottom right triangle of diagram \eqref{diag:pres} both the lower and the upper paths commute.
          Precomposition with \( p_{i} \) yields \( g' \cdot q \cdot p_{i} = h' \cdot q \cdot p_{i} \).
          The morphism \( q \cdot p_{i} \) is a limit projection of \( L = \lim D' \),
          hence there exists, by Item~(1.ii),
          some index \( (j, r \colon D_{j}\rightarrow Y_{r}) \in I' \) and a morphism \( k \colon Y_{r} \rightarrow Y_{q} \) in \( I' \), i.e.,
          \( k \cdot r = q \cdot D_{ji} \), that equalizes \( g' \) and \( h' \): \( g' \cdot k = h' \cdot k \).
          A diagram chase now shows that the connecting morphism \( D_{ji} \) equalizes \( g \) and \( h \), concluding the proof. \qedhere
          \begin{equation}\label{diag:pres}
            \begin{tikzcd}[column sep=large, row sep=large]
              &
              L
              \arrow[ld, "p_j"', ]
              \arrow[d, "p_i", ]
              \arrow[rd, "g \cdot p_i = h \cdot p_i", ]
              &
              \\
              D_j
              \arrow[d, "r", ]
              \arrow[r, "D_{ji}", dashed]
              &
              D_i
              \arrow[r, "g", shift left=1]
              \arrow[r, "h"', shift left=-1]
              \arrow[d, "q", ]
              &
              Y
              \\
              Y_r
              \arrow[r, "k", ]
              &
              Y_q
              \arrow[ru, "g'", shift left=1]
              \arrow[ru, "h'"', shift left=-1]
              &
            \end{tikzcd}
          \end{equation}
\end{enumerate}
\end{proof}

Collecting all the above results, we obtain the

\begin{proof}[Proof of \autoref{thm:ind-compl}]
We verify the three conditions of \autoref{lem:ind-comp-char} for $\npro_k$.
\begin{enumerate}
\item The category $\npro_k$ has codirected limits, as it is closed under codirected limits in $\ntop$ (where bounded codirected limits exist and are formed at the level of $\nom$ and $\set$). In fact, given a codirected diagram in $\npro_k$, w.l.o.g.~of non-empty spaces, its limit in $\ntop$ is compact Hausdorff by \autoref{prop:cod-lim-ne}, and has a base of representables by \autoref{lem:cod-lim}.
\item Every space $X\in \npro_k$ is a codirected limit of $k$-bounded orbit-finite discrete spaces by \autoref{lem:prof-canon-diag}.
\item Every $k$-bounded orbit-finite discrete space is finitely copresentable in $\npro_k$ by \autoref{lem:of-copres}
\end{enumerate}

For the second statement of the theorem, asserting that $k$-bounded nominal Stone spaces are precisely codirected limits of $k$-bounded orbit-finite discrete spaces, see \autoref{thm:pro-nom-equiv}.
\end{proof}

\subsection*{Proof of \autoref{thm:ind-compl}}

We classify  the \emph{Ind-completion} of \ncakba.
Dual to the Pro-completion, the Ind-completion \( \ind(\cat{C}) \) of a category \cat{C} is the free completion of \cat{C} under directed colimits.
For a thorough presentation of the construction of directed colimits of nominal sets we refer the reader to~\cite[Section 5.3]{pit-13}.
We recall some important facts.

\begin{rem}\label{rem:nom-lfp}
  \begin{enumerate}
    \item The category \nom is locally finitely presentable and its finitely presentable objects are precisely the orbit-finite sets~\cite[Theorem 5.16, Remark 5.17]{pit-13}.
    \item Directed colimits in  \nom  are created by the forgetful functor from  \nom  to \set.
  \end{enumerate}
\end{rem}

Dually to  \autoref{lem:pro-comp-char} the Ind-completion of a small category is characterized by the following lemma.
\begin{lemma}[\!\!{\cite[Theorem~A.4]{ada-che-mil-urb-21}}]\label{lem:ind-comp-char}
  If \cat{C} is small, then \( \ind(\cat C) \) is characterized, up to equivalence, as a category \( \cat{I} \) containing \( \cat C \) as full subcategory such that
  \begin{enumerate}
    \item the category \label{lem:ind-comp-char:has-colim} \cat{I} has directed colimits,
    \item\label{lem:ind-comp-char:is-colim} every object in \cat I is a directed colimit of objects in \cat C, and
    \item\label{lem:ind-comp-char:fin-pres} every object of \cat C is finitely presentable in \cat I.
  \end{enumerate}
\end{lemma}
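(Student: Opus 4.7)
The plan is to apply Lemma~\ref{lem:ind-comp-char} to the inclusion \( \ncakba \hookrightarrow \ncofalkba \), which is a full embedding. The category \( \ncakba \) is essentially small: by Petri\c san's duality \( \nom \simeq^\op \ncaba \) restricted to \( k \)-bounded orbit-finite nominal sets, it is equivalent to \( \nomofk^\op \), which is essentially small. The task then reduces to three verifications: (i) \( \ncofalkba \) has directed colimits, (ii) every \( B \in \ncofalkba \) is a directed colimit of objects from \( \ncakba \), and (iii) every \( A \in \ncakba \) is finitely presentable in \( \ncofalkba \).

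For (i), I would form the directed colimit \( B = \colim B_i \) at the level of \( \nom \), using Remark~\ref{rem:nom-lfp} that directed colimits in \( \nom \) are created by the forgetful functor to \( \set \). Boolean operations transfer pointwise via the colimit injections, since each finite set of representatives comes from a common stage. Orbit-finite suprema are obtained by lifting any orbit-finite finitely supported \( Y \subseteq B \) to a single stage \( B_i \) (using finite presentability of orbit-finite nominal sets in \( \nom \), refining as necessary so that \( Y \) and its supremum become compatible), computing the supremum there, and mapping it forward. Local \( k \)-atomicity transfers: every \( b \in B \) has a preimage \( b_i \in B_i \) sitting in some \( A \sa B_i \), whose image \( c_i[A] \subseteq B \) is a \( k \)-atomic equivariant subalgebra containing \( b \).

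For (ii), I take the poset \( \mathcal S = \{ A \mid A \sa B \} \) of all \( k \)-atomic subalgebras of \( B \) ordered by inclusion; local \( k \)-atomicity gives \( \bigcup \mathcal S = B \). The crux is \emph{directedness} of \( \mathcal S \), which I expect to be the main obstacle: the orbit-finitely-complete BA generated by \( A_1 \cup A_2 \) inside \( B \) has atoms of the form \( a_1 \wedge a_2 \) with \( a_i \in \at(A_i) \), whose supports can reach size \( 2k \), so the naive join is only \( 2k \)-atomic. The resolution is to use local \( k \)-atomicity to refine each \( a_1 \wedge a_2 \in B \) into an orbit-finite supremum of atoms of support size at most \( k \): equivariantly pick (once per orbit of \( \at(A_1) \times \at(A_2) \)) a \( k \)-atomic subalgebra \( A_{a_1, a_2} \sa B \) containing \( a_1 \wedge a_2 \), take \( X_{a_1, a_2} \) to be the \( k \)-bounded orbit-finite set of atoms of \( A_{a_1, a_2} \) lying below \( a_1 \wedge a_2 \), and set \( X = \bigcup_{a_1, a_2} X_{a_1, a_2} \). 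Then \( X \subseteq B \) is an equivariant \( k \)-bounded orbit-finite set of pairwise disjoint elements (disjointness of the \( a_1 \wedge a_2 \)'s propagates downwards), so the orbit-finitely complete subalgebra \( A \subseteq B \) generated by \( X \) has \( X \) as its atom set and thus lies in \( \ncakba \). It contains each \( a_1 \wedge a_2 = \bigvee X_{a_1, a_2} \), hence each \( a_1 = \bigvee_{a_2 \in \at(A_2)} a_1 \wedge a_2 \) (an orbit-finite supremum by orbit-finiteness of \( \at(A_2) \)), so \( A_1, A_2 \subseteq A \). Iterating yields directedness of \( \mathcal S \) for finite subsets; the same refinement trick shows that every orbit-finite subset of \( B \) sits inside some \( A \sa B \), so orbit-finite suprema in \( B \) coincide with those in the directed system and \( B \) really is the colimit of \( \mathcal S \) in \( \ncofalkba \).

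For (iii), since \( A \in \ncakba \) is atomic complete with \( X = \at A \in \nomofk \), every element of \( A \) is an orbit-finite finitely supported supremum of atoms (using the general fact that a finitely supported subset of an orbit-finite nominal set is itself orbit-finite). Hence a morphism \( f \colon A \to B \) in \( \ncofalkba \) is determined by its restriction \( f|_X \colon X \to B \). Given a directed colimit \( B = \colim B_i \), the equivariant map \( f|_X \) factors through some \( c_i \) via an equivariant map \( g \colon X \to B_i \), by finite presentability of \( X \) in \( \nom \) (Remark~\ref{rem:nom-lfp}). This \( g \) extends to a morphism \( \bar g \colon A \to B_i \) in \( \ncofalkba \) via \( \bar g(a) = \bigvee \{ g(x) \mid x \in X,\; x \leq a \} \); preservation of Boolean operations and orbit-finite suprema follows from the atomic structure of \( A \), and essential uniqueness of the factorization reduces to essential uniqueness of the atom-restriction, again by finite presentability of \( X \) in \( \nom \).
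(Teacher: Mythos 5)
Your proposal does not prove the statement at hand. The statement (\autoref{lem:ind-comp-char}) is a general, purely category-theoretic characterization of the Ind-completion of an arbitrary small category $\cat C$: it asserts that \emph{any} category $\cat I$ containing $\cat C$ as a full subcategory and satisfying the three listed conditions is equivalent to $\ind(\cat C)$. In the paper this is an imported result, cited from the literature. A proof would have to (a) exhibit a construction of $\ind(\cat C)$ (e.g.\ as the closure of the representables under directed colimits in $[\cat C^\op,\set]$) and check that it satisfies conditions 1--3, and (b) show that any two categories satisfying these conditions are equivalent, typically by verifying that each has the universal property of the free completion under directed colimits — with the finite presentability condition 3 driving the essential uniqueness of the extension of a functor along $\cat C \hookrightarrow \cat I$, and condition 2 driving its existence. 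None of this appears in your write-up.

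What you have written instead is an application of the lemma: a verification that $\ncofalkba$ satisfies conditions 1--3 relative to $\ncakba$. That is the content of \autoref{thm:ind-compl}, a different theorem (and your argument for it does track the paper's own proof of that theorem reasonably well — your points (i), (ii), (iii) correspond to \autoref{lem:fil-colim}, \autoref{lem:dir-union}, and \autoref{lem:fin-pres} respectively, with your directedness argument in (ii) being a somewhat more hands-on variant of the paper's hull-based construction). But as a proof of \autoref{lem:ind-comp-char} itself it is circular: you invoke the very lemma you were asked to establish in your opening sentence. The abstract characterization still needs its own argument.
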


Before we show that \ncofalkba satisfies the conditions of \autoref{lem:ind-comp-char}  we first prove an auxiliary lemma.

\begin{lemma} \label{lem:fil-colim}
Directed colimits of locally \( k \)-atomic orbit-finitely complete nominal boolean algebras are formed at the level of \nom.
\end{lemma}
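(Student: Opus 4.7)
The plan is to show that for every directed diagram $D\colon I\to \ncofalkba$, the colimit $(c_i\colon D_i\to C)_{i\in I}$ of underlying nominal sets---which is formed at the level of $\set$ by \autoref{rem:nom-lfp}---supports a unique locally $k$-atomic orbit-finitely complete nominal boolean algebra structure making it the colimit in $\ncofalkba$. Since every element of $C$ has the form $c_i(x)$ with $c_i(x)=c_j(y)$ iff both become equal at some $D_k$ with $k\ge i,j$, the standard directed-colimit-of-algebras recipe applies: I would define the boolean operations on $C$ by lifting pairs of elements to a common $D_k$ and transporting the operations from there. Directedness of $I$ together with the commutativity of $D$ yields well-definedness, equivariance, and that each $c_i$ is a nominal boolean algebra homomorphism.

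For orbit-finite completeness, I would invoke \iref{rem:of-complete}{equiv} and produce suprema of $S$-orbits. Given $[x]=c_i(y)\in C$ and finite $S\subseteq\A$, equivariance of $c_i$ gives $\orb_S[x]=c_i[\orb_S y]$; the supremum $\bigvee \orb_S y$ exists in $D_i$ by orbit-finite completeness of $D_i$, and since $c_i$ is an $\ncofalkba$-morphism (hence preserves orbit-finite suprema by definition), $c_i(\bigvee \orb_S y)$ is the desired supremum in $C$.

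For local $k$-atomicity, each $[x]=c_i(y)\in C$ lifts to some $k$-atomic subalgebra $A\sa D_i$ containing $y$, which exists by local $k$-atomicity of $D_i$. Because $\at A$ is $k$-bounded and orbit-finite, every element of $A$ is the supremum of an orbit-finite subset of $\at A$, so the restriction $c_i|_A$ preserves \emph{all} suprema and its image $c_i[A]$ is a complete boolean subalgebra of $C$. A kernel-ideal argument identifies $\at(c_i[A])$ with the finitely supported subset of atoms of $A$ not lying below $\bigvee \ker(c_i|_A)$; this is a finitely supported subset of the $k$-bounded orbit-finite set $\at A$, and hence itself $k$-bounded and orbit-finite, showing that $c_i[A]\sa C$ contains $[x]$. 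The universal property then follows routinely: any cocone $(f_i\colon D_i\to B)_{i\in I}$ in $\ncofalkba$ induces a unique equivariant map $C\to B$ via the $\nom$-colimit, and directedness forces this map to respect boolean operations and orbit-finite suprema.

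The main technical obstacle I foresee is the local $k$-atomicity step: verifying that $c_i[A]$ is genuinely a $k$-atomic subalgebra of $C$, rather than merely a homomorphic image. The argument hinges on $A$ being atomic over an orbit-finite set of atoms---which is what lets $c_i|_A$ preserve all suprema rather than only orbit-finite ones---and on the subsequent identification of the atoms of the image with a finitely supported piece of $\at A$ via the kernel ideal.
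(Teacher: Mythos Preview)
Your argument for orbit-finite completeness contains a genuine circularity. You write that ``$c_i$ is an $\ncofalkba$-morphism (hence preserves orbit-finite suprema by definition),'' but at that point in the proof $c_i$ is merely the colimit injection in $\nom$, equipped with the boolean structure you have just defined; it is \emph{not} yet known to preserve orbit-finite suprema, because you have not yet established what such suprema in $C$ even are. If one reinterprets your argument as \emph{defining} $\bigvee \orb_S c_i(y) := c_i\bigl(\bigvee \orb_S y\bigr)$, then well-definedness follows easily from the connecting maps being $\ncofalkba$-morphisms, and the upper-bound half is immediate by monotonicity---but the \emph{least} upper-bound property is the real content, and you do not address it.

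Concretely: suppose $z = c_i(w)$ satisfies $\pi \cdot c_i(y) \le z$ for every $\pi \in \perm_S\A$. For each fixed $\pi$ this inequality reflects to some $D_{k(\pi)}$ with $k(\pi) \ge i$, i.e.\ $\pi \cdot D_{i,k(\pi)}(y) \le D_{i,k(\pi)}(w)$. But $\perm_S\A$ is infinite, so directedness alone does not furnish a single $k$ that works for all $\pi$; without such a common $k$ you cannot conclude $\bigvee \orb_S D_{ik}(y) \le D_{ik}(w)$ in any $D_k$, and hence cannot push the inequality forward to $C$. The paper's proof handles exactly this point: one fixes a finite set $T$ of fresh names of size $|\supp y \setminus S|$ and observes that every $\pi \in \perm_S\A$ agrees on $\supp y$ with one of finitely many permutations $\sigma_1,\ldots,\sigma_m$ (those with range in $\supp y \cup \supp w \cup T$), up to a permutation fixing $S \cup \supp y \cup \supp w$. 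This reduces the infinitely many constraints to finitely many, for which a common $k$ exists by directedness. This step is the technical heart of the lemma, and your proposal skips it entirely. (Your local $k$-atomicity argument also implicitly relies on $c_i$ preserving orbit-finite suprema, so it inherits the same gap.)
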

\begin{proof}
  Let \( D \colon I \rightarrow \ncofalkba, \) be a directed diagram whose objects we denote \( D(i) = B_{i} \) with colimiting cocone \( (b_{i} \colon B_{i} \rightarrow B)_{i \in I} \) in \nom.
  It carries a unique structure of a nominal boolean algebra such that all \( b_{i} \) are homomorphisms:
  For \( x, y \in B \) there exists by directedness some \( i \in I \) and \( x_{i}, y_{i} \in B_{i} \) with \( b_{i}(x_{i}) = x, b_{i}(y_{i}) = y \).
  We define
  \[x \lor y = b_{i}(x_{i} \lor y_{i}), \quad x \land y = b_{i}(x_{i} \land y_{i}), \neg x = b_{i}(\neg x_{i}),\]
  this definition by directedness does not depend on the choice of \( I \),
  and it makes all colimit injections equivariant boolean algebra homomorphisms.

  We prove that that \( B \) is orbit-finitely complete with respect to this structure,
  and that all morphisms \( b_{i} \) preserve suprema of orbit-finite subsets.
  By \iref{rem:of-complete}{equiv} it suffices to show that suprema of \( S \)-orbits exist.
  Thus, for \( x \in B \), we choose \( i \) with \( x = b_{i}(x_{i}) \) and show that
  \[\bigvee \orb_{S} x := b_{i}(\bigvee \orb_{S} x_{i})\]
  is the supremum of $\orb_S$. Note that this definition does not depend on the choice of \( i \) by directedness of \( D \).
  It is indeed an upper bound for all \( \pi \cdot x \in  \orb_{S} x \)
  \[\pi \cdot x = \pi \cdot b_{i}(x_{i}) = b_{i}(\pi \cdot x_{i}) \le b_{i}(\bigvee \orb_{S} x_{i}) = \bigvee \orb_{S} x. \]
  Suppose \( y = b_{i}(y_{i}) \) (choice of the index by directedness does not matter, hence we may also choose it to be \( i \)) is also an upper bound for \( \orb_{S} x \):
  \[ \forall(\pi \in \perm_{S}\A).\ \pi \cdot b_{i}(x_{i}) \le b_{i}(y_{i}). \]
  We show that there exists some $k\geq i$ in $I$ such that
  \begin{equation}
   \forall(\pi \in \perm_{S}\A).\ \pi \cdot b_{ik}(x_{i}) \le b_{ik}(y_{i}) \label{eq:k}
 \end{equation}
 For this, note that for every \( \pi \in \perm_{S}\A \) there exists a subset \( T_{\pi} \subseteq \A \)  of names fresh for \( x_{i}, y_{i}, S \) of size \( |T_{\pi}| = |\supp x_{i} \setminus S| \),
 such that
 \[ \pi [\supp x_{i} \setminus S] \subseteq (\supp x_{i} \cup \supp y_{i} \cup T_{\pi}) \setminus S. \]
 We fix such a subset \( T \), then there exist only finitely many injective functions
 \[ \supp (x_{i}) \setminus S \mono (\supp x_{i} \cup \supp x_{i} \cup T) \setminus S; \]
 each of these can be extended to a permutation
 \( \sigma_{l} \in \perm_{S}\A, l=1, \ldots, m \).
 We can now recover any \( \pi \in \perm_{S}\A \) on \( \supp x_{i} \setminus S \) ``up to \( T \)'' from one of these permutations \( \sigma_{l} \), i.e.,
 there exists an \( 1 \le l(\pi) \le m \) and a permutation \( \hat{\pi} \) fixing \( S \cup \supp x_{i} \cup \supp y_{i} \) such that \[ \forall(a \in \supp x_{i} \setminus S)\colon (\hat{\pi} \cdot \pi)(a) = \sigma_{l(\pi)}(a). \]
 Every \( \sigma_{l} \) lies in \( \perm_{S}\A \), hence \( b_{i}(\sigma_{l} \cdot x_{i}) \le b_{i}(y_{i}) \).
 Thus we get for every \( 1 \le l \le m \) some \( k\geq i \) with \( b_{ik}(\sigma_{l} \cdot x_{i}) = b_{ik}(y_{i}) \);
 since \( D \) is directed this \( k \) may be chosen independently of \( l \).
 This \( k \) now satisfies~\eqref{eq:k}: For all \( \pi \in \perm_{S}\A \) we compute
 \begin{align*}
   \pi \cdot b_{ik}(x_{i}) &= b_{ik}(\pi \cdot x_{i}) \\
                     &= b_{ik}(\hat{\pi}^{-1} \cdot \hat{\pi} \cdot \pi \cdot x_{i}) \\
                     &= \hat{\pi}^{-1} \cdot b_{ik}(\hat{\pi} \cdot \pi \cdot x_{i}) \\
                     &= \hat{\pi}^{-1} \cdot b_{ik}(\sigma_{l(\pi)} \cdot x_i) \\
                     &\le \hat{\pi}^{-1} \cdot b_{ik}(y_{i}) \\
   &= b_{ik}(y_{i}).
 \end{align*}
 This proves \( \bigvee \orb_{S} b_{ik}(x_{i}) \le b_{ik}(y_{i}) \) and hence also
 \[ \bigvee \orb_{S} x  = b_{i}(\bigvee \orb_{S} x_{i}) = b_{k}(b_{ik}(\bigvee \orb_{S} x_{i})) =  b_{k}(\bigvee \orb b_{ik}(x_{i})) \le b_{k}(b_{ik}(y_{i})) = y. \]

 Next we prove local \( k \)-atomicity.
 Let \( x \in B, x = b_{i}(x_{i}) \),
 then by \( k \)-atomicity of \( B_{i} \) there exists a \( k \)-atomic complete subalgebra \( B_{i}' \subseteq B_{i} \) containing \( x_{i} \).
 Its image \( B' = b_{i}[B_{i}'] \subseteq B \) is a complete subalgebra with \( x \in B' \),
 we must show it \( k \)-atomic.
 First, every atom \( y_{i} \in \at(B_{i}') \) with \( b_{i}(y_{i}) \ne 0 \), is mapped to an atom \( b_{i}(y_{i}) \in \at (B') \):
 If \( y' < b_{i}(y_{i}) \) there exists some \( y_{i}' \in B_{i}' \) with \( b_{i}(y_{i}') = y' \).
 Then \( y_{i} \not \le y_{i}' \), but \( y_{i} \) is an atom and hence \( y_{i} \land y_{i}' = 0 \).
 We get
 \[ y' = y' \land b_{i}(y_{i}) = b_{i}(y_{i}' \land y_{i}) = 0, \]
 this proves \( b_{i}(y_{i}) \) an atom.
 Conversely, every atom \( x \in \at(B') \) of \( B' \) arises this way:
 we choose \( x_{i} \in B_{i}' \) with \( b_{i}(x_{i}) = x \),
 then \( x_{i} = \bigvee X_{i} \) for some \( X_{i} \subseteq \at(B_{i}') \).
 Since \( X_{i} \) is orbit-finite and \( b_{i} \) preserves this supremum we get
 \( x = b_{i}(x_{i}) = \bigvee b_{i}[X_{i}] \).
 But \( x \) is an atom and hence \( x = b_{i}(y_{i}) \) for some atom \( y_{i} \in X_{i} \) of \( B_{i} \).
 This shows that \( \at(B') \) is orbit-finite and \kbnd,
 yet we still have to prove atomicity of \( B' \).
 If \( x \in B' \) then \( x = b_{i}(x_{i}) \) with \( x_{i} = \bigvee X_{i} \) for \( X_{i} \subseteq \at (B_{i}) \).
 Therefore \( x = \bigvee_{y_{i} \in X_{i}} b_{i}(y_{i}) \), and we have shown that every element of this join is either an atom or zero,
 which proves \( B' \) atomic.

 Finally, it is a routine verification that \( B \) is indeed the colimit of the diagram \( D \), i.e., it satisfies the universal mapping property.
\end{proof}

\begin{lemma}\label{lem:fin-pres}
  Every \( B \in \ncakba \) is finitely presentable in \( \ncofalkba \).
\end{lemma}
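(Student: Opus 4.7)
The plan is to reduce finite presentability of $B$ in $\ncofalkba$ to that of its atom set $X := \at(B)$ in $\nom$. By $k$-atomicity, $X$ is $k$-bounded orbit-finite, hence finitely presentable in $\nom$ by \autoref{rem:nom-lfp}, and by \autoref{lem:fil-colim} directed colimits in $\ncofalkba$ are computed at the level of $\nom$. Fix a directed diagram $D \colon I \to \ncofalkba$ with colimit cocone $(c_i \colon D_i \to C)_{i \in I}$, transitions $c_{i,j} \colon D_i \to D_j$ for $i \le j$ in $I$, and a morphism $f \colon B \to C$.

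For the factorization, I will first apply finite presentability of $X$ to the restriction $f|_X \colon X \to C$ to obtain some $i \in I$ and an equivariant map $\tilde{g} \colon X \to D_i$ with $c_i \circ \tilde{g} = f|_X$. For $\tilde{g}$ to arise as the atom-restriction of a Boolean morphism from $B$, it must satisfy (i) $\tilde{g}(a) \land \tilde{g}(a') = 0$ for all distinct $a, a' \in X$, and (ii) $\bigvee \tilde{g}[X] = 1_{D_i}$. Both conditions hold in $C$ after post-composition with $c_i$, since atoms of $B$ are pairwise disjoint with join $1_B$ and $f$ preserves these relations as a morphism of $\ncofalkba$. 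Essential uniqueness of factorizations in $\nom$, applied to the finitely presentable nominal sets $\{(a,a') \in X \times X : a \ne a'\}$ (orbit-finite as an equivariant subset of the orbit-finite $X \times X$) and the terminal object~$1$ (applied to the element $\neg \bigvee \tilde{g}[X] \in D_i$), will then yield a common later stage $j \ge i$ such that (i) and (ii) hold strictly for $\tilde{g}' := c_{i,j} \circ \tilde{g} \colon X \to D_j$.

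I will then define $g \colon B \to D_j$ by setting $g(x) := \bigvee \tilde{g}'[\{a \in X : a \le x\}]$ for $x \in B$. The join exists because the indexing set is $\supp(x)$-supported and orbit-finite (as a subset of $X$), so its $\tilde{g}'$-image is an orbit-finite finitely supported subset of the orbit-finitely complete space $D_j$. Using primality of atoms in Boolean algebras ($a \le x \lor y$ iff $a \le x$ or $a \le y$), conditions (i) and (ii), and distributivity of orbit-finite joins over meets, a routine check confirms that $g$ is an equivariant Boolean morphism preserving orbit-finite joins, and that $c_j \circ g = f$ since both sides are $\ncofalkba$-morphisms agreeing on the atoms $X$. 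For essential uniqueness, if $g_1, g_2 \colon B \to D_i$ both postcompose with $c_i$ to $f$, their restrictions to $X$ are equalized by $c_i$, so finite presentability of $X$ yields $j \ge i$ with $c_{i,j} \circ g_1|_X = c_{i,j} \circ g_2|_X$; since each $c_{i,j} \circ g_l$ preserves orbit-finite joins and every element of $B$ is an orbit-finite join of atoms, the two composites agree on all of $B$.

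The main obstacle will be securing conditions (i) and (ii) at a common stage $j$: this is achieved by recasting each as an equality of parallel equivariant maps out of finitely presentable nominal sets, then invoking essential uniqueness of factorizations in $\nom$. Once these uniform conditions are in place, the extension to a Boolean morphism preserving orbit-finite joins is a straightforward verification using the prime characterization of atoms together with infinite distributivity (available because every $k$-atomic orbit-finitely complete algebra is complete by \iref{rem:of-complete}{atomic}).
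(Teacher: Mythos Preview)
Your proof is correct and follows the same overall skeleton as the paper's: restrict $f$ to the atoms $X = \at(B)$, lift this restriction through some colimit injection using finite presentability of $X$ in $\nom$ (via \autoref{lem:fil-colim}), adjust at a later stage so that the lifted atoms are pairwise disjoint and join to $1$, and then extend uniquely to a Boolean morphism by join-completion; essential uniqueness is reduced to that of the atom restriction in the same way.

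The one substantive difference is how the pairwise-disjointness condition is secured. The paper fixes a set $S \subseteq \A$ of size $2k$ and observes that, by $k$-boundedness, only \emph{finitely many} equations $x \land y = 0$ with $\supp x \cup \supp y \subseteq S$ need to hold, and then uses equivariance to propagate these to all pairs. You instead package all disjointness constraints as a single pair of equivariant maps out of the orbit-finite (hence finitely presentable) nominal set $P = \{(a,a') \in X \times X : a \neq a'\}$, and invoke essential uniqueness in $\nom$ once. Your route is cleaner and avoids the explicit support-size combinatorics; the paper's route is more hands-on but makes the finiteness visible. Both exploit the same phenomenon: that orbit-finitely many conditions can be forced to hold at a common later stage of a directed colimit computed in $\nom$.

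One small point you leave implicit: the ``infinite distributivity'' you invoke for the meet-preservation check is valid in \emph{any} Boolean algebra for existing joins (since $(\bigvee_s (b \land s)) \lor \neg b \geq s$ for each $s$, hence $\geq \bigvee S$, whence $b \land \bigvee S \leq \bigvee_s(b \land s)$), so no appeal to completeness of $D_j$ is needed there; your parenthetical reference to \iref{rem:of-complete}{atomic} is relevant only for the domain $B$.
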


\begin{proof}
  Let \( D \colon I \rightarrow \ncofalkba, i \mapsto B_{i} \) be a directed diagram with colimiting cocone \[(b_{i} \colon B_{i} \rightarrow B)_{i \in I}.\]

  We first prove that for every \( B' \sa B \) there exists \( j \in I \) and a subalgebra \( B_{j}' \sa B_{j} \) such that \( b_{j} \) restricts to an isomorphism \( B_{j}' \cong B' \).
  We first show this on atoms, that is, consider the orbit-finite, \kbnd equivariant subset \( \at(B') \) of atoms of \( B' \).
  It is finitely presentable in \( \nom \),
  so the inclusion \( \at(B') \hookrightarrow B \) factors through some colimit injection \( b_{i} \) as \( b_{i} \cdot k \colon \at(B') \mono B_{i} \rightarrow B \).
  If we denote the image of \( k \) by \( X_{i} =  k[\at(B')] \) then injectivity of \( k \) shows \( \at(B') \cong X_{i} \).
  \[
    \begin{tikzcd}
      X_{i}
      \arrow[hook]{d}
      &
      \at(B')
      \arrow[hook]{d}
      \arrow[phantom]{l}{\cong}
      \arrow[swap]{ld}{k}
      \\
      B_{i}
      \arrow{r}{b_{i}}
      &
      B
    \end{tikzcd}
  \]
  To extend this isomorphism  to subalgebras we choose a subset \( S \subseteq \A \) of size \( |S| = 2k \) and consider all (finitely many) equations
  \[x \land y = 0 \qquad x \ne y \in X_{i}, \supp x \cup \supp y \subseteq S \]
  in \( B_{i} \) together with the equation \( \bigvee X_{i} = 1 \).
  These equations hold in \( B \) when \( b_{i} \) is applied.
  By directedness there now exists some \( j \in I \) such that all these equations with \( b_{ij}  \) applied hold in \( B_{j} \), i.e.,
  if we denote \( X_{j} = b_{ij}[X_{i}] \) then \( \bigvee X_{j} = 1 \) and
  \[ x \land y = 0 \qquad x \ne y \in X_{j}, \supp x \cup \supp y \subseteq S. \]
  But we find for arbitrary \( x, y \in X_{j} \) a permutation \( \pi \) mapping \( \supp x \cup \supp y \) to \( S \), so
  \[\supp (\pi \cdot x) \cup \supp (\pi \cdot y) \subseteq \pi \cdot (\supp x \cup \supp y) \subseteq S, \]
  and whence
  \[ x \land y = \pi^{-1} \cdot \pi \cdot x \land \pi^{-1} \cdot \pi \cdot y = \pi^{-1} \cdot (\pi \cdot x \land \pi \cdot y) = \pi^{-1} \cdot 0 = 0. \]
  So the set \( X_{j} = b_{ij}[X_{i}] = b_{ij}[k[\at(B')]] \) generates a subalgebra \( B_{j}' \sa B_{j} \) with atoms \( \at(B_{j}') = X_{j} \),
  and the isomorphism \( X_{j} \cong \at(B') \) induces an isomorphism \( B_{j}' \cong B' \).
  The result is the following extended diagram
  \[
    \begin{tikzcd}
      &
      \at(B')
      \arrow[phantom]{ld}{\rotatebox[origin=c]{35}{\(\cong\)}}
      \arrow[phantom]{d}{\cong}
      \arrow[hook]{r}
      &
      B'
      \arrow[phantom]{d}{\cong}
      \arrow[shiftarr={xshift=20}, hook]{dd}{}
      \\
      X_{i}
      \arrow[phantom]{r}{\cong}
      \arrow[hook]{d}
      &
      X_{j}
      \arrow[hook]{d}
      \arrow[hook]{r}
      &
      B_{j}'
      \arrow[hook]{ld}
      \arrow[hook]{d}
      \\
      B_{i}
      \arrow[shiftarr={yshift=-15}, swap]{rr}{b_{i}}
      \arrow{r}{b_{ij}}
      &
      B_{j}
      \arrow{r}{b_{j}}
      &
      B
    \end{tikzcd}
  \]

  We are ready to prove finite presentability of \( C \in \ncakba \). Take any morphism \(f \colon C \rightarrow B \).
  The image \( f[C] \subseteq B \) is a subalgebra \( f[C] \sa B\).
  Hence, there exists some \( j \in I \) and a subalgebra \( C_{j} \sa B_{j} \) such that \( b_{j} \) restricts to an isomorphism \( C_{j} \cong f[C] \).
  This induces a factorization of \( f \) via
  \[
    \begin{tikzcd}
      C_{j}
      \arrow[phantom]{r}{\cong}
      \arrow[hook]{d}
      &
      f[C]
      \arrow[hook]{rd}
      &
      \\
      B_{j}
      \arrow[shiftarr={yshift=-15}, swap]{rr}{b_{j}}
      &
      C
      \arrow[dashed]{l}
      \arrow{r}{f}
      \arrow[two heads]{u}
      &
      B
    \end{tikzcd}
  \]

  Finally, to prove its essential uniqueness, assume there exist two factorizations \( g, h \) of some morphism \( C \rightarrow B \) through the same injection \( b_{j} \).
  The colimit  \( B = \colim D \) is taken in \nom, wherein the orbit-finite set of atoms \( \at(C) \subseteq C \) is finitely presentable,
  hence there exists some \( i \in I \) such that \( b_{ji} \) joins the respective restrictions of \( g, h \) to \( \at(C) \).
  Both \( g \) and \( h \) are morphisms of orbit-finitely complete atomic boolean algebras and hence determined by their values on atoms, so \( b_{ji} \) also joins \( g \) and \( h \). \qedhere
  \[
    \begin{tikzcd}
      &
      C
      \arrow[xshift=2]{d}{g}
      \arrow[xshift=-2, swap]{d}{h}
      \arrow{rd}{}
      &
      \\
      \at(C)
      \arrow[yshift=-2, swap]{r}{}
      \arrow[yshift=2]{r}{}
      \arrow[hook]{ru}
      \arrow{rd}
      &
      B_{j}
      \arrow[dashed, swap]{d}{b_{ji}}
      \arrow{r}{b_{j}}
      &
      B
      \\
      &
      B_{i}
      \arrow{ru}{b_{i}}
      &
    \end{tikzcd}
  \]
\end{proof}

\begin{lemma}\label{lem:dir-union}
  Every \( B \in \ncofalkba \) is the directed union of its complete \( k \)-atomic subalgebras.
\end{lemma}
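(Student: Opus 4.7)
The plan is to verify the two defining conditions of a directed union: every element of $B$ lies in some complete $k$-atomic subalgebra (which is immediate from the definition of local $k$-atomicity), and the poset of such subalgebras, ordered by inclusion, is directed. The content of the lemma is thus the directedness claim: given $A_1, A_2 \sa B$, I shall construct a common extension $A_3 \sa B$ with $A_1, A_2 \subseteq A_3$.

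Setting $X_i := \at(A_i)$ for $i = 1, 2$, both $k$-bounded orbit-finite nominal sets, I consider the equivariant orbit-finite subset $Y := \{(a,b) \in X_1 \times X_2 \mid a \wedge b \ne \bot\}$. For each orbit $O$ of $Y$ I pick a representative $(a_O, b_O) \in O$ and, by local $k$-atomicity, some complete $k$-atomic subalgebra $C_O \sa B$ containing $a_O \wedge b_O$. Because subalgebras are equivariant by definition, $\pi \cdot C_O = C_O$ for every permutation $\pi$, so the same $C_O$ works for every $(a,b) = \pi \cdot (a_O, b_O)$ in the orbit; in particular $a \wedge b \in C_O$. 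Define $T_{a,b} := \{t \in \at(C_O) \mid t \le a \wedge b\}$, so that $a \wedge b = \bigvee T_{a,b}$. Each $T_{a,b}$ is a finitely supported subset of the orbit-finite $k$-bounded set $\at(C_O)$, hence orbit-finite and $k$-bounded, and it depends equivariantly on $(a,b)$.

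Let $X := \bigcup_{(a,b) \in Y} T_{a,b}$. This is an equivariant, $k$-bounded, and orbit-finite subset of $B$ (the latter because $Y$ has only finitely many orbits, each contributing only finitely many orbits to $X$). Its elements are pairwise disjoint in $B$: two atoms of the same $C_O$ meet to $\bot$, while atoms from $T_{a,b}$ and $T_{a',b'}$ with $(a,b) \ne (a',b')$ lie below the disjoint elements $a \wedge b$ and $a' \wedge b'$. By distributivity of meets over orbit-finite joins and $\bigvee X_i = 1$ in $A_i$,
\[
  \bigvee X \;=\; \bigvee_{(a,b) \in X_1 \times X_2}(a \wedge b) \;=\; \Bigl(\bigvee_{a \in X_1} a\Bigr) \wedge \Bigl(\bigvee_{b \in X_2} b\Bigr) \;=\; 1.
\]
I then define $A_3 := \{\bigvee S \mid S \in \powfs(X)\}$. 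Pairwise disjointness together with $\bigvee X = 1$ make $S \mapsto \bigvee S$ an equivariant boolean isomorphism $\powfs(X) \cong A_3$, so $A_3 \in \ncakba$. Moreover, any finitely supported join in $B$ of elements $\{\bigvee S_i\}_i \subseteq A_3$ equals $\bigvee \bigcup_i S_i$, a join over a finitely supported subset of the orbit-finite $X$, which exists by orbit-finite completeness; hence $A_3$ is closed under the suprema of $B$, yielding $A_3 \sa B$. Finally, every $a \in X_1$ satisfies $a = \bigvee \bigcup_{b \in X_2} T_{a,b}$, a finitely supported join within $X$, so $A_1 \subseteq A_3$; symmetrically $A_2 \subseteq A_3$.

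The main obstacle is the mismatch of support bounds: the subalgebra of $B$ generated by $A_1 \cup A_2$ has atoms of the form $a \wedge b$ with $|\supp(a \wedge b)| \le 2k$, so it is only $2k$-atomic. Local $k$-atomicity of $B$ is precisely what allows refining each such meet into $k$-bounded pieces $T_{a,b}$, and the automatic equivariance of subalgebras (so that a single $C_O$ per orbit of $Y$ suffices, chosen among finitely many orbits) is what keeps the collective refinement $X$ orbit-finite.
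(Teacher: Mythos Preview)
Your proof is correct and follows essentially the same approach as the paper: form the pairwise meets $a\wedge b$ of atoms of $A_1,A_2$, invoke local $k$-atomicity to place each meet (one choice per orbit) in some $k$-atomic subalgebra, refine the meets by the atoms of those subalgebras, and let $A_3$ be generated by the resulting $k$-bounded orbit-finite partition of~$1$. Your execution is in fact a bit cleaner than the paper's, since you explicitly take only the atoms $T_{a,b}$ lying below $a\wedge b$, which makes the pairwise-disjointness argument immediate; the paper's phrasing of the analogous step (its sets $X_y$) is terser and needs to be read the same way for the disjointness claim $x_y\wedge x_{y'}\le y\wedge y'=0$ to go through.
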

\begin{proof}
  It suffices to show that the poset of all complete \( k \)-atomic subalgebras is directed, then its union is equal to \( B \) by definition.
  Given orbit-finitely complete \( k \)-atomic subalgebras \( C_{0}, C_{1} \subseteq B \) we construct a orbit-finitely complete \( k \)-atomic subalgebra \( C\subseteq B\) containing both \( C_{0}, C_{1} \) as subalgebras.
  The sets \( \at(C_{0}), \at(C_{1}) \) are orbit-finite and thus the set
  \[ X = \{ c_{0} \land c_{1} \mid c_{i} \in \at(C_{i})\} \]
  is orbit-finite and forms a partition of \( 1 \) in \( B \).
  The subalgebra \( C' \subseteq B \) generated by \( X \) is hence orbit-finitely complete and atomic,
  but not necessarily \( k \)-atomic since elements of \( X \) might have support larger than \( k \).
  We embed \( C' \) into a larger \( k \)-atomic subalgebra \( C \).
  For every orbit-representative \( y \in X \) we find an orbit-finitely complete \( k \)-atomic boolean subalgebra \( B_{y} \subseteq B \) containing \( y \) with atoms \( X_{y} \subseteq B_{y} \).
  The hull \( Y = \hull (\bigcup_{\orb y \subseteq X} X_{y}) \) is an orbit-finite partition of \( 1 \):
  clearly \( \bigvee Y \ge \bigvee X = 1 \),
  and if \( x_{y} \in X_{y}, x_{y'} \in X_{y'} \) for \( y, y' \in X \)
  then \( x_{y} \land x_{y'} \le y \land y' = 0 \).
  The subalgebra \( C \subseteq B\) generated by \( Y \)  then is orbit-finitely complete, \( k \)-atomic and it contains both \( C_{0}, C_{1} \) as subalgebras.
\end{proof}

We are now ready to prove that \ncofalkba is the Ind-completion of \ncakba by verifying the three conditions from \autoref{lem:ind-comp-char}:\\
(1) The category \ncofalkba has all directed colimits by \autoref{lem:fil-colim},
(2) every object arises as a directed colimit of its \( k \)-atomic subalgebras by \autoref{lem:dir-union},
and (3) every \( B \in \ncakba \) is finitely presentable in \( \ncofalkba \) by \autoref{lem:fin-pres}.

\subsection*{Details for \autoref{rem:concrete-duality}}

We give a detailed proof that the equivalence \( \npro_{k} \simeq^{\mathrm{op}} \ncofalkba \) indeed works as described in \autoref{rem:concrete-duality}.

\begin{rem}\label{rem:prime-filter-equiv}
In the following, ``prime filter'' always means nominal orbit-finitely complete prime filter.
  To verify that a finitely supported subset \( F \subseteq B \in \ncofalkba \)  is a prime filter, the condition \( \bigvee X \in F \Rightarrow X \cap F \ne \emptyset \) needs only to be checked for \kbnd orbit-finite sets \( X \) whose elements are pairwise disjoint ($x\wedge y=0$ for $x\neq y$ in $X$).
  To see this,  let \( X \subseteq B \) be orbit-finite and supported by \( S \).
  Then it is included in a subalgebra \( X \subseteq A \sa B \).
  The set \( X' = \{a \in \at(A) \mid \exists x \in X.\ a \le x\} \)
  is a \kbnd orbit-finite set of pairwise disjoint elements, and $\bigvee X = \bigvee X'$.  Moreover \( X' \cap F \ne \emptyset \) implies
 \( X \cap F \ne \emptyset \) because $F$ is upwards closed. Thus the primality condition for $X'$ implies that for $X$.
\end{rem}

\noindent\emph{From \kbnd nominal Stone spaces to locally \( k \)-atomic orbit-finitely complete nominal boolean algebras.}\\
           A \kbnd nominal Stone space \( X \) is mapped via duality to the directed colimit of the diagram \( \colim (\powfs \circ (\slice{X}{\nomofk})) \).
          We show that this colimit is given by boolean algebra \( \clo(X) \) of clopens of \( X \) with colimiting cocone
          \[ f^{-1} \colon \powfs(X_{f}) \rightarrow \clo(X) \qquad  (f \colon X \rightarrow X_{f} \text{ with } X_{f} \in \nomofk). \]
          Filtered colimits in \ncofalkba are formed in \set by \autoref{lem:fil-colim},
          so it suffices that to prove that
          (i) this cocone is jointly epimorphic,
          and (ii) any two finitely supported subsets \( U, V \in \powfs(X_{f}) \) merged by \( f^{-1} \) are already merged by some connecting map.
          For (i) it suffices to note that every clopen of \( X \) is representable by \autoref{lem:rep:lim}.
          For (ii) let \( f^{-1}[U] = f^{-1}[V] \).
          The subset \( f[X] \hookrightarrow X_{f} \) is a \kbnd orbit-finite nominal set,
          hence the corestriction \( X \epi f[X] \) lies in the canonical diagram for \( X \).
          The inclusion \( \iota \colon f[X] \hookrightarrow X_{f} \) is a connecting morphism merging \( U \) and \( V \):
          \[\iota^{-1}[U] = \{f(x) \mid f(x) \in U\} = \{f(x) \mid x \in f^{-1}[U]\} = \{f(x) \mid x \in f^{-1}[V]\} = \iota^{-1}[V].\]
\noindent\emph{From locally \( k \)-atomic orbit-finitely complete nominal boolean algebras to \kbnd nominal Stone spaces.}\\
          In the other direction,
          the duality maps \( B \in \ncofalkba \) to the codirected limit
          \[ L = \lim (\at \circ (\ncakba \dhook B)) \] in \( \npro_{k} \) with limit projections \( p_{A} \colon L \rightarrow \at(A) \).
          We prove that this limit is isomorphic to the space \( \fp(B) \) of prime filters of \( B \) whose topology has basic open sets \( \hat{b} = \{F \in \fp(B) \mid b \in F\} \). The limiting cone of \( \fp(B) \) is given by
          \[x(-)_{A} \colon \fp(B) \rightarrow \at(A) \qquad A \sa B, \]
          where the morphism \(x(-)\) maps a prime filter \( F \subseteq B \) to the unique atom of \( A \) that lies in \( F \).

          (1) We first show that the family \( x(-)_{A} \) is a well-defined cone.
          Then, since the limit \( L \) is formed in \set by \autoref{lem:cod-lim-set}, the universal property of \( L \) induces a morphism
          \[x(-) \colon \fp(B) \rightarrow L \qquad F \mapsto (x(F)_{A})_{A \sa B}. \]

          (a) We prove that every \(x(-)_{A}\) is a function, i.e.,
          that for every nominal orbit-finitely  complete prime filter \( F \subseteq B \) and \( A \sa B \) the set \( x(-)_{A} = \at(A) \cap F \) is a singleton.
          The set \( \at(A) \cap F \) is non-empty as \( 1 = \bigvee \at(A) \in F \) is the join over an orbit-finite set,
          so some \( x \in \at (A) \) lies in  \( F \).
          If \( \at (A) \cap F \) were to contain two elements \( a \ne b \),
          then \( 0 = a \wedge b \in F \) since \( F \) is downwards directed,
          which would be a contradiction to \( F \) being a proper subset of \( B \).
          By the equation \( x(-)_{A} = \at(A) \cap (-) \) equivariance is clear.

          (b) The family \( (x(-)_{A})_{A \sa B}\) indeed forms a cone.
          Let \( C, D \le_{\mathrm{of}, k} B \) with \( C \hookrightarrow D \),
          and recall that the connecting morphism \( \at(D) \epi \at(C) \) maps \( d \) to the unique \( c \in \at(C) \) with \( c \ge d \).
          Let \( F \in \fp(B) \) be a prime filter.
          If \( c \in \at(C) \) is the unique atom with \( c \ge x(F)_{A} \) then \( c \in F \) as \( F \) is upwards closed,
          hence \( c \in \at(C) \cap F  \),
          so \( x(-)_{A} \) is a cone.

          (2) Consider the map
          \[ F(-) \colon L \rightarrow \powfs B, \qquad x \mapsto F(x) = \{y \mid y \ge x_{A} \text{ for some } A \sa B, \} \subseteq B. \]
          This assignment is equivariant and \( F(x) \) is finitely supported by \( \supp x \),
          so \( F(x) \) is \kbnd.
          We show that \( F(-) \) corestricts to \( \fp(B) \), i.e.,
          that every \( F(x) \) is a prime filter.

          (a) The set \( F(x) \) is upwards closed by definition.

          (b) The set \( F(x) \) is downwards directed.
          If \( y, y' \in F(x) \) with \( y \ge x_{A}, y' \ge x_{A'} \) take the upper bound \( C \sa B \) of \( A, A' \).
          By compatibility, the component \( x_{C} \in F(x) \) then satisfies \( x_{C} \le x_{A} , x_{A'} \), whence \( x_{C} \le x_{A} \wedge x_{A'} \).
          By (a) the set \( F(x) \) is upwards closed,
          so \( y \land y' \ge x_{A} \wedge x_{A'} \ge x_{C} \in F(x) \).

          (c) We show that \( F(x) \) is nominally prime by using \autoref{rem:prime-filter-equiv}.
          Let \( X \) be a \kbnd orbit-finite set of disjoint elements with \( \bigvee X \in F(x) \).
          By definition of \( F(x) \) there exists some \( A \sa B \) with \( \bigvee X = y \ge x_{A} \in \at(A) \).
          The set \( X' = \at(A) \setminus \{a \in \at(A) \mid a \le y\} \cup X \) forms a \kbnd orbit-finite partition of \( 1 \),
          so it generates a subalgebra \( A' \sa B \).
          The element \( x_{A'} \in F(x) \) however cannot lie in \( \at(A) \setminus  \{a \in \at(A) \mid a \le y\} \),
          since then \( x_{A'} \land x_{A} = 0 \in F(x) \).
          This shows that \( x_{A'} \in X \cap F(x) \subseteq F(x) \).

          (3) One readily verifies that the assignments \( x(-), F(-) \) are mutually inverse:
          In one direction \( x(F(x))_{A} = \at A \cap F(x) = x_{A} \).
          In the other direction \( F(x(F)) = F \) let for ``\( \subseteq \)'' \( y \in F(x(F)) \),
          then there exists some subalgebra \( A \) such that \( y \ge x(F)_{A} = \at(A) \cap F \).
          Since \( x(F)_{A} \in F \) also \( y \in F \), as \( F \) is upwards closed.
          For ``\( \supseteq \)'', if \( y \in F \) we take some subalgebra \( A \sa B \) such that \( y \in A \).
          It cannot occur that \( y \not \ge x(F)_{A} \in F \),
          as this would imply \( 0 = y \land x(F)_{A} \in F \).
          So \( y \ge x(F)_{A} \), which proves \( y \in F(x(F)) \).

          Finally, one easily sees that the isomorphisms \( x(-), F(-) \) are compatible with the limit projections \( x(-)_{A}, p_{A} \).
          The basis of \( \fp(B) \) is induced by the family \(x(-)_{A}\) proving that also \( L \cong \fp(B) \) as topological spaces. \qed
          \[
            \begin{tikzcd}[column sep=small]
              L
              \arrow[rr, "F(-)", bend left=13]
              \arrow[swap]{rd}{p_A}
              \arrow[phantom]{rr}{\cong}
              &
              &
              \fp\mathrlap{B}
              \arrow[ll, "x(-)", bend left=13]
              \arrow{ld}{x(-)_{A }}
              \\
              &
              \at(A)
              &
            \end{tikzcd}
          \]

\begin{rem}\label{rem:canon-not-cod}
  The diagram \( \slice{\fm}{\nmonofk} \) generally fails to be codirected.
  For example, take \( \Sigma = \A \) and \( k = 1 \), and consider the \kbnd[1] monoids $P_1$ and $P_2$ with carrier $\A + \{1\}$ and multiplication projecting on the first and second component, respectively.
  Define the equivariant monoid morphisms \( p_{i} \colon \fm[\A] \rightarrow P_{i} \) projecting a word on its first, respectively last, component.
  For the sake of contradiction,
  we assume that there exists some lower bound \( p \colon \fm[\A] \rightarrow P \) for \( p_{i}\), $i=1,2$ in \( \slice{\fm[\A]}{\nmonof{_{,1}}} \) with connecting morphisms \( k_{i} \colon P \rightarrow P_{i}\).
  Let $a,b\in \A$ be distinct names. The element \( p(ab) \in P \) is then mapped by the connecting morphisms \( k_{i} \) to the names \( a, b \), respectively.
  But then
  \[ \{a,b\} = \supp{k_{1}(p(ab))} \cup \supp{k_{2}(p(ab))} \subseteq \supp{p(ab)}, \]
  so \( |\supp{p(ab)}| > 1 \), hence $P$ ist not \kbnd[1], a contradiction.

  The missing codirecteness essentially refutes the nominal topological space \( \lim(\slice{\fm}{\nomofk}) \) from a language-theoretic viewpoint:
  The languages recognizable by $k$-bounded orbit-finite monoids (and thus also the corresponding representables of \( \lim(\slice{\fm}{\nomofk}) \)) are not closed under intersection.
  To see this, consider the above monoids \( P_{1}, P_{2} \).
  They respectively recognize the data languages \( a\A^{*} \) and \( \A^{*}a \) (for fixed $a\in \A$),
  but their intersection \( L_{a} = a\A^{*}a \) is not recognizable by any \( 1 \)-bounded orbit-finite nominal monoid.
  Indeed, for the sake of contradiction,
  assume that \( L_{a} \) is recognized by a \( 1 \)-bounded nominal monoid \( M \) as \( L_{a} = h^{-1}[P_{a}] \) for \( h \colon \fm[\A] \rightarrow M \supseteq P_{a} \).
  Then \( h\) also recognizes every language \( L_{b} = \tr a b \cdot L_{a} \) and hence also the equivariant language \( L = \bigcup_{a \in \A} L_{a} = h^{-1}[\bigcup_{a \in \A}P_{a}] \).
  But for equivariant languages we can compute the syntactic nominal monoid~\cite{boj-13},
  and that of \( L \) is easily seen to be isomorphic to \( P_{1} \times P_{2} \).
  This monoid is not \kbnd[1],
  so  no factorization of the syntactic morphism through \( h \) can exist, a contradiction.
\end{rem}

\subsection*{Details for \iref{rem:canon}{dense}}
(a)  We prove  \( \eta[\fm] \subseteq \spfm \) is dense.
  If \( R = \hat{h}^{-1}[P] \subseteq \spfm \) is a non-empty representable basic open set,
  then \( P \) is non-empty and there exists some word \( w \in h^{-1}[P]  \).
  Then $\eta(w)\in R$, i.e.\ $R\cap \eta[\fm]\neq\emptyset$.

\medskip\noindent 
(b)  We show that the unit \( \eta \colon \fm \rightarrow \spfm \) is not injective provided that $\Sigma$ is nontrivial, that is, there exists an element \( x \in \Sigma \) with non-empty least support.
  Let \( s \colon \fm \rightarrow \pow_{k} \A \) be a support bound and choose any \( w \in \fm  \) with \( \supp w \not \subseteq s(w) \);
  such \( w \) must exist since the size of the sets \( s(w) \) is bounded by \( k \) while that of the supports \( \supp w \) is not (\( \Sigma \) is not a trivial nominal set).
  Let \( a \in \supp w \setminus s(w) \) and choose \( b \in \A\) fresh. There exists a permutation \(  \pi \in \perm_{s(w)} \A \) mapping \( a \) to \( b \).
  We know \( w \ne \pi \cdot w \) since \( b = \pi \cdot a \in \pi \cdot \supp w =  \supp (\pi \cdot w) \), and \( b \not \in \supp w \).
  But every \sbnd \( h \) identifies \( h(w) = \pi \cdot h(w) = h(\pi \cdot w): \) the image \( h(w) \) is supported by \( s(w) \) which is fixed by \( \pi \). Thus $\eta(w)=\eta(\pi\o w)$.

\subsection*{Details for \iref{rem:canon}{monoid-structure}}
We prove that the monoid multiplication of $\spfm$ is continuous.
        Let \( \hat{h}^{-1}(m) \) be a basic open environment of \( x \cdot y \).
        The set \( U = \hat{h}^{-1}(\hat{h}(x)) \times \hat{h}^{-1}(\hat{h}(y)) \) is an open environment of \( (x, y) \),
        and under multiplication it is mapped to \( \hat{h}^{-1}(m) \), i.e.,
        for all \( (x', y') \in  U \) we get
        \[\hat{h}(x' \cdot y') = \hat{h}(x') \cdot \hat{h}(y') = \hat{h}(x) \cdot \hat{h}(y) = \hat h(x\cdot y) = m. \]

\subsection*{Proof of \autoref{lem:pfm-metr}}
  \begin{enumerate}
    \item
(a)  We show that the respective bases are included in each other.
  Let \( \hat{h}^{-1}[m] \subseteq \spfm, h \colon \fm \rightarrow_{s} M \) be a basic open neighbourhood of \( x \in \spfm \) in the limit topology of \spfm.
  If \( n >  |\orb(M)| \), then \( \hat{d}(x, y) < 2^{-n} \) implies that \( \hat{h}(x) = \hat{h}(y) = m \),
  and hence \( B_{2^{-n}}x \subseteq \hat{h}^{-1}[m] \).

  In the other direction, let \( B_{2^{-n}}x \) be a basic open neighbourhood of \( y \) in the metric topology.
  Every alphabet \( \Sigma \) only has finitely many non-isomorphic quotients, hence there exist finitely many \sbnd homomorphisms \( h_{i} \colon \fm \rightarrow_{s} M_{i} \) whose codomain has no more than \( n \) orbits.
  Let \( h = \bigvee h_{i} \) be the join of those morphisms with connecting morphisms \( f_{i} \).
  If now \( z \in \hat{h}^{-1}[\hat{h}(y)] \) then for all \( h_{i} \) we get  \[ \hat{h}_{i}(z) = f_{i}(\hat{h}(z)) = f_{i}(\hat{h}(y)) = \hat{h}_{i}(y) = \hat{h}_{i}(x), \]
  which proves \( d(x, z) < 2^{-n} \),
  whence \( \hat{h}^{-1}(y) \subseteq B_{2^{-n}}x \).

\medskip\noindent (b) To prove completeness, we first simplify the description of \spfm.
  Note that for each \( n \in \N \) there only exist, up to isomorphism, finitely many \emph{\kbnd} orbit-finite nominal monoids with \( n \) orbits.
          We can thus impose a monotone enumeration \( I \colon (\can) \cong \N \) on the diagram \can by first enumerating homorphisms whose codomain has one orbit,
          then those with two orbits, etc., such that
  \begin{equation}
  (\forall e, e' \in \can)\colon e \le e' \Rightarrow I(e) \le I(e'). \label{eq:indexing}
  \end{equation}
  We denote \( \cod(I(e)) = M_{i} \), so by~\eqref{eq:indexing} \( i \le j \Rightarrow  |\orb(M_{i})| \le | \orb(M_{j}) | \).
  Elements of \spfm are thus equivalent to compatible families  \( (x_{i} \in M_{i})_{i \in \N} \).
  Now assume that \( ((x_{i})_{n}) \) is a \fs cauchy sequence of compatible families in \spfm supported by \( S \subseteq \A \).
  Spelling out the definition of \( \hat{d} \) being Cauchy reads
  \[
    \forall(k \in \N)\colon \exists(N(k) \in \N)\colon \forall(n, n' \ge N(k))\colon  |\orb(M_{i})| \le k \Rightarrow x_{i, n} = x_{i, n'}.
  \]
  By \eqref{eq:indexing} this condition simplifies to
  \begin{equation}
    \label{eq:cauchy:2}
    \forall(k \in \N)\colon \exists(N(k) \in \N)\colon \forall(n, n' \ge N(k))\colon x_{i, n} = x_{i, n'}.
  \end{equation}
  We now construct the limit \( (y_{i})_{i \in \N} \in \spfm \) of
  this sequence by setting \( y_{i} = x_{i, n_{i}} \), where
  \( n_{i} \) is chosen such that it satisfies (i)
  \( n_{i} \ge N(i) \) and (ii)
  \( i \le j \Rightarrow n_{i} \le n_{j} \).  The family
  \( (y_{i})_{i \in N} \) is supported by \( S \); we prove that
  \( (y_{i})_{i \in \N} \) is indeed a compatible family: Let
  \( y_{j} \in M_{j} \) and \( h_{ji} \colon M_{j} \epi M_{i} \) then
 \[h_{ji}(y_{j}) = h_{ji}(x_{j, n_{j}}) \stackrel{\text{a}}{=} x_{i, n_{j}} \stackrel{\text{b}}{=} x_{i, n_{i}} = y_{i}.\]
 Equality (a) holds since \( (x_{i})_{n_{i}} \) is a compatible family.
 For equality (b) we know that \( i = I(e), j = I(e') \) for some quotients \( e, e' \in \can \).
 That \( h_{ij} \) is a connecting morphism implies \( e \le e' \), and thus
 \[ N(i) \stackrel{\text{(i)}}{\le} n_{i} = n_{I(e)} \stackrel{\mathclap{\text{\eqref{eq:indexing} + (ii)}}}{\le} n_{I(e')} = n_{j},  \] which now by~\eqref{eq:cauchy:2} implies \( x_{i, n_{j}} = x_{i, n_{i}} \).

 It is easy to see that \( (y_{i}) \) is indeed the limit of the sequence \( ((x_{i})_{n}) \).
    \item The relation identifying, \( v, w \) iff \( d_{s}(v, w) = 0 \) is precisely the kernel of \( \eta \colon \fm \rightarrow \spfm \),
  and \( \hat{d}(\iota([v]), \iota([w])) = d_{s}([v], [w]) \) holds by definition;
  therefore \( \iota \) is an dense isometry. \qed
    \end{enumerate}

\subsection*{Proof of \autoref{thm:rec-rep-equiv}}
  \emph{From \( s \)-recognizable languages to representable subsets.}
  For a language \( L \subseteq \fm \) recognized as \( L = h^{-1}[P] \) by an \sbnd homomorphism \( h \colon \fm \rightarrow_{s} M, P \subseteq_{\text{fs}} M \) we define its corresponding representable clopen as \( \hat{L} = \hat{h}^{-1}[P] \).
  \[
    \begin{tikzcd}[row sep=small, column sep=small]
      L = h^{-1}[P]
      \arrow[phantom]{r}{\subseteq}
      &
      \fm
      \arrow{r}{h}
      \arrow{d}{}
      &
      M
      \arrow[phantom]{r}{\supseteq}
      &
      P
      \\
      \hat{L} = \hat{h}^{-1}[P]
      \arrow[phantom]{r}{\subseteq}
      &
      \spfm
      \arrow[dashed]{ru}{\hat{h}}
      &
      &
    \end{tikzcd}
    \]
  The definition of \( \hat{L} \) is independent of \( h \) and \( P \):
  If \( L = h'^{-1}[P'] \) for \( h' \colon \fm \rightarrow_{s} M', P' \subseteq_{\text{fs}} M'  \),
  then we take the coimage \( e \colon \fm \epi_{s} E \) of $\langle h, h' \rangle$ with connecting morphisms \( f \colon E \rightarrow M, f' \colon E \rightarrow M' \).
  In \( E \) we now have \( f^{-1}[P] = f'^{-1}[P'] \) and whence
  \( \hat{h}^{-1}[P] = \hat{e}^{-1}[f^{-1}[P]] = \hat{e}^{-1}[f'^{-1}[P']] = \hat{h}'^{-1}[P'] \).
  \[
    \begin{tikzcd}[column sep=small]
      &
      \fm
      \dar{\eta}
      \dlar[swap]{h}
      \drar{h'}
      &
      \\
      P \subseteq M
      &
      \spfm
      \lar[swap]{\hat{h}}
      \rar{\hat{h'}}
      \dar{\hat{e}}
      &
      M' \supseteq P'
      \\
      \phantom{X}
      \arrow[shiftarr={yshift=-8}, equal,
      start anchor={[xshift=25]},
      end anchor={[xshift=-35]},
      ]{rr}
      &
      \mathllap{f^{-1}[P]} \subseteq E \supseteq \mathrlap{f'^{-1}[P']}
      \ular[swap]{f}
      \urar{f'}
      &
      \phantom{X}
    \end{tikzcd}
    \]

  The clopen \( \hat{L} \) is in fact is equal to the topological closure \( \overline{ \eta[L] } \) of \( \eta[L] \subseteq \spfm \).
  Since \( \eta[L] \subseteq \hat{h}^{-1}[P] = \hat{L} \) and \( \hat{L} \) is clopen we get the direction \( \overline{ \eta[L] } \subseteq \hat{L} \).
  Conversely, \( \eta[\fm] \) is dense in \( \spfm \),
  so \(\eta[L] = \eta[\fm] \cap \hat{L}\) is dense in \( \spfm \cap \hat{L} = \hat{L} \).

\medskip\noindent
[Proof of \(\eta[L] = \eta[\fm] \cap \hat{L}\): ($\seq$) If $w\in L$ then $\hat h\cdot \eta(w)=h(w)\in P$, hence $\eta(w)\in \hat h^{-1}[P] = \hat L$. \\
($\supseteq$) If $w\in \fm$ and $\eta(w)\in \hat L=\hat h^{-1}[P]$, then $w\in \eta^{-1}\hat h^{-1}[P]=h^{-1}[P]=L$, whence $\eta(w)\in \eta[L]$.]

\medskip\noindent \emph{From representable subsets to recognizable languages.}
  If \( \hat{L} \subseteq \spfm \) is representable,
  we simply define the corresponding language as \( L = \eta^{-1}[\hat{L}] \).
  We show that \( L \) is indeed \( s \)-recognizable.
  The set \( \hat{L} \) is representable,
  hence there exists a continuous equivariant function \( f \colon \spfm \rightarrow X\) into a (w.l.o.g.\ \kbnd) orbit-finite set such that \( \hat{L} = f^{-1}[P] \) for some finitely supported \( P \subseteq X \).
  Since \( X \) is finitely copresentable in \( \npro_{k} \) the map \( f \) factors through some \( \hat{h} \) as \( f = p \cdot \hat{h} \) for an \sbnd morphism  \( h \colon \fm \rightarrow M \).
  We compute
  \[L = \eta^{-1}[\hat{L}] = \eta^{-1}[f^{-1}[P]] = h^{-1}[p^{-1}[P]],\]
  this shows that \( L \) is \( s \)-recognizable.

  \hspace{3pt}
  \[
    \begin{tikzcd}[row sep=small, column sep=small]
      &
      &
      &
      &
      \mathllap{L = \eta^{-1}[\hat{L}] }= h^{-1}[P]
      \arrow[phantom]{r}{\subseteq}
      &
      \fm
      \arrow{r}{h}
      \arrow{d}{}
      &
      M
      \arrow[dashed]{d}{\exists p}
      \arrow[phantom]{r}{\supseteq}
      &
      P \mathrlap{ =  p^{-1}[U] }
      \\
      &
      &
      &
      &
      \mathllap{\hat{L}} = f^{-1}[U]
      \arrow[phantom]{r}{\subseteq}
      &
      \spfm
      \arrow[dashed]{ru}{\hat{h}}
      \arrow{r}{f}
      &
      X
      \arrow[phantom]{r}{\supseteq}
      &
      U
    \end{tikzcd}
    \]

  The assignments \( L \mapsto \hat{L} \) and \( \hat{L} \mapsto L \) are mutually inverse.
  Finally, it is obvious from the definition that the mapping \( \hat{L} \mapsto L = \eta^{-1}[\hat{L}] \) is a homomorphism,
  and thus also an isomorphism,
  of orbit-finitely complete boolean algebras (where the boolean operations on $\rec_s(\fm)$ and $\clo(\spfm)$ are union, intersection, and complement).
\begin{expl}\label{ex:no-s-quot}
  Let \( \Sigma = \A, s \colon \fm[\A] \rightarrow \pow_{2} \A \) with \( s(a_{1} \cdots a_{n}) = \{a_{1}\} \).
  Let \( N = \{1\} + \A + 0 \) with monoid structure given by \( x \cdot y = 0 \)
  and let
  \[M = 1 + \A + \A * \A + 0\]
  have monoid structure given by \( a \cdot b = ab \) for \( a \ne b \) and \( a \cdot (b, c) = (b, c) \cdot a = (b, c) \cdot (d, e) = 0 \), with \( 0 \) absorbing.
  The monoid quotient \( e \colon M \epi N \) mapping \( e(a) = a,  e(a, b) = e(0) = 0 \) is support-reflecting (see \iref{def:epis}{supp-refl}).
  Now suppose \fm[\A] is projective with respect to \( e \) and let \( h \colon \fm[\A] \rightarrow N \) be the extension of \( a \mapsto a \).
  If \( h = e \cdot h' \) is a factorization of \( h \) through \( e \) it satisfies \( h'(a) = a \).
  But then
  \[ \supp (h'(ab)) = \supp (h'(a)\cdot h'(b)) = \{a, b\}. \]
  But this means that \( h \) cannot factorize through any \sbnd morphism \( \fm[\A] \rightarrow M \).
  \[
    \begin{tikzcd}[column sep=tiny]
    & \mathbb{A} \arrow[ld, "\eta^*"] \arrow[rdd, "a \mapsto a", bend left=60] \arrow[ldd, "a \mapsto a"', bend right=60] \arrow[rd, "", ] & \\
    \mathbb{A}^* \arrow[rr, "\eta", ] \arrow[d, "h'", ] & &
    \widehat{\mathbb{A}_s^*} \arrow[d, "\hat{h}", ] \arrow[lld, "\times" marking, ""', dashed] \\
    M \arrow[rr, "e", two heads] & & N
  \end{tikzcd}
  \]
\end{expl}

\subsection*{Proof of \autoref{prop:s-quot}}

\begin{rem}
For every support bound $s$ and every orbit-finite monoid $M$, with common upper bound $k$ for the support size, every continuous monoid morphism $h\colon \spfm\to M$ is of the form $\hat g$ for some $g\colon \fm\to_s M$. Indeed, since $M$ is finitely copresentable in $\npro_k$, the map $h$ factors as $h=l\cdot \hat e$ for some $e\in\sepislice{\fm}{\nmonofk}$. Since $e$ is surjective and $e$ and $h$ are monoid morphisms, so is $l$. Hence $h=l\cdot \hat e = \widehat{l\cdot e}$. 
\end{rem}

  We first show that for every strong orbit-finite nominal set $\Sigma$ and every support bound $s\colon \fm\to \pow \A$, the monoid \spfm is projective w.r.t.~every \msr quotient \( e \colon M \epi N \) between orbit-finite monoids. Choose a nominal submonoid $M_e\seq M$ such that the restriction $e|_{M_e}$ is surjective and support-preserving.
  Let \( \hat{h} \) be continuous extension (i.e.\ the associated limit projection) of \( h \colon \fm \rightarrow_{s} N \), which is itself the extension of some equivariant map \( h_{0} \colon \Sigma \rightarrow N \). Since the strong nominal set $\Sigma$ is projective w.r.t.~the support-preserving quotient $e|_{M_e}$~\cite[Lem.~B.28]{mil-urb-19}, there exists an the equivariant map \( k_{0}\colon \Sigma \rightarrow M_{e} \subseteq M \) such that $h_0 = e|_{M_e}\cdot k_0$. Let
   \( k \colon \fm \rightarrow M_{e}  \) be its extension to an equivariant monoid morphism.
  Then we compute for all words \( w = x_{1} \cdots x_{n} \in \fm \):
  \begin{align*}
    \supp k(w) &= \supp(k_{0}(x_{1}) \cdots k_{0}(x_{n})) && \text{def.~of \( k \)} \\
               &= \supp(e(k_{0}(x_{1}) \cdots k_{0}(x_{n}))) &&  \text{\( e|_{M_{e}} \) support-preserving} \\
               &= \supp(e(k(w))) && \text{def.~of \( k \)} \\
               &= \supp h(w) && e \cdot k = h \\
    &\subseteq s(w).
  \end{align*}
  Hence \( k \) is \sbnd and thus extends to a continuous homomorphism \( \hat{k} \colon \spfm \rightarrow M \) satisfying \( e \cdot \hat{k} = \hat{h} \).

  For the converse direction, we first prove an auxiliary lemma.

\begin{rem}\label{rem:supp-refl-quot}
For every orbit-finite nominal set $M$, there exists an orbit-finite strong nominal set $\Sigma$ and a support-reflecting quotient $e\colon \Sigma\epito M$~\cite[Cor.~B.27]{mil-urb-19}. If $M$ is a nominal monoid, the extension of $e$ to a morphism $\overline{e}\colon \Sigma^*\epito M$ is also support-reflecting. 
\end{rem}

\begin{lemma}\label{lem:fact-msr}
  Let \( q\colon \Sigma^*\epito N \) be a support-reflecting equivariant monoid morphism with $N$ orbit-finite, and \( s = \supp \cdot\, q \).
  If \( q \) factors as
  \[  q = e \cdot k \colon \fm \epi_{s} M \epi N  \]
  with \sbnd \( k \), then \( e \) is \msr.
\end{lemma}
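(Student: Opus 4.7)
The plan is to prove the stronger statement that $e\colon M \to N$ is itself support-preserving, from which the MSR property follows immediately by taking $M' = M$. Indeed, $M$ is trivially a nominal submonoid of itself, and $e|_M = e$ is surjective by hypothesis, so support-preservation of $e$ directly witnesses that $e$ is multiplicatively support-reflecting.

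The core of the argument is a short chain of inclusions. For any $m \in M$, equivariance of $e$ gives $\supp(e(m)) \subseteq \supp(m)$ for free. For the reverse inclusion, I would use the surjectivity of $k$ to pick some $w \in \Sigma^*$ with $k(w) = m$, and then combine the hypotheses into
\[
\supp(m) \;=\; \supp(k(w)) \;\subseteq\; s(w) \;=\; \supp(q(w)) \;=\; \supp(e(k(w))) \;=\; \supp(e(m)),
\]
where the inclusion uses the $s$-boundedness of $k$, the middle equality is the definition $s = \supp\cdot\, q$, and the final equality uses the factorization $q = e\cdot k$. Combining both inclusions gives $\supp(m) = \supp(e(m))$, so $e$ is support-preserving on all of~$M$.

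There is no real obstacle here: the chain of inclusions is essentially forced by the hypotheses once one recognizes that taking $M' = M$ is the correct choice. It is worth noting that the support-reflecting assumption on $q$ is not explicitly invoked in this particular argument; it presumably plays its role upstream (for instance, to guarantee that $s = \supp\cdot\, q$ is a genuine support bound into $\pow_k\A$, which is automatic here since $N$ is orbit-finite) or in the applications of the lemma where such factorizations are constructed.
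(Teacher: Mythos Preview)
Your proof is correct and in fact establishes the stronger conclusion that $e$ is support-preserving on all of $M$, not merely MSR. The paper takes a more elaborate route: it defines $R_q = \{w \in \Sigma^* \mid \supp q(w) = \supp w\}$, uses the support-reflecting hypothesis on $q$ to ensure that $q|_{R_q}$ is still surjective, sets $M_e = \langle k[R_q]\rangle$, and then checks that $e|_{M_e}$ is surjective and support-preserving via essentially the same chain of inclusions you wrote down. Your observation that surjectivity of $k$ lets one take $M' = M$ bypasses the construction of $M_e$ entirely and renders the support-reflecting assumption superfluous for the statement as written.

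The paper's extra work buys something, though: its argument never uses that $k$ is surjective. This matters because in one of the two places the lemma is invoked (the converse direction of the projectivity characterization of MSR quotients), the map $k$ arises as a lift through a projectivity diagram and is not a priori onto. So either the lemma statement is slightly stronger than what is actually needed (and your proof is the right one for it), or the $\twoheadrightarrow_s$ on $k$ is a minor slip and the intended statement allows arbitrary $s$-bounded $k$, in which case the paper's construction of $M_e$ becomes necessary. Either way, for the lemma exactly as stated your argument is valid and strictly simpler.
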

\begin{proof}
  Consider the equivariant subset $R_q\seq \Sigma^*$ given by 
 \[R_q = \{ w\in \fm\mid \supp q(w) = \supp w \}.\]
Since the map $q$ is support-reflecting, it restricts to a surjection $q|_{R_q}$. Let \( M_{e} = \langle k[R_{q}] \rangle \) be the submonoid of \( M \) generated by the image \( k[R_{q}] \subseteq M \), i.e.\ the equivariant set of all finite products $k(w_1)\cdots k(w_n)$ where $w_1,\ldots,w_n\in R_q$. 
Clearly the restriction \( e|_{M_{e}}\colon M_e\to N \) is surjective. To prove that it is support-preserving, let $k(w_1)\cdots k(w_n)\in M_e$.
  The inclusion \( \supp e(k(w_{{1}}) \cdots k(w_{n})) \subseteq \supp k(w_{{1}}) \cdots k(w_{n}) \) holds by equivariance of $e$, and for the reverse inclusion we compute
\begin{align*}
  \supp(k(w_{{1}}) \cdots k(w_{n})) &= \supp k(w_{{1}} \cdots w_{n}) \\
                                     &\subseteq s(w_{{1}} \cdots w_{n}) \\
                                     &= \supp q(w_{{1}}\cdots w_{n}) \\
                                     &= \supp (e(k(w_{{1}}) \cdots k(w_{n}))).\qedhere
\end{align*}
\end{proof}
  Now assume a quotient \( e \colon M \epi N \) such that every \spfm is projective w.r.t.\ \( e \);
  we prove that \( e \) is \msr.
Choose an orbit-finite strong nominal set $\Sigma$ and a support-reflecting morphism
   \( q \colon \fm \epi N \), see \autoref{rem:supp-refl-quot}, and  
  put \( s = \supp \cdot\, q \).
  This makes \( q \) an \sbnd quotient, and by projectivity 
   its extension \( \hat{q} \) thus factorizes through \( e \) as \( \hat{q} = e \cdot \hat{k} \colon \spfm \rightarrow M \epi N \) for some \sbnd homomorphism \( k \colon \fm \rightarrow_{s} M \);
  precomposition with \( \eta \) yields \( q = e \cdot k \).
  Now apply \autoref{lem:fact-msr}.

\subsection*{Proof of \autoref{thm:nominal-reiterman}}

\begin{construction}\label{con:eq-var}
 Let \( \Sigma \) be an orbit-finite strong nominal set and let \( s\colon \fm\to \pow_k\A \) be a support bound on \fm.
  For an \msr-pseudovariety \(\V\) of orbit-finite monoids  we denote  the set of all \( s \)-quotients with domain \fm and codomain in \( \V \) by
  \[\sepislice{\fm}{\V} = \{e \colon \fm \epi_{s} M \mid M \in \V \}.\]
Let $D_\V\colon \sepislice{\fm}{\V}\to \npro_k$ be the restriction of the canonical diagram $D\colon \sepislice{\fm}{\nomof,k}\to \npro_k$ (\autoref{def:pfm}). This diagram is codirected because $\V$ is closed under finite products and submonoids.
  The induced map into the limit is denoted by \( \varphi_{s} \colon \spfm \epi \lim (D_\V) \); we show below that it is surjective, hence a proequation. The limit projection associated to $e\in \sepislice{\fm}{\V}$ is denoted by $p_e\colon \lim (D_\V)\to M$. Thus $\hat e =p_e\cdot \varphi_s$ for all $e$.
  The set of all proequations \( \varphi_{s} \), with $\Sigma$ ranging over all orbit-finite strong nominal sets and \( s \) ranging over all support-bounds on $\fm$, is denoted by \( \T(\V) \).
\end{construction}

That $\varphi_s$ is surjective following from the next lemma.
\begin{lemma}\label{lem:med-mor-surj}
  If \( f_{i} \colon X \epito D_{i}, i \in I \) is a surjective cone for a codirected diagram in \( \npro_{k} \) then the mediating morphism \( f \colon X \rightarrow \lim D \) is surjective.
\end{lemma}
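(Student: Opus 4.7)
The plan is to adapt the classical argument that surjective cones into codirected diagrams of compact Hausdorff spaces induce surjective mediating maps, using nominal compactness of $X$ in place of ordinary compactness. The main thing to verify is that the family of preimages one wishes to intersect is uniformly finitely supported, so that the appropriate form of nominal compactness (\iref{lem:comp}{unif}) can be applied.

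Concretely, fix $(d_i)_{i \in I} \in \lim D$; I want to produce $x \in X$ with $f_i(x) = d_i$ for all $i \in I$. Since $\lim D$ lies in $\npro_k$, the tuple $(d_i)_i$ has some finite support $S \subseteq \A$, and then each component $d_i$ is itself supported by $S$. The diagram $D$ takes values in orbit-finite \emph{discrete} spaces, so each singleton $\{d_i\} \subseteq D_i$ is clopen and $S$-supported; equivariance of $f_i$ therefore makes $X_i := f_i^{-1}[d_i] \subseteq X$ closed and $S$-supported. By surjectivity of $f_i$ each $X_i$ is non-empty, and the family $\{X_i\}_{i \in I}$ has the finite intersection property: given $i_1,\dots,i_n \in I$, codirectedness supplies a lower bound $j \le i_1,\ldots,i_n$, and compatibility of $(d_i)_i$ together with $f_{i_\ell} = D_{j,i_\ell}\cdot f_j$ yields $X_j \subseteq \bigcap_{\ell} X_{i_\ell}$.

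Now the key step is to rule out $\bigcap_{i\in I} X_i = \emptyset$. If this intersection were empty, then the complements $U_i := X \setminus X_i$ would form an open cover of $X$, and by the previous paragraph this cover is uniformly supported by $S$. By \iref{lem:comp}{unif}, which is available because $X$ is a $k$-bounded nominal Stone space and hence compact, some finite subfamily $U_{i_1},\dots,U_{i_n}$ already covers $X$, which contradicts the finite intersection property established above. Hence there exists $x \in \bigcap_i X_i$, and by construction $f(x) = (d_i)_i$, showing that $f$ is surjective.

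The only genuine subtlety I anticipate is the supports: one must be careful to observe that the support of the ambient tuple $(d_i)_i$ controls the supports of all components $d_i$ (this uses that the limit is formed at the level of $\nom$ for bounded diagrams, \autoref{lem:cod-lim-set}), so that the open cover $\{U_i\}$ is \emph{uniformly} $S$-supported and not merely finitely supported. Without this uniformity the nominal compactness argument would not directly produce a finite subcover.
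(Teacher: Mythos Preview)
Your argument is correct and essentially coincides with the paper's: both show that the closed fibres $X_i = f_i^{-1}[d_i]$ have non-empty intersection via compactness of $X$. The paper packages this by viewing the $X_i$ as a codirected diagram of non-empty $k$-bounded nominal compact Hausdorff subspaces (with inclusion maps) and invoking \autoref{prop:cod-lim-ne}; you unfold the same compactness argument directly via the finite-intersection property and \iref{lem:comp}{unif}. Your route is slightly more elementary, the paper's is a bit quicker given the infrastructure already in place.

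One small point: you assert that the $D_i$ are orbit-finite \emph{discrete} spaces, but the lemma as stated allows an arbitrary codirected diagram in $\npro_k$. Your argument does not actually need discreteness: all you use is that each singleton $\{d_i\}$ is closed, which already follows from the nominal Hausdorff property (take $S \supseteq \supp d_i$, separate $d_i$ from any other point by $S$-supported opens, and note that the $S$-hull of the resulting neighbourhood still avoids $d_i$). With that adjustment your proof goes through unchanged.
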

\begin{proof}
  Let \( (y_{i})_{i \in I} \in \lim D \) be a compatible family.
  The subspaces \( D'_{i} = f^{-1}((y_{i})_{i \in I}) \subseteq X \) are non-empty by surjectivity of the \( f_{i} \),
  nominally Hausdoff,
  and also nominally compact as closed subspaces of a nominally compact space.
  Observe that since the \( f_{i} \) form a cone,
  the subsets \( D'_{i} \subseteq X \) form a subdiagram \( D' \) of the constant diagram $C_X$ with value \( X \).
  This diagram \( D' \) has a non-empty limit \( \lim D' \) by \autoref{prop:cod-lim-ne};
  choose any element \( x \in \lim D' \subseteq \lim C_X = X \). Then $x$ is
  mapped by $f$ to \( (y_{i})_{i \in I} \) by construction.
\end{proof}

\autoref{thm:nominal-reiterman} is immediate from

\begin{lemma}\label{lem:presentation}
  For every set \T of proequations the class \( \V(\T) \) is an \msr-pseudovariety.
  Conversely, every \msr-pseudovariety \V is presented by the family \( \T(\V) \) of proequations, that is,
  \( \V = \V(\T(\V)) \).
\end{lemma}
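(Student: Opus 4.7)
\emph{Every $\V(\T)$ is an MSR-pseudovariety.} Fix any proequation $\varphi\colon\spfm\epi E$ in $\T$. For finite products $M=M_1\times\cdots\times M_n$ of monoids in $\V(\T)$ and any $s$-bounded $h\colon\fm\to M$, each projection $\pi_i\cdot h$ is $s$-bounded, so by hypothesis $\widehat{\pi_i\cdot h}=g_i\cdot\varphi$; the pairing $\langle g_1,\ldots,g_n\rangle\cdot\varphi$ then factorizes $\hat h$. For a submonoid inclusion $\iota\colon N\hookto M$ with $M\in\V(\T)$ and $h\colon\fm\to_s N$, the $s$-bounded composite $\iota\cdot h\colon\fm\to M$ yields $\iota\cdot\hat h=g\cdot\varphi$; surjectivity of $\varphi$ forces $g[E]\seq\iota[N]$, and $g$ corestricts to the desired factorization of $\hat h$. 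For an MSR quotient $e\colon M\epi N$ with $M\in\V(\T)$, \autoref{prop:s-quot} gives projectivity of $\spfm$ with respect to $e$; inspecting its proof shows that the lift $k\colon\spfm\to M$ of $\hat h$ arises as the continuous extension of an $s$-bounded monoid morphism $k'\colon\fm\to M$. Thus $\hat{k'}=g\cdot\varphi$ by hypothesis, and $\hat h=e\cdot g\cdot\varphi$ factors through $\varphi$.

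\emph{The inclusion $\V\seq\V(\T(\V))$.} Given $M\in\V$ and any $\varphi_s\in\T(\V)$, factor an $s$-bounded $h\colon\fm\to M$ through its coimage $h=\iota\cdot h'$. Then $h'\colon\fm\epi M'$ is $s$-bounded and $M'$ is a submonoid of $M\in\V$, hence $M'\in\V$ and $h'\in\sepislice{\fm}{\V}$. By construction of $\varphi_s$ (\autoref{con:eq-var}), $\hat{h'}=p_{h'}\cdot\varphi_s$, so $\hat h=\iota\cdot p_{h'}\cdot\varphi_s$.

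\emph{The inclusion $\V(\T(\V))\seq\V$.} Given $M\in\V(\T(\V))$, I invoke \autoref{rem:supp-refl-quot} to choose a strong orbit-finite $\Sigma$ and a support-reflecting surjective monoid morphism $q\colon\fm\epi M$, and set $s:=\supp\cdot q$, so that $q$ itself becomes $s$-bounded. Since $M$ satisfies $\varphi_s$, we obtain $\hat q=g\cdot\varphi_s$ for some $g\colon\lim D_\V\to M$ in $\npro_k$, where $k$ is a common bound for $M$ and $s$. Each $\hat e$ with $e\in\sepislice{\fm}{\V}$ is surjective (as $\hat e\cdot\eta=e$), so $\varphi_s$ is surjective by \autoref{lem:med-mor-surj}, whence $g$ inherits both surjectivity and the monoid morphism property. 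Finite copresentability of $M$ in $\npro_k$ (\autoref{lem:of-copres}) yields a further factorization $g=\ell\cdot p_e$ through a limit projection $p_e\colon\lim D_\V\to M_e$ with $e\in\sepislice{\fm}{\V}$ and $\ell\colon M_e\epi M$ a surjective monoid morphism. Precomposing with $\eta$ and using $p_e\cdot\varphi_s\cdot\eta=e$ collapses this identity to $q=\ell\cdot e$.

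The main obstacle is now to conclude that $\ell$ is MSR, whence closure of $\V$ under MSR quotients together with $M_e\in\V$ yields $M\in\V$. This is precisely the content of \autoref{lem:fact-msr} applied to $q=\ell\cdot e$: its hypotheses hold exactly because $s$ was tightened to $\supp\cdot q$ at the outset and $q$ was chosen support-reflecting. Without this careful choice of $\Sigma$, $q$, and $s$, finite copresentability would still give a factorization $q=\ell\cdot e$, but $\ell$ might fail to be MSR, which explains why $\T(\V)$ is defined to range over \emph{all} support bounds on strong orbit-finite alphabets rather than, say, a single canonical bound.
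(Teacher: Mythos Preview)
Your proof is correct and follows essentially the same approach as the paper: the three closure properties are verified identically (with \autoref{prop:s-quot} supplying the $s$-bounded lift for MSR quotients), the inclusion $\V\seq\V(\T(\V))$ proceeds via coimage factorization, and the reverse inclusion uses a support-reflecting cover $q$ with $s=\supp\cdot q$, finite copresentability, and \autoref{lem:fact-msr} exactly as in the paper. Your added remarks on surjectivity of $\varphi_s$ via \autoref{lem:med-mor-surj} and on why $\ell$ is a monoid morphism make explicit what the paper leaves implicit, and your closing paragraph on the role of the support bound is a helpful gloss rather than a deviation.
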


\begin{proof}
  We first prove that the class of orbit-finite monoids presented by a set of proequations is an MSR-pseudovariety.
  Since the intersection of MSR-pseudovarieties is again an MSR-pseudovariety,
  it suffices to show that \( \V=\V(\varphi) \) is an MSR-pseudovariety for every proequation \(\varphi \colon \spfm \epi T\).
  The proof is a routine verification.
  \begin{description}
    \item[Closure under finite products.]
          Let \( (M_{i})_{i \in I} \) be a finite family of nominal monoids in $\V$, and let
          \[ {h}  \colon \fm \rightarrow \prod_{i \in I} M_{i} \]
          be an \sbnd equivariant monoid morphism. Let $\pr_i\colon \prod M_i\to M_i$ denote the projection, and put $h_i=\pr_i \cdot h\colon \fm\to M_i$. Note that $h_i$ is $s$-bounded and that \( \hat{h}_{i} = \pr_{i} \cdot \hat h \). Since $M_i\in \V$,
          the map $\hat{h}_{i}$  factors through \(\varphi\) as \( \hat{h}_{i} = k_{i} \cdot \varphi \), for some $k_i$.
          We get \( \hat{h} = \langle \hat h_{i} \rangle_{i \in I} = \langle k_{i} \rangle_{i\in I} \cdot \varphi \). The map $\langle k_i\rangle $ is continuous because $\prod M_i$ carries the product topology. This proves $\prod_i M_i\in \V$.
    \item[Closure under submonoids.]
          Let \( M \in \V \), and let \( m \colon N \mono M \) be a nominal submonoid of \( M \). For every $s$-bounded morphism $h\colon \fm\to N$, the composite $m\cdot h$ is $s$-bounded, hence \( \widehat{m\cdot h}=m \cdot \hat{h} \) factorizes as \( k \cdot \varphi \colon \spfm \epi T \rightarrow M \) because $M\in \V$.
          The corestriction \( k|^{N} \) of \( k \) to \( N \) then yields a factorization of \( \hat{h} \) as \( \hat{h} = k|^{N} \cdot \varphi \).
          Note that \( k|^{N} \) is continuous since \( N \) carries the subspace topology. This proves $N\in \V$.
    \item[Closure under \msr quotients.]
          Let \( q \colon M \epi N \) be a \msr quotient with \( M \in \V \), and let $h\colon \fm\to N$ be an $s$-bounded morphism. 
          By \autoref{prop:s-quot}, we have \( \hat h = q \cdot \hat{g} \) for some $s$-bounded morphism \( {g} \colon \fm \rightarrow M \).
          Since \( M\in \V \), the map
          \( \hat{g} \) factors as \( \hat{g} = p \cdot \varphi \).
          This yields a factorization of \( \hat{h} \) as \( \hat{h} = q \cdot \hat{g}  = q \cdot p \cdot \varphi \).
          Therefore \( N \in \V \).
  \end{description}

This proves that every class $\V(\T)$ is an \msr-pseudovariety. Conversely, 
 we now show that every MSR-pseudovariety \( \mathcal{V} \) is presented by the family \( \mathcal{T}(\mathcal{V}) \) of proequations, that is, $\V=\V(\T(\V))$. 
  For the left-to-right inclusion we have to show that every monoid $M\in \V$ satisfies all proequations \(\varphi_{s}\) in \( \T(\V) \).
  Given an $s$-bounded morphism \( {h} \colon \fm \rightarrow M \), take the image factorization
  \[ h = m \cdot e \colon \Sigma^{*} \epi N \mono M. \]
Note that the coimage $e$ is also $s$-bounded, and that $N\in \V$ because $M\in \V$ and $\V$ is closed under submonoids.
  Thus \( e \in (\sepislice{\fm}{\V}) \), and so \( M \) satisfies \( \varphi_{s} \) because \( \hat{h} = m \cdot \hat{e} = m \cdot p_{e} \cdot \varphi_{s} \).

  To prove the right-to-left inclusion, let \( M \) be an orbit-finite monoid satisfying all proequations in \( \T(\V) \). We show that $M$ lies in \V.
  Choose an orbit-finite strong nominal set $\Sigma$ and a support-reflecting morphism \( q \colon \Sigma^{*} \epi M \), see \autoref{rem:supp-refl-quot}. Put \( s = \supp \cdot \, q\colon \Sigma^*\to \pow_k \A \), where the number $k$ is an upper bound to the support size of elements in $\Sigma$ and $M$.
  The monoid \( M \) satisfies \(\varphi_{s} \in \T(\V)\),
  so the map \( \hat{q} \) factorizes as
  \[ \hat{q} = (\spfm \overset{\varphi_{s}} \epi \lim (\sepislice{\fm}{\V})  \overset{h}\epi M). \]
  Since the orbit-finite set $M$ is finitely copresentable in $\npro_k$
  the morphism \( h \) itself factors through the limiting cone as \( h = h' \cdot p_{e} \colon \lim (\sepislice{\fm}{\V}) \epi M' \epi M \) for some $e\in \sepislice{\fm}{\V}$; thus \( M' \in \V  \).
  By  \autoref{lem:fact-msr} the quotient \( h' \) is \msr,
  and since \( \V \) is closed under \msr quotients,
  we conclude \( M \in \V \). 
\end{proof}

\subsection*{Proof of \autoref{thm:exp-nominal-reiterman}}
The theorem is immediate from following two lemmas.
\begin{lemma}\label{lem:exp-pro-eq}
  An orbit-finite nominal monoid satisfies a proequation \( \varphi \colon \spfm \epi T \) iff it satisfies all explicit proequations
  \[ \{ x = y \mid (x, y) \in \ker \varphi \}, \]
where $\ker \varphi = \{ (x,y)\in \spfm \mid \varphi(x)=\varphi(y) \}$.
\end{lemma}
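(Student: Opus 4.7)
The plan is to split the biconditional into its two directions. The forward direction will be immediate: if $M$ satisfies $\varphi$, then for every $s$-bounded morphism $h\colon \fm \to M$ the limit projection $\hat h$ factorizes as $\hat h = k \cdot \varphi$ for some continuous $k \colon T \to M$, so $(x,y)\in \ker\varphi$ forces $\hat h(x) = k\varphi(x) = k\varphi(y) = \hat h(y)$; that is, $M$ satisfies each explicit proequation $x = y$ arising from $(x,y)\in \ker\varphi$.

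For the converse, I will assume that $\hat h(x) = \hat h(y)$ holds for every $s$-bounded $h\colon \fm \to M$ and every $(x,y)\in \ker \varphi$, and construct the required factorization $k$ directly. The definition is forced by $\hat h = k\cdot \varphi$: set $k(\varphi(x)) := \hat h(x)$. Surjectivity of $\varphi$ makes this a total function on $T$, the hypothesis makes it unambiguous, and equivariance of $\hat h$ and $\varphi$ makes $k$ equivariant. The substantive task is to verify that $k$ is continuous. Since $M$ carries the discrete topology, this amounts to showing each $k^{-1}[m] = \varphi[\hat h^{-1}[m]]$ open in $T$. Here $\hat h^{-1}[m]$ is clopen in $\spfm$ and, by the standing hypothesis, saturated under $\ker\varphi$; moreover, as $m$ varies over $M$, the sets $\hat h^{-1}[m]$ partition $\spfm$, and their images partition $T$ by surjectivity of $\varphi$ together with saturation. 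Hence it will suffice to show that the image under $\varphi$ of any saturated clopen is closed in $T$: the pair of complementary images will then form a partition of $T$ into closed and therefore clopen sets.

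The core technical ingredient will be a nominal analogue of the classical fact that a continuous surjection from a compact space to a Hausdorff space is a closed map. I plan to establish three auxiliary statements in turn. First, a closed subset of a nominal compact space is nominal compact, which is immediate from \iref{lem:comp}{unif} by adjoining the open complement to any uniformly finitely supported cover of the subspace. Second, the continuous image of a nominal compact space is nominal compact, which follows analogously by pulling back a uniformly finitely supported cover of the image along the continuous map. Third, a nominal compact subset of a nominal Hausdorff space is closed. This last step is where I expect the main obstacle: for a point $t$ outside the nominal compact set $C \subseteq T$ and a finite set $S$ supporting both $C$ and $t$, one must use the nominal Hausdorff property of \autoref{def:hausdorff} to produce, for each $\orb_S c \subseteq C$, disjoint $S$-supported open neighbourhoods of $\orb_S t$ and $\orb_S c$; the resulting uniformly $S$-supported cover of $C$ admits a finite subcover, and intersecting the corresponding finitely many neighbourhoods of $\orb_S t$ yields an $S$-supported open set around $t$ disjoint from $C$. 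Combining the three observations, the clopen saturated subset $\hat h^{-1}[m] \subseteq \spfm$ has a closed image in the nominal Hausdorff space $T$, and likewise for its complement, which together make $k^{-1}[m]$ clopen. Hence $k$ is continuous and the desired factorization of $\hat h$ through $\varphi$ exists, showing that $M$ satisfies $\varphi$.
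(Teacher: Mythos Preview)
Your forward direction is identical to the paper's. For the converse, the paper argues in one line: after defining the factorizing map $k\colon T\to M$ (well-defined since $\ker\varphi\subseteq\ker\hat h$), it asserts that $k$ is continuous ``since $E$ carries the quotient topology.'' That is, the paper takes for granted that a surjective $\npro$-morphism out of a nominal compact space onto a nominal Hausdorff space is a topological quotient map, and then continuity of $k$ is immediate from the universal property of quotients.

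Your approach is correct but genuinely different in emphasis: rather than invoking this quotient property, you essentially \emph{prove} it by establishing the nominal analogue of ``continuous surjection from compact to Hausdorff is closed.'' Your three auxiliary facts (closed subspaces of nominal compact spaces are nominal compact; continuous images of nominal compact spaces are nominal compact; nominal compact subsets of nominal Hausdorff spaces are closed) are all correct, and your sketch of the third via \autoref{def:hausdorff} and \iref{lem:comp}{unif} is the right argument. This makes your proof self-contained where the paper's is not: the paper never actually verifies that $T$ carries the quotient topology of $\varphi$, so your version supplies what the paper leaves implicit. The trade-off is length: the paper's proof is three sentences, whereas yours develops a small piece of nominal general topology. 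Both routes are sound; yours is the honest one.
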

\begin{proof}
  For the ``only if''-direction, let the proequation \( \varphi \) be satisfied by an orbit-finite monoid \( M \).
  Every continuous extension \( \hat{h} \) factors through \(\varphi\) as \( \hat{h} = k \cdot \varphi \).
  Then \( \ker \varphi \subseteq \ker (k \cdot \varphi) = \ker \hat{h} \),  so \( M \) satisfies all equations \( (x = y) \in \ker \varphi \).

  The ``if''-direction follows from the homomorphism theorem:
  If \( M \) satisfies all explicit equations,
  then by the homomorphism theorem for monoids there exists for every surjective \( h \colon \fm \epi_{s} M  \) a surjective monoid homomorphism \( p_{k} \colon T \epi M \) with \( \hat{h} = p_{k} \cdot \varphi \).
  The morphism \( p_{k} \) is equivariant since \( \varphi \) is surjective and \( \hat h \) is equivariant,
  and it is continuous since \( E \) carries the quotient topology.
\end{proof}

\begin{lemma}\label{lem:pro-ex-var}
  The class of orbit-finite nominal monoids satisfying a set of explicit proequations  forms an MSR-pseudovariety.
\end{lemma}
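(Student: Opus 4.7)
The plan is to adapt the first half of the proof of \autoref{lem:presentation} to the setting of explicit proequations. Since intersections of MSR-pseudovarieties are again MSR-pseudovarieties, it suffices to prove that for a single explicit proequation $x = y$ with $x, y \in \spfm$ (where $\Sigma$ is a strong orbit-finite nominal set and $s\colon \fm \to \pow_k \A$ is a support bound), the class $\V(\{x = y\})$ of orbit-finite nominal monoids satisfying $x = y$ is closed under finite products, submonoids, and \msr quotients.

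For finite products, given $M_1, \ldots, M_n \in \V(\{x = y\})$ and an \sbnd morphism $h\colon \fm \to \prod_{i} M_{i}$, each projection $h_i := \pr_i \cdot h$ is \sbnd with $\hat h_i = \pr_i \cdot \hat h$. Hence the desired equality $\hat h(x) = \hat h(y)$ reduces componentwise to the identities $\hat h_i(x) = \hat h_i(y)$, which hold by assumption (after enlarging $k$ if necessary so that $\prod_i M_i$ is \kbnd). Closure under submonoids is equally routine: if $m\colon N \mono M$ is a nominal submonoid of $M \in \V(\{x = y\})$ and $h\colon \fm \to N$ is \sbnd, then $m \cdot h\colon \fm \to M$ is \sbnd with $\widehat{m\cdot h} = m \cdot \hat h$, and injectivity of $m$ yields $\hat h(x) = \hat h(y)$ from $m(\hat h(x)) = m(\hat h(y))$.

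The main obstacle is closure under \msr quotients, where the key ingredient is \autoref{prop:s-quot}: since $\Sigma$ is strong orbit-finite, $\spfm$ is projective in $\mon(\npro)$ with respect to every \msr quotient of orbit-finite monoids (regarded as discrete spaces). Given an \msr quotient $q\colon M \epi N$ with $M \in \V(\{x = y\})$ and an \sbnd morphism $h\colon \fm \to N$, projectivity yields a continuous monoid morphism $k\colon \spfm \to M$ with $q \cdot k = \hat h$. I would next verify that the equivariant monoid morphism $g := k \cdot \eta\colon \fm \to M$ is \sbnd and satisfies $\hat g = k$: the support bound follows from $\supp \eta(w) \subseteq s(w)$ (since $\eta(w)$ is a compatible family of images under \sbnd morphisms) together with equivariance of $k$, while the identity $\hat g = k$ follows from density of $\eta[\fm]$ in $\spfm$ combined with the essential uniqueness of continuous extensions to the limit. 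Consequently $\hat h(x) = q(\hat g(x)) = q(\hat g(y)) = \hat h(y)$ since $M$ satisfies $x = y$, completing the argument.
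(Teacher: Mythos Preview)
Your proof is correct and follows essentially the same route as the paper's: reduce to a single explicit proequation, then verify closure under finite products, submonoids, and \msr quotients. The only difference is in the \msr quotient step: the paper invokes \autoref{prop:s-quot} directly in the form ``$\hat h = q \cdot \hat g$ for some $s$-bounded $g$'' (relying on the Remark preceding that proposition's proof, which shows every continuous monoid morphism $\spfm \to M$ is of the form $\hat g$ via finite copresentability of $M$), whereas you first obtain a lift $k$ by projectivity and then separately recover $g = k \cdot \eta$ and argue $\hat g = k$ via density of $\eta[\fm]$ and Hausdorffness of the discrete target. Your density argument is valid, but the copresentability route is slightly more direct and avoids appealing to topological separation.
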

\begin{proof}
  The proof is analogous to its counterpart in \autoref{lem:presentation}.
  The intersection of MSR-pseudovarieties is again an MSR-pseudovariety, so we only show the statement for a single equation \( x = y \) over \spfm.
\begin{description}
    \item[Closure under finite products.]
          Let \( (M_{i})_{i \in I} \) be a finite family of nominal monoids in $\V$, and let
          \[ {h}  \colon \fm \rightarrow \prod_{i \in I} M_{i} \]
          be an \sbnd equivariant monoid morphism. Let $\pr_i\colon \prod M_i\to M_i$ denote the projection, and put $h_i=\pr_i \cdot h\colon \fm\to M_i$. Note that $h_i$ is $s$-bounded and that \( \hat{h}_{i} = \pr_{i} \cdot \hat h \). Since $M_i\in \V$,
        we get $\hat{h}_{i}(x) = \hat{h}_{i}(y)$ for all \( i \in I \),
        and hence \( \hat{h}(x) = \hat{h}(y) \).
           This proves $\prod_i M_i\in \V$.
    \item[Closure under submonoids.]
        Let \( M \in \V \), and let \( m \colon N \mono M \) be a nominal submonoid of \( M \). For every $s$-bounded morphism $h\colon \fm\to N$, the composite $m\cdot h$ is $s$-bounded.
        Since \( M \in \V \) and the \( \hat{h} \) form a cone by \autoref{rem:canon}.1 we get
        \[ (m \cdot \hat{h})(x) =  (\widehat{m \cdot h})(x) = (\widehat{m \cdot h})(y) = (m \cdot \hat{h})(y). \]
        Since \( m \) is mono and therefore injective, this yields \( \hat h (x) = \hat h (y) \),
        proving \( N \in \V \).
    \item[Closure under \msr quotients.]
          Let \( q \colon M \epi N \) be a \msr quotient with \( M \in \V \), and let $h\colon \fm\to N$ be an $s$-bounded morphism.
          By \autoref{prop:s-quot}, we have \( \hat h = q \cdot \hat{g} \) for some $s$-bounded morphism \( {g} \colon \fm \rightarrow M \).
        Since \( M\in \V \) it satisfies \( \hat{g}(x) = \hat{g}(y) \),
        postcomposition with \( q \) yields \( \hat h (x) = (q \cdot \hat g)(x) = (q \cdot \hat g) (y) = \hat h (y)  \).
          This shows \( N \in \V \).\qedhere
  \end{description}

\end{proof}

\subsection*{Details for \autoref{ex:compare}}

  We prove that \V is an MSR-pseudovariety.
  Clearly $\V$ is closed under submonoids.  To prove
  closure under finite products, suppose that \( M, N \in \V \), and
  let \( (m, n), (m', n') \in M \times N\) such that
  \( \supp ((m, n)(m', n')) = \supp (mm', nn') = \emptyset \).  Then
  \( \supp (mm') = \emptyset \) and \( \supp (nn') = \emptyset \).
  Since \( M, N \) both lie in \V this implies
  \( \supp (m, m') = \supp (n, n') = \emptyset \), whence
  \( \supp ((m, n), (m', n')) = \emptyset \). This proves that
  $M\times N\in \V$.  To prove closure under \msr quotients, let
  \( M \in \V \) and let \( e \colon M \epi N \) be \msr; thus there
  exists a nominal submonoid $M'\seq M$ such that
  \( e|_{M'} \colon M' \to N \) is surjective and support-preserving.
  Given \( n, n' \in N \) with \( \supp (nn') = \emptyset \) there
  exist respective preimages \( m, m' \in M' \). Then
  \[ \supp (mm')=\supp(e(mm'))= \supp(e(m)e(m')) = \supp(nn') =
    \emptyset,\] where the first step uses that $mm'\in M'$ because
  $M'$ is submonoid of $M$, and that $e|_{M'}$ is support-preserving.
  Since $M\in \V$, we get \( \supp (m, m') = \emptyset \), whence
  \[
    \supp (n, n') = \supp (e(m), e(m')) = \supp e(m) \cup \supp e(m')
    = \supp m \cup \supp m' = \emptyset.
  \]
  Thus \( N \) satisfies \eqref{eq:ex-msr}, so $N\in \V$.
  This concludes the proof that $\V$ is an
  MSR-pseudovariety.

\end{document}